\DeclareMathOperator{\argmin}{argmin}  
\DeclareMathOperator{\argmax}{argmax}
\newtheorem{thm}{Theorem}
\newtheorem*{thm*}{Theorem}
\newtheorem{corollary}{Corollary}
\newtheorem{lemma}{Lemma}
\theoremstyle{definition}
\newtheorem{assumption}{Assumption}
\newtheorem{remark}{Remark}
\definecolor{darkred}{rgb}{0.7,0,0}
\definecolor{teal}{rgb}{0.3,0.8,0.8}
\definecolor{forestgreen}{rgb}{0.13, 0.55, 0.13}
\newcommand{\prob}[0]{\mathbb{P}}
\newcommand{\E}[0]{\mathbb{E}}
\newcommand{\F}[0]{\mathcal{F}}
\newcommand{\B}[0]{\mathcal{B}}
\newcommand{\C}[0]{\mathcal{C}}
\newcommand{\G}[0]{\mathcal{G}}
\newcommand{\R}[0]{\mathcal{R}}
\newcommand{\N}[0]{\mathcal{N}}
\newcommand{\Real}[0]{\mathbb{R}}
\newcommand{\gS}{\mathcal{S}}
\newcommand{\gA}{\mathcal{A}}
\newcommand{\Sph}[0]{\mathbb{S}}
\newcommand{\ind}[0]{\mathds{1}}
\newcommand\numberthis{\addtocounter{equation}{1}\tag{\theequation}}
\newcommand{\HBM}[0]{\hat{\textbf{M}}}
\newcommand{\HBD}[0]{\hat{\textbf{D}}}
\DeclareMathOperator{\dist}{dist}
\DeclareMathOperator{\simil}{sim}
\DeclareMathOperator{\mathspan}{span}
\DeclareMathOperator{\Mix}{Mix}
\DeclareMathOperator{\rowspan}{rowspan}
\DeclareMathOperator{\Freq}{Freq}
\title{Learning Mixtures of Markov Chains and MDPs}
\author[1]{Chinmaya Kausik}
\author[2]{Kevin Tan}
\author[2]{Ambuj Tewari}
\affil[1]{Department of Mathematics, University of Michigan, Ann Arbor, MI}
\affil[2]{Department of Statistics, University of Michigan, Ann Arbor, MI}
\date{}                     %% if you don't need date to appear
\begin{document}

\maketitle

\begin{abstract}
We present an algorithm for learning mixtures of Markov chains and Markov decision processes (MDPs) from short unlabeled trajectories. Specifically, our method handles mixtures of Markov chains with optional control input by going through a multi-step process, involving (1) a subspace estimation step, (2) spectral clustering of trajectories using "pairwise distance estimators," along with refinement using the EM algorithm, (3) a model estimation step, and (4) a classification step for predicting labels of new trajectories. We provide end-to-end performance guarantees, where we only explicitly require the length of trajectories to be linear in the number of states and the number of trajectories to be linear in a mixing time parameter. Experimental results support these guarantees, where we attain 96.6\% average accuracy on a mixture of two MDPs in gridworld, outperforming the EM algorithm with random initialization (73.2\% average accuracy).
\end{abstract}

%\kevin{Why didn't chen+poor handle control input in the same way we do? Free paper idea?}

%TODO:
%\begin{itemize}
%    \item Theoretical guarantees for model estimation and classification
%    \item Clean up the clustering and subspace guarantees to get a rough O tilde
%    \item Experiments for occupancy measure + transition structure together
%    \item Markov Chain experiments
%    \item Write the intro and related work sections using what we fleshed out in the comments inside the sections
%    \item Describe the algorithms in a way similar to Chen and Poor
%    \item Produce figures, describe the experiments
%    \item Flesh out and write the discussion section
%    \item Background and problem setup
%    \item Experiments for other confounded gridworld situations with either fewer sink states or identical sink states across confounders to make it more important to use the transition structure. \cknote{I can try to hard code such MDPs and policies}
%\end{itemize}

\section{Introduction}\label{sec:introduction}

Efficiently clustering a mixture of time series data, especially with access to only short trajectories, is a problem that pervades sequential decision making and prediction (\citet{timeseriesclusteringsurvey2005}, \citet{timeseriescoresetsneurips2021}, \citet{maharajtimeseries2000}). This is motivated by various real-world problems, ranging through psychology (\citet{psychVAR2016}), economics (\citet{mcculloch1994econtimeseries}), automobile sensing (\citet{toeplitztimeseries2017}), biology (\citet{wonglibiology2000}), neuroscience (\citet{albert1991seizure}), to name a few. One natural and important time series model is that of a mixture of $K$ MDPs, which includes the case of a mixture of $K$ Markov chains. We want to cluster from a set of short trajectories where (1) one does not know which MDP or Markov chain any trajectory comes from and (2) one does not know the transition structures of any of the $K$ MDPs or Markov chains. Previous literature like \citet{mannor2021latent} and \citet{gupta2016mix} has stated and underlined the importance of this problem, but so far, the literature on methods to solve it with theoretical guarantees and empirical results has been sparse. 

Broadly, there are three threads of literature on problems related to ours. Within reinforcement learning literature, there has been a sustained interest in frameworks very similar to mixtures of MDPs -- latent MDPs (\citet{mannor2021latent}), multi-task RL (\citet{multitask2013}), hidden model MDPs (\citet{hmMDP2021}), to name a few. However, most effort in this thread has been towards regret minimization in the online setting, where the agent interacts with an MDP from a set of unknown MDPs. The framework of latent MDPs in \citet{mannor2021latent} is equivalent to adding reward information to ours. They have shown that one can only learn latent MDPs online with number of episodes required polynomial in states and actions to the power of trajectory length (under a reachability assumption similar to our mixing time assumption). On the other hand, our method learns latent MDPs offline with number of episodes needed only linear in the number of states (in no small part due to the subspace estimation step we make). 

The other thread of literature deals with using a "subspace estimation" idea to efficiently cluster mixture models, from which we gain inspiration for our algorithm. \citet{vempala2004spectral} first introduce the idea of using subspace estimation and clustering steps, with application to learning mixtures of Gaussians. \citet{kakade2020meta} adapt these ideas to the setting of meta-learning for mixed linear regression, adding a classification step. \citet{poor2022mixdyn} bring these ideas to the time-series setting to learn mixtures of linear dynamical systems. They leave open the problems of (1) adapting the method to handle control inputs (mentioning mixtures of MDPs as an important example) and (2) handling other time series models (like autoregressive models and Markov chains), and state that the former is of great importance. There are many technical and algorithmic subtleties in adapting the ideas developed so far to MDPs and Markov Chains. The most obvious one comes from the following observation: in linear dynamical systems, the deviation from the predicted next-state value under the linear model occurs with additive i.i.d.\ noise. In MDPs and Markov chains, we are \textit{sampling} from the next-state probability simplex at each timestep, and this cannot be cast as a deterministic function of the current state with additive i.i.d.\ noise.

%\cknote{Address lower bound, mention reachability of states assumption, discuss specifics with Ambuj, tread carefully.} 

\citet{gupta2016mix} also provide a method for learning a mixture of Markov chains using only 3-trails, and compare its performance to the EM algorithm. While the requirement on trajectory length is as lax as can be, their method needs to estimate the distribution of 3-trails using all available data, incurring an estimation error in estimating $S^3A^3$ parameters, while providing no finite-sample theoretical guarantees. If the method can be shown to enjoy finite sample guarantees, the need to estimate $S^3A^3$ parameters indicates that the guarantees will scale poorly with $S$ and $A$. 

The problem that we aim to solve is the following.

\textit{Is there a method with finite-sample guarantees that can learn both mixtures of Markov chains and MDPs offline, with only data on trajectories and the number of elements in the mixture $K$?}

\subsection{Summary of Contributions}

We provide such a method, with trajectory length requirements free from an $S,A$ dependence. The method performs (1) subspace estimation, (2) spectral clustering, an optional step of using clusters to initialize the EM algorithm, (3) estimating models, and finally (4) classifying future trajectories.

\begin{thm*}[Informal]
Ignoring logarithmic terms, we can recover all labels exactly with $K^2S$ trajectories of length $K^{3/2} t_{mix}$, up to logarithmic terms and instance-dependent constants characterizing the models but not explicitly dependent on $S, A, t_{mix}$ or $K$.
\end{thm*}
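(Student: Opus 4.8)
The plan is to analyze the four algorithmic stages in sequence, carrying an error bound through each and showing that the guarantee of stage $i$ supplies exactly the input needed by stage $i+1$. I would begin by attaching to each trajectory $\tau$ a per-trajectory statistic $\hat{x}_\tau \in \Real^{d}$ --- for Markov chains the (normalized) vector of empirical transition counts, with $d \approx S^2$, and for MDPs the empirical state-action-next-state frequencies, with $d \approx S^2 A$. The central estimate, used everywhere downstream, is a concentration bound of the form $\|\hat{x}_\tau - x_{k(\tau)}\| \lesssim \sqrt{t_{mix}/T}$ (ignoring logs), where $x_{k(\tau)}$ is the true transition vector of the model generating $\tau$ and $T$ is the trajectory length. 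Because the next state is \emph{sampled} from a simplex rather than produced by a deterministic map plus additive i.i.d.\ noise, this bound cannot be read off from standard i.i.d.\ tools; instead I would express the centered empirical counts as a martingale and apply a Markov-chain concentration inequality, using $T/t_{mix}$ as the effective number of independent samples.

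With the per-trajectory statistics in hand, the subspace-estimation step forms the matrix whose rows are the $\hat{x}_\tau$ and extracts its top-$K$ singular subspace $\hat{U}$. I would show $\hat{U}$ is close to $U^\star = \mathrm{span}(x_1,\dots,x_K)$ by writing the empirical second-moment matrix as its population version plus a noise matrix, bounding the noise in operator norm via matrix Bernstein --- this is where $M = \tilde{O}(K^2 S)$ trajectories enter, supplying enough samples to make the rank-$K$ signal dominate the fluctuations of the $d$-dimensional empirical moment --- and then invoking a Davis--Kahan / Wedin $\sin\Theta$ bound. The relevant signal strength is an instance-dependent quantity (the smallest nonzero singular value of the population moment, governed by model separation and mixing weights), which is exactly why it surfaces in the final bound as a constant free of explicit $S, A, t_{mix}, K$ dependence.

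Projecting each $\hat{x}_\tau$ onto $\hat{U}$ collapses the ambient dimension from $d$ to $K$, and this variance reduction is what makes clustering from short trajectories possible. For the clustering step I would analyze the ``pairwise distance estimators'': an estimator $\hat{\delta}(\tau,\tau')$ of $\|x_{k(\tau)} - x_{k(\tau')}\|^2$ that is debiased so as to cancel the self-variance term contributed by each trajectory's own sampling noise (the analogue, and the technically heavier cousin, of the corresponding construction for linear dynamical systems). I would prove that $\hat{\delta}$ concentrates around $0$ for same-model pairs and around the squared separation $\Delta^2$ for different-model pairs; combined with a minimum-separation assumption $\Delta \gtrsim \mathrm{const}$, a union bound over all pairs (the source of the logarithmic factors) then yields a similarity graph whose spectral-clustering partition matches the true labels exactly, after which the optional EM refinement only tightens the estimates. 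Balancing the variance of $\hat{\delta}$ against $\Delta^2$ is precisely what forces the length requirement $T = \tilde{O}(K^{3/2} t_{mix})$.

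Finally, once the labels are recovered exactly I would pool the trajectories within each cluster and estimate each transition matrix by its pooled empirical frequencies, with error controlled by the per-trajectory concentration bound aggregated over a cluster; classification of a fresh trajectory then follows by assigning it to the nearest estimated model, correct with high probability whenever its statistic concentrates inside a ball of radius $\Delta/2$ around its true model. I expect the main obstacle to be the clustering stage, and within it the construction and analysis of the debiased pairwise distance estimator: removing the diagonal self-noise bias while keeping the estimator's variance small requires a careful decomposition of products of empirical counts from Markov-chain trajectories, and it is precisely the non-additive, simplex-sampling nature of the noise --- absent in the linear-dynamical-system setting --- that makes this the delicate heart of the argument.
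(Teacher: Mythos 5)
Your high-level architecture matches the paper's (subspace estimation, projection to a $K$-dimensional space, debiased pairwise distances, thresholding, pooled model estimation, nearest-model classification), but there is a concrete gap at the very first step that propagates through everything else. You attach to each trajectory a single global statistic $\hat{x}_\tau \in \Real^{d}$ with $d \approx S^2 A$ and claim $\|\hat{x}_\tau - x_{k(\tau)}\| \lesssim \sqrt{t_{mix}/T}$ ignoring logs. No such dimension-free bound holds: a trajectory of length $T$ free of explicit $S,A$ dependence need not even visit most state-action pairs, so the empirical transition vector cannot concentrate around the full transition tensor. Any honest version of your bound forces $T$ to scale with $SA$, which destroys the claimed $T = \tilde{O}(K^{3/2} t_{mix})$. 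The paper avoids this entirely by working per state-action pair: it estimates, for each $(s,a)$ separately, the $S$-dimensional conditional distribution $\prob_k(\cdot \mid s,a)$, restricts attention to the set $\Freq_\beta$ of pairs visited with frequency at least $\beta = \Theta(\alpha)$, and defines the distance as a \emph{maximum} over $(s,a) \in \Freq_\beta$ of per-pair projected inner products. Assumption~\ref{as:model_sep} guarantees a separating pair lies in $\Freq_\beta$, so only frequently visited pairs ever need to concentrate; this is precisely the device that removes the $S,A$ dependence from the trajectory length, and it is absent from your argument.

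Two further issues. First, in your subspace step you take the top-$K$ subspace of the empirical second-moment matrix $\frac{1}{M}\sum_\tau \hat{x}_\tau \hat{x}_\tau^T$ and control the error by matrix Bernstein plus Davis--Kahan; but the \emph{expectation} of that matrix is $\sum_k f_k x_k x_k^T$ plus the self-covariance of each trajectory's own estimation noise, a bias of order $t_{mix}/T$ per coordinate that does not shrink as $M \to \infty$. You correctly identify the need to debias the pairwise distances, but the same issue afflicts the moment matrix; the paper kills it there too, by splitting each trajectory into two segments $\Omega_1,\Omega_2$ separated by a multiple of $t_{mix}$ and using the cross product $\hat{\prob}_{n,1}(\cdot \mid s,a)\hat{\prob}_{n,2}(\cdot \mid s,a)^T$, whose expectation is close to the rank-one signal up to a mixing error controlled by the separation. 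Second, your concentration plan (martingale plus a Markov-chain inequality with $T/t_{mix}$ effective samples) glosses over the fact that the per-trajectory estimator is a ratio of two random quantities -- next-state counts over a random number of visits -- so one must condition on the random visit set; the paper handles this with Bin Yu's blocking technique (comparing the true joint law to a product law over single-step sub-blocks) combined with a conditional Hoeffding inequality. Your route is not unsalvageable, but as stated these three points are where the proof would fail.
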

Other contributions include:
\begin{itemize}
    \item This is the first method, to our knowledge, that can cluster MDPs with finite-sample guarantees where the length of trajectories does not depend explicitly on $S,A$. The length only explicitly depends linearly on the mixing time $t_{mix}$, and the number of trajectories only explicitly depends linearly on $S$.%Learning mixtures of Markov chains is a nontrivial and important problem in its own right, and our method also successfully handles the case with control input (MDPs). We extend \citet{poor2022mixdyn}'s work in time series to the case of Markov chains (and therefore autoregressive models), and introduce the first method in the \citet{vempala2004spectral} line of literature capable of handling control input with our work on MDPs. This opens the door for further advancements along these lines in reinforcement learning (RL) and control, like the linear-quadratic regulator (LQR), MDPs with infinite-state spaces, etc. %While methods for learning mixtures of Markov chains can be trivially applied, doing so requires using transitions between augmented state space of dimension SA x SA, instead of using the natural state-action-nextstate transitions. 
    %\cknote{Shorten to just mention that it's the first practical clustering result on MDPs.}
    \item We are able to provide theoretical guarantees while making no explicit demands on the policies and rewards used to collect the data, only relying on a difference in the transition structures at frequently occurring $(s,a)$ pairs.
    \item \citet{poor2022mixdyn} work under deterministic transitions with i.i.d. additive Gaussian noise, and we need to bring in non-trivial tools to analyse systems like ours, determined by transitions with non-i.i.d. additive noise. Our use of the blocking technique of \citet{bin1994mixing} opens the door for the analysis of such systems.
    %The nontrivial analysis of our method, requiring a result from \citet{bin1994mixing}, opens the door for theoretical analyses of extensions to stochastic time series models \cknote{with more complicated noise than Gaussian, like ours is martingale noise. Mention Chen and Poor's limitation to iid noise, while this opens doors to non iid noise.}, potentially with control input.
    \item Empirical results in our experiments show that our method outperforms, outperforming the EM algorithm by a significant margin (73.2\% for soft EM and 96.6\% for us on gridworld). 
    
    %\kevin{for a fair comparison with Gupta et. al., only use next state transitions}
\end{itemize}

\section{Background and Problem Setup}

%NOTATION
We work in the scenario where we have $K$ unknown models, either $K$ Markov chains or $K$ MDPs, and data of $N_{traj}$ trajectories collected offline. Throughout the rest of the paper, we work with the case of MDPs, as we can think of Markov chains as an MDP where there is only one action ($A = \{*\}$) and rewards are ignored by our algorithm anyway. 

We have a tuple $(\gS, \gA, \{\prob_k\}_{k=1}^K, \{f_k\}_{k=1}^K, p_k)$ describing our mixture. Here, $\gS, \gA$ are the state and action sets respectively. $\prob_k(s' \mid s, a)$ describes the probability of an $s,a,s'$ transition under label $k$. At the start of each trajectory, we draw $k \sim \text{Categorical}(f_1,...,f_K)$, and starting state according to $p_k$, and generate the rest of the trajectory under policies $\pi_k(a \mid s)$. We have stationary distributions on the state-action pairs $d_k(s, a)$ for $\pi_k$ interacting with $\prob_k$. We do not know (1) the parameters $\prob_k, f_k, p_k, \pi_k(\cdot \mid s)$ of each model or the policies, and (2) $k$, i.e., which model each trajectory comes from.

This coincides with the setup in \citet{gupta2016mix} in the case of Markov chains ($|\gA|=1$). It also overlaps with the setup of learning latent MDPs offline, in the case of MDPs. However, one difference is that we make no assumptions about the reward structure -- once trajectories are clustered, we can learn the models, including the rewards. It is also possible to learn the rewards with a term in the distance measure that is alike to the model separation term. However, this would require extra assumptions on reward separation that are not necessary for clustering.

\begin{assumption}[Mixing]\label{as:mixing}
The $K$ Markov chains on $\mathcal{S} \times \mathcal{A}$ induced by the behaviour policies $\pi_k$, each achieve mixing to a stationary distribution $d_{k}(s,a)$ with mixing time $t_{mix, k}$. Define the overall mixing time of the mixture of MDPs to be $t_{mix} := \max_k t_{mix, k}$.
\end{assumption}

\begin{assumption}[Model Separation]\label{as:model_sep}
There exist $\alpha, \Delta$ so that for each pair $k_1, k_2$ of hidden labels, there exists a state action pair $(s,a)$ (possibly depending on $k_1, k_2$) so that $d_{k_1}(s,a), d_{k_2}(s,a) \geq \alpha$ and $\|\prob_{k_1}(\cdot \mid s,a) - \prob_{k_2}(\cdot \mid s,a)\|_2 \geq \Delta$.
\end{assumption}

Assumption~\ref{as:model_sep} is merely saying that for any pair of labels, at least one visible state action pair witnesses a model difference $\Delta$. Call this the separating state-action pair. If no visible pair witnesses a model difference between the labels, then one certainly cannot hope to distinguish them using trajectories.

\begin{remark}\label{rem:no-policies-rewards} \textbf{Why is there no assumption about policies?}
Notice that we make no explicit assumptions about policies. The nature of our algorithm allows us to work with the transition structure directly, and so we only demand that we observe a state action pair that witnesses a difference in transition structures. The policy is implicitly involved in this assumption through the stationary distribution $d_k(s,a)$ it induces, but our results demonstrate that this is the minimal demand we need to make in relation to the policies.
\end{remark}

Additionally, Assumption~\ref{as:mixing}, which establishes the existence of a mixing time, is not a strong assumption (outside of the implicit hope that $t_{mix}$ is small). This is because any irreducible aperiodic finite state space Markov chain mixes to a unique stationary distribution. If the Markov chain is not irreducible, it mixes to a unique distribution determined by the irreducible component of the starting distribution.

The only requirement is thus aperiodicity, which is also technically superficial, as we now clarify. If the induced Markov chains were periodic with period $L$, we would have a finite set of stationary distributions $d_{u, l}(s,a)$ that the chain would cycle through over a single period, indexed by $l = 1 \to L$. One can follow the proofs to verify that the guarantees continue to hold if we modify $\alpha$ in Assumption~\ref{as:model_sep} to be a lower bound for $\min_{i,l} d_{u_i, l}(s,a)$ instead of just $\min_i d_{u_i}(s,a)$.
\section{Algorithm}

\subsection{Setup and Notation}
We have short trajectories of length $T_n$, divided into 4 segments of equal length. We call the second and fourth segment $\Omega_1$ and $\Omega_2$ respectively. We further sub-divide $\Omega_i$ into $G$ blocks, and focus only on the first state-action observation in each sub-block and its transition (discard all other observations). We often refer to these observations as "single-step sub-blocks." See Figure~\ref{fig:blockestim} for an illustration of this. Divide the set of trajectory indices into two sets and call them $\N_{sub}$ and $\N_{clust}$ (for subspace estimation and clustering). Denote their sizes by $N_{sub}$ and $N_{clust}$ respectively. Let $\N_{traj}(s,a)$ be the set of trajectory indices where $(s,a)$ is observed in both $\Omega_1$ and $\Omega_2$. Let $N_{traj}(s,a)$ be the size of this set. Denote by $N(n,i,s,a)$ the number of times $(s,a)$ is recorded in segment $i$ of trajectory $n$, and let $\textbf{N}(n,i,s,a,\cdot)$ be the vector of next-state counts. We denote by $\prob_k(\cdot \mid s,a)$ the vector of next state transition probabilities. We denote by $\Freq_\beta$ the set of all state action pairs whose occurrence frequency in our observations is higher than $\beta$.

We will call the predicted clusters returned by the clustering algorithm $\C_k$. For model estimation and classification, we do not use segments, and merely split the entire trajectory into $G$ blocks, discarding all but the last observation in each block. We call this observation the corresponding single-step sub-block. We denote the total count of $s,a$ observations in trajectory $n$ by $N(n,s,a)$ and that of $s',s,a$ triples by $N(n,s,a,s')$. 

In practice, we choose to not be wasteful and observations are not discarded while computing the transition probability estimates. To clarify, in that case $N(n,i,s,a)$ is just the count of $(s,a)$ in segment $i$ and similarly for $\textbf{N}(n,i,s,a,\cdot), N(n,s,a)$ and $\textbf{N}(n,s,a, \cdot)$. Estimators in both cases, that is both with and without discarding observations, are MLE estimates of the transition probabilities. One of them maximizes the likelihood of just the single-step sub-blocks and the other maximizes the likelihood of the entire segment. We need the latter for good finite-sample guarantees (using mixing). However, the former satisfies asymptotic normality, which is not enough for finite-sample guarantees, but it often makes it a good and less wasteful estimator in practice.
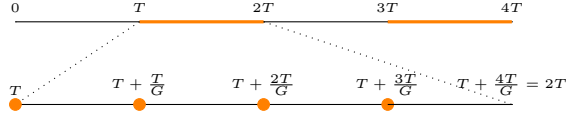
\begin{figure}[h]
    \centering
    \begin{tikzpicture}[scale = 1.1][thick]
        \draw(0,0)--(6,0);
        \foreach \x/\xtext in {0/0,1.5/$T$,3/$2T$,4.5/$3T$,6/$4T$}
          \draw(\x,0)--(\x,0) node[above] {\tiny \xtext};
        %\foreach \x in {3.5,5.5,9.5,11.5}
        %  \draw(\x,-3pt) node[below] {$c_1$};
        %\foreach \x in {4.5,6,10.5,12}
        %\draw[very thick, orange] (\x-0.75,0)--(\x,0);
        \draw[very thick, orange] (1.5,0)--(3,0);
        \draw[very thick, orange] (4.5,0)--(6,0);
        
        \draw(0,-1)--(6,-1);
        \draw[dotted](1.5,0)--(0,-1);
        \draw[dotted](3,0)--(6,-1);
        \foreach \x/\xtext in {0/$T$,1.5/$T+\frac{T}{G}$,
                                3/$T+\frac{2T}{G}$, 4.5/$T+\frac{3T}{G}$, 
                                6/$T+\frac{4T}{G}=2T$}
          \draw(\x,-1)--(\x,-1) node[above] {\tiny \xtext};
        \foreach \x in {0,1.5,3,4.5}
          \filldraw[orange](\x, -1) circle (2pt);
        % \draw(5.25,-1) node[below] {\tiny Length $1$};
        \draw(4.5,-1)--(6,-1);
        
    \end{tikzpicture}
     \caption{Breaking up a trajectory into $4$ segments and $G$ blocks per segment ($G=4$) for the single-step estimator. Observations are only recorded at the orange points.}%
     \label{fig:blockestim}
\end{figure}
\subsection{Overview}

The algorithm amounts to (1) a PCA-like subspace estimation step, (2) spectral clustering of trajectories using "thresholded pairwise distance estimates," along with an optional step of using clusters to initialize the EM algorithm, (3) estimating models (MDP transition probailities) and finally (4) classifying any trajectories not in $\N_{clust}$ (for example, $\N_{sub}$). We provide performance guarantees for each step of the algorithm in section~\ref{sec:analysis}.

\subsection{Subspace Estimation}

The aim of this algorithm is to estimate for each $(s,a)$ pair a matrix $\textbf{V}_{s,a}$ satisfying $\rowspan \textbf{V}_{s,a}^T \approx \mathspan (\prob_k(\cdot | s,a))_{k=1,..,K}$. That is, we want to obtain an orthogonal projector to the subspace spanned by the next-state distributions $\prob_k(\cdot | s,a)$ for $1 \leq k \leq K$. 

Summarizing the algorithm in natural language, we perform subspace estimation via 3 steps. We first estimate the next state distribution given state and action for each trajectory. We then obtain the outer product of the next state distributions thus estimated. These outer product matrices are averaged over trajectories, and the average is used to find the orthogonal projectors $V_{s,a}^T$ to the top K eigenvectors. 

\begin{algorithm}
\centering
	\caption{Subspace Estimation}
    \label{alg:subspace-est}
	\begin{algorithmic}[1]
		\STATE Compute $N_{traj}(s,a)$ for all $s,a$. Initialize the $S\times S$ matrix $\HBM_{s,a} \gets 0$ and the $SA \times SA$ matrix $\HBD \gets 0$.
		\STATE $\hat{\textbf{d}}_{n,1}, \hat{\textbf{d}}_{n,2} \gets \textbf{0} \in \Real^{SA}$ for all $n \in \N_{sub}$
		\FOR{$(i,s,a) \in \{1,2\} \times S \times A$}
		\STATE Compute $\textbf{N}(n,i,s,a, \cdot)$, $N(n,i,s,a),$ $\;\; \forall n \in \N_{sub}$
		\STATE $\hat{\prob}_{n,i}(\cdot \vert s,a) \gets \frac{\textbf{N}(n,i,s,a,\cdot)}{N(n,i,s,a)}\ind_{N(n,i,s,a) \neq 0},$  $\;\; \forall n$
		\STATE $[\hat{\textbf{d}}_{n,i}]_{s,a} \gets \frac{N(n,i,s,a)}{G},$ $\;\; \forall n$
		\STATE $\HBM_{s,a} \gets \HBM_{s,a} + \sum_{n \in \N_{sub}}\frac{\hat{\prob}_{n,1}(\cdot \mid s,a)\hat{\prob}_{n,2}(\cdot \mid s,a)^T}{N_{traj}(s,a)}$
		\ENDFOR
		\STATE $\HBD \gets \HBD + \sum_{n \in \N_{sub}} \frac{1}{N_{sub}}\hat{\textbf{d}}_{n,1}\hat{\textbf{d}}_{n,2}^T$
		\STATE Using SVD, return the orthogonal projectors $(\textbf{V}_{s,a}^T)_{K\times S}$ to the top $K$ eigenspaces of $\HBM_{s,a} + \HBM_{s,a}^T$ for each $(s,a)$ where $N_{traj}(s,a) \neq 0$ (set the others to $0$), along with the orthogonal projector $(\textbf{U}^T)_{K \times SA}$ to the top $K$ eigenspace of $\HBD + \HBD^T$.
	\end{algorithmic}
\end{algorithm}

\begin{remark}
\textbf{Why do we split the trajectories?}
We use two approximately independent segments $\Omega_1$ and $\Omega_2$ time separated by a multiple of the mixing time $t_{mix}$ to estimate the next state distributions. The reduced correlation between the two estimates obtained allows us to give theoretical guarantees for concentration, despite using dependent data within each trajectory $n$ in the estimation of the rank $1$ matrices $(\prob_{k_n}(\cdot |s,a))(\prob_{k_n}(\cdot |s,a))^T$. The key point is that the double estimator $\hat{\prob}_{n,1}(\cdot \mid s,a)\hat{\prob}_{n,2}(\cdot \mid s,a)$ is in expectation very close to this matrix. 

Notice that our estimator $\HBM_{s,a}$ is in expectation then given approximately by $\sum_{k=1}^K f_k (\prob_k(\cdot |s,a))(\prob_k(\cdot |s,a))^T$. The eigenspace of this matrix is clearly $\mathspan (\prob_k(\cdot | s,a))_{k=1,..,K}$. The deviation from the expectation is controlled by the total number of trajectories, while the "approximation error" separating the expectation from the desired matrix is controlled by the separation between $\Omega_1$ and $\Omega_2$.
\end{remark} 
\begin{remark} \textbf{Why is this not PCA?}
This procedure has many linear-algebraic similarities to uncentered PCA on the dataset of (trajectories, next state frequencies), but statistically has a very different target. Crucially, (centered) PCA is concerned with the variance $\E[X^TX]$, while we are interested in a decent estimate of the target $\E[X^T]\E[X]$ above and thus use a double estimator. Our theoretical analysis also has nothing to do with analyses of PCA due to this difference in the statistical target.
\end{remark}

\subsection{Clustering}

Using the subspace estimation algorithm's output, we can embed estimates from trajectories in a low dimensional subspace. For the clustering algorithm, we aim to compute the pairwise distances of these estimates from trajectories in this embedding. A double estimator is used yet again, to reduce the covariance between the two terms in the inner product used to compute such a distance.

This projection is crucial because it reduces the variance of the pairwise distance estimators from a dependence on $SA$ to a dependence on $K$. This is the intuition for how we can shift the onus of good clustering from being heavily dependent on the length of trajectories to being more dependent on the subspace estimate and thus on the number of trajectories.

There are many ways to use such "pairwise distance estimates" for clustering trajectories. In one successful example, we use a test: if the squared distances are below some threshold (details provided later), then we can conclude that they come from the same element of the mixture, and different ones otherwise. This allows us to construct (the adjacency matrix of) a graph with vertices as trajectories, and we can feed the results into a clustering algorithm like spectral clustering. Alternatively, one can use other graph partitioning methods or agglomerative methods on the distance estimates themselves.

\begin{algorithm}
\centering
	\caption{Clustering}
    \label{alg:clustering}
	\begin{algorithmic}[1]
		\STATE Compute the set $\Freq_\beta$ by picking $(s,a)$ pairs with occurrence more than $\beta$.
		\STATE $\textbf{d}_{n,1}, \textbf{d}_{n,2} \gets \textbf{0} \in \Real^{SA}$
		\FOR{$(i,s,a) \in \{1,2\} \times S \times A$}
		\STATE Compute $\textbf{N}(n,i,s,a, \cdot)$, $N(n,i,s,a),$ $\;\; \forall n \in \N_{clust}$
		\STATE $\hat{\prob}_{n,i}(\cdot \vert s,a) \gets \frac{\textbf{N}(n,i,s,a,\cdot)}{N(n,i,s,a)}\ind_{N(n,i,s,a) \neq 0},$  $\;\; \forall n$
		\STATE $[\hat{\textbf{d}}_{n,i}]_{s,a} \gets \frac{N(n,i,s,a)}{G},$ $\;\; \forall n$
		\ENDFOR
        \FOR{$(n,m) \in \N_{clust} \times \N_{clust}$}
        \FOR{$(i,s,a) \in \{1,2\} \times S \times A$}
        \STATE $\hat{\bm{\Delta}}_{i,s,a} := \textbf{V}^T_{s,a}(\hat{\prob}_{n,i}(\cdot \mid s,a) - \hat{\prob}_{m,i}(\cdot \vert s,a))$
        \ENDFOR
        \STATE $\dist_1(n,m) := \max_{(s,a) \in \Freq_\beta} \hat{\bm{\Delta}}_{1,s,a}^T \hat{\bm{\Delta}}_{2,s,a}$
        \STATE $\dist_2(n,m) := (\hat{\textbf{d}}_{n,1} - \hat{\textbf{d}}_{m,1})^T\textbf{U}\textbf{U}^T(\hat{\textbf{d}}_{n,2} - \hat{\textbf{d}}_{m,2})$
        \STATE $\dist(n,m) := \lambda\dist_1(n,m) + (1-\lambda)\dist_2(n,m)$
        \ENDFOR
        \STATE Plot a histogram of $\dist$ to determine threshold $\tau$ and cluster trajectories $\simil(n,m) := \ind_{\dist(n,m) \leq \tau}$
	\end{algorithmic}
\end{algorithm}

Choosing $\beta$, $\lambda$ and the threshold $\tau$ both involve heuristic choices, much like how choosing the threshold in \citet{poor2022mixdyn} needs heuristics, although our methods are very different. We describe our methods in more detail in Section~\ref{sec:in-practice}.

\subsubsection{Refinement using EM}\label{sssec:em-algorithm}

Our guarantees in section~\ref{sec:analysis} will show that we can recover exact clusters with high probability at the end of algorithm~\ref{alg:clustering}. However, in practice, it makes sense to refine the clusters if trajectories are not long enough for exact clustering. Remember that an instance of the EM algorithm for any model is specified by choosing the observations $Y$, the hidden variables $Z$ and the parameters $\theta$.  

If we consider observations to be next-state transitions from $(s,a) \in \Freq_\beta$, hidden variables to be the hidden labels and the parameters $\theta$ to include both next-state transition probabilities for $(s,a) \in \Freq_\beta$ and cluster weights $\hat{f}_k$, then one can now refine the clusters using the EM algorithm on this setup, which enjoys monotonicity guarantees in log-likelihood if one uses soft EM. The details of the EM algorithm are quite straightforward, described in Appendix~\ref{sec:em-algorithm-details}. 
 
We hope that this is a step towards unifying the discussion on spectral and EM methods for learning mixture models, highlighting that we need not choose between one or the other -- spectral methods can initialize the EM algorithm, in one reinterpretation of the refinement step.

Note that refinement using EM is not unique to our algorithm. The model estimation and classification steps in \citet{kakade2020meta} (under the special case of Gaussian noise) and \citet{poor2022mixdyn} (who already assume Gaussian noise) are exactly the E-step and M-step of the hard EM algorithm as well.

\subsection{Model Estimation and Classification}

Given clusters from the clustering and refinement step, $2$ tasks remain, namely those of estimating the models from them and correctly classifying any future trajectories. We can estimate the models exactly as in the M-step of hard EM. 
\begin{align*}
    \hat{\prob}_k(s' \vert s,a) &\gets \frac{\sum_{n \in \C_k} N(n,s,a,s')}{\sum_{n \in\C_k} N(n,s,a)}\\
    \hat{f}_k &\gets \frac{|\C_k|}{N_{clust}}
\end{align*}
For classification, given a set $\N_{class}$ of trajectories with size $N_{class}$ generated independently of $\N_{clust}$, we can run a process very similar to Algorithm~\ref{alg:clustering} to identify which cluster to assign each new trajectory to. It is worth noting that we can run the classification step on the subspace estimation dataset itself and recover true labels for those trajectories, since trajectories in $\N_{sub}$ and $\N_{clust}$ are independent. 

We describe the algorithm in natural language here. The algorithm is presented formally as Algorithm~\ref{alg:classification} in Appendix~\ref{sec:classification-algorithm}. We first compute an orthogonal projector $\Tilde{\textbf{V}}_{s,a}$ to the subspace spanned by the now known approximate models $\hat{\prob}_k(\cdot \mid s,a)$. For any new trajectory $n$ and label $k$, we estimate a distance $\dist(n,k)$ between the model $\hat{\prob}_{n,i}(\cdot \mid s,a)$ estimated from $n$ and the model $\hat{\prob}_k(\cdot \mid s,a)$ for $k$, after embedding both in the subspace mentioned above using $\Tilde{\textbf{V}}_{s,a}$. Again, we use a double estimator as hinted at by the use of the subscript $i$, similar to Algorithm~\ref{alg:clustering}. In practice $\dist(n,k)$ could also include occupancy measure differences. Each trajectory $n$ gets the label $k_n$ that minimizes $\dist(n,k)$. 

Previous work like \citet{poor2022mixdyn} and \citet{kakade2020meta} uses the word refinement for its model estimation and classification algorithms themselves. However, we posit that the monotonic improvement in log-likelihood offered by EM makes it well-suited for \textit{repeated application and refinement}, while in our case, the clear theoretical guarantees for the model estimation and classification algorithms make them well suited for \textit{single-step classification}. Note that we can also apply repeated refinement using EM to the labels obtained by single-step classification, which should combine the best of both worlds. 
%It could then be useful to say that clustering gives you a correctness in probability (Theorem 4) guarantee, and if you're still not confident, the rest of the method amounts to running EM with a clustering initialization. The EM component has guarantees of model estimation error given correctness (Theorem 5, we really need a 'given approximate correctness' guarantee), and label PACness (Theorem 6) guarantees conditional on model estimation error bounds of the previous step. You never really know where you are in the EM algorithm as usual, but on the off-chance you got something good in the previous step, you'll get something good in the next step and the algorithm will terminate.

\section{Analysis}\label{sec:analysis}

We have the following end-to-end guarantee for correctly classifying all data.

\begin{thm}[End-to-End Guarantee]\label{thm:endtoend}
Let both $N_{sub}$ and $N_{clust}$ be $\Omega\left( K^2S\frac{\log(1/\delta)} {f_{min}^2\alpha^3\Delta^8}\right)$ and let $T_n = \Omega\left( K^{3/2}t_{mix}\frac{\log^4((N_{clust}+N_{sub})/\delta)\log^3(1/\Delta)\log^4(1/\alpha)}{{\Delta^6\alpha^3}}\right)$. If we execute algorithms~\ref{alg:subspace-est}, ~\ref{alg:clustering} and model estimation, and then apply algorithm~\ref{alg:classification} to $\N_{sub}$ with $\lambda =1$, $\alpha/3 \leq \beta < \alpha$ and $\Delta^2/4 \leq \tau \leq \Delta^2/2$ for clustering and classification, then we can recover the true labels for the entire dataset ($\N_{clust} \cup \N_{sub}$) with probability at least $1-\delta$.
\end{thm}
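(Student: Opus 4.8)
The plan is to prove the theorem as a stage-by-stage reduction: establish one high-probability guarantee for each of the four algorithmic steps, allocate a $\delta/4$ share of the failure budget to each (absorbing the union bounds over the at most $SA$ pairs $(s,a)$ and the $O(N_{clust}^2)$ trajectory pairs into the stated sample and length requirements), and compose them. The recurring mechanism that keeps the trajectory length free of $S$ and $A$ is that every downstream quantity is first projected onto the estimated $K$-dimensional subspace $\rowspan \textbf{V}_{s,a}^T$, so its fluctuations scale with $K$ rather than $S$; the double-estimator construction on the time-separated segments $\Omega_1,\Omega_2$ guarantees that the cross terms we concentrate are, in expectation, close to their noiseless rank-one targets. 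For Step 1 I would show that at each frequently-visited pair, $\HBM_{s,a}+\HBM_{s,a}^T$ concentrates around $2\sum_k f_k\,\prob_k(\cdot\mid s,a)\prob_k(\cdot\mid s,a)^T$ (the target identified in the subspace-estimation remark), splitting the error into a \emph{bias} term — the gap between $\E[\hat\prob_{n,1}\hat\prob_{n,2}^T]$ and the rank-one target, controlled by the near-independence of $\Omega_1,\Omega_2$ induced by their $t_{mix}$-separation — and a \emph{variance} term controlled by $N_{sub}$. A Davis--Kahan $\sin\Theta$ bound then upgrades spectral closeness to subspace closeness, the relevant eigengap being lower bounded in terms of $f_{min}$, $\alpha$ and $\Delta$; the upshot is that $\textbf{V}_{s,a}^T$ nearly retains every difference direction $\prob_{k_1}(\cdot\mid s,a)-\prob_{k_2}(\cdot\mid s,a)$.

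For Step 2 I would analyze $\dist_1(n,m)=\max_{(s,a)\in\Freq_\beta}\hat{\bm\Delta}_{1,s,a}^T\hat{\bm\Delta}_{2,s,a}$ (only $\dist_1$ matters since $\lambda=1$). When $n,m$ share a label, each $\hat{\bm\Delta}_{i,s,a}$ is mean-zero up to an $O(1/\sqrt{\alpha G})$ estimation error, so the double-estimator product concentrates near $0$; when they have distinct labels, at their separating pair the projected squared difference is close to $\|\prob_{k_n}-\prob_{k_m}\|_2^2\ge\Delta^2$ by the retention property from Step 1, again up to $O(1/\sqrt{\alpha G})$. Taking $G$ large enough (the theorem's $G\asymp 1/(\alpha\Delta^2)$ up to the $K$ and log factors) drives both deviations below $\Delta^2/4$, so thresholding at any $\tau\in[\Delta^2/4,\Delta^2/2]$ produces a similarity graph with no incorrect edges; spectral clustering on such an exactly block-structured $K$-cluster graph recovers $\C_1,\dots,\C_K$ perfectly. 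Because each block must span a constant multiple of $t_{mix}$ for the sub-block samples to decorrelate, the segment length is $\asymp G\,t_{mix}$, which is precisely where the linear $t_{mix}$ dependence of $T_n$ originates.

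Given exact clusters, Steps 3--4 are comparatively routine. Pooling every observation in $\C_k$ turns $\hat\prob_k(\cdot\mid s,a)$ into a concentrated MLE, accurate at all frequently-visited pairs once $N_{clust}$ meets the stated bound; forming $\tilde{\textbf{V}}_{s,a}$ from these estimates and re-running the Step-2 distance computation (with $\lambda=1$) shows that the nearest-model rule assigns each trajectory in $\N_{sub}$ its true label. Here I would use crucially that $\N_{sub}$ and $\N_{clust}$ are independent, so the learned models are fixed relative to the fresh trajectories being classified. A final union bound over the four stage-wise events yields correct labels on all of $\N_{clust}\cup\N_{sub}$ with probability at least $1-\delta$.

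I expect the main obstacle to be the bias/variance control in Step 1 under \emph{within-trajectory dependence}: unlike the i.i.d.-noise model of \citet{poor2022mixdyn}, the single-step sub-block observations inside a segment are dependent Markov samples, so matrix Bernstein cannot be applied off the shelf. The key device is the blocking technique of \citet{bin1994mixing}, which couples the dependent sub-block process to an independent surrogate at a cost governed by the $\beta$-mixing coefficients, which decay geometrically in $t_{mix}$; this is also what lets $G$ blocks of length $\asymp t_{mix}$ behave like $G$ independent samples. The delicate part is composing the coupling penalty, the eigengap lower bound, and the projection-retention estimate so that none of them silently reintroduces an $S$- or $A$-dependence into $T_n$ — in particular, keeping the distance-estimator variance dimension-free requires that the Step-1 subspace error be small enough, which couples the two $N_{\cdot}$ requirements to the trajectory length.
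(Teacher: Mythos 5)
Your overall architecture is exactly the paper's: Theorem~\ref{thm:endtoend} is proved by composing the four stage guarantees (Theorems~\ref{thm:subspace-est}, \ref{thm:clustering}, \ref{thm:model-est}, \ref{thm:classification}) with a union bound and by intersecting their conditions on $N_{sub}$, $N_{clust}$ and $T_n$; your stage sketches also match the paper's actual machinery (double estimators on the time-separated segments, the blocking technique of \citet{bin1994mixing} to couple dependent sub-blocks to an independent surrogate, covering plus a conditional Hoeffding inequality, and classifying $\N_{sub}$ using the independence of $\N_{sub}$ from $\N_{clust}$).

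There is, however, one concrete gap in your Step 1. You propose to upgrade the concentration of $\HBM_{s,a}$ to subspace closeness via Davis--Kahan, with ``the relevant eigengap being lower bounded in terms of $f_{min}$, $\alpha$ and $\Delta$.'' No such lower bound is available: Assumption~\ref{as:model_sep} only guarantees that for each \emph{pair} of labels \emph{some} $(s,a)$ witnesses separation $\Delta$, so at a fixed $(s,a)$ two or more of the $\prob_k(\cdot \mid s,a)$ may coincide, the target matrix $\textbf{M}_{s,a}$ may have rank strictly less than $K$, and its $K$-th eigenvalue --- the Davis--Kahan gap for the top-$K$ eigenspace --- can be $0$. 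The paper sidesteps this by invoking Lemma 3 of \citet{poor2022mixdyn}, which bounds $\|\prob_j(\cdot\mid s,a) - \textbf{V}_{s,a}\textbf{V}_{s,a}^T\prob_j(\cdot\mid s,a)\|_2$ by $\sqrt{2K\,\|\HBM_{s,a}-\textbf{M}_{s,a}\|/\min_l \prob(k_n=j_l\mid \B_n)}$: it only certifies that each individual component direction is nearly contained in the estimated subspace (which is all that the clustering step needs to preserve the separating difference $\prob_{k_1}-\prob_{k_2}$), and it requires only a lower bound on the conditional mixture weights, which the paper gets as $f_{min}d_{min}(s,a)/2$ --- no eigengap, and no $\Delta$. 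Relatedly, the correct statistical target is $\sum_j \prob(k_n=j\mid \B_n)\,\prob_j\prob_j^T$ (conditioned on observing $(s,a)$ in both segments), not $\sum_k f_k\,\prob_k\prob_k^T$; this conditioning is where the extra factors of $\alpha$ enter the bound. With the Davis--Kahan step replaced by an argument of this per-direction type, the rest of your composition goes through as in the paper.
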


\begin{proof}
This follows directly from Theorems~\ref{thm:subspace-est}, \ref{thm:clustering}, \ref{thm:model-est} and \ref{thm:classification} upon combining the conditions on $N_{sub}, N_{clust},$ and $T_n$ in both theorems. We also use the brief discussion after the statement of Theorem~\ref{thm:classification}.
\end{proof}

The dependence on model-specific parameters like $\alpha, \Delta$ and $f_{min}$ is conservative and can be easily improved upon by following the proofs carefully. We chose the form of the guarantees in this section to present a clearer message. In one example, there are versions of these theorems that depend on both $G$ and $T_n$. We choose $G = (T_n/t_{mix})^{2/3}$ to present crisper guarantees. For understanding how the guarantees would behave depending on both $G$ and $T_n$, or how to improve the dependence on model-specific parameters, the reader can follow the proofs in the appendix.

\subsection{Techniques and Proofs}
We make a few remarks on the technical novelty of our proofs. As mentioned in Section~\ref{sec:introduction}, we are dealing with two kinds of non-independence. While we borrow some ideas in our analysis from \citet{poor2022mixdyn} to deal with the temporal dependence, we crucially need new technical inputs to deal with the fact that we cannot cast the temporal evolution as a deterministic function with additive i.i.d. noise, unlike in linear dynamical systems.

We identify the blocking technique in \citet{bin1994mixing} as a general method to leverage the "near-independence" in observations made in a mixing process when they are separated by a multiple of the mixing time. Our proofs involve first showing that estimates made from a single trajectory would concentrate if the observations were independent, and then we bound the "mixing error" to account for the non-independence of the observations. We first choose a distribution (often labelled as a variant of $Q$ or $\Xi$) with desirable properties, and then bound the difference between probabilities of undesirable events under $Q$ and under the true joint distribution of observations $\chi$, using the blocking technique due to \citet{bin1994mixing}. 

There are many other technical subtleties here. In one example, the number of $(s,a)$ observations made in a single trajectory is itself a random variable and so our estimator takes a ratio of two random variables. To resolve this, we have to condition on the random set of $(s,a)$ observations recorded in a trajectory and use a conditional version of Hoeffding's inequality (different from the Azuma-Hoeffding inequality), followed by a careful argument to get unconditional concentration bounds, all under $Q$.

\subsection{Subspace Estimation} 

For subspace estimation, we have the following guarantee.
\begin{restatable}[Subspace Estimation Guarantee]{thm}{SubspaceEst}\label{thm:subspace-est}
Consider $2$ models with labels $k_1, k_2$ and a state-action pair $s,a$ with $d_{min}(s,a) \geq \alpha/3$. Consider the output $\textbf{V}_{s,a}^T$ of Algorithm~\ref{alg:subspace-est}. Let $f_{min} = \min(f_{k_1}, f_{k_2})$ be the lower of the label prevalences. Remember that each trajectory has length $T_n$.

Then given that $N_{sub} = \Omega\left( \frac{\log(1/\delta)}{\alpha^2} \right)$, $T_n =\Omega(t_{mix}\log^4(1/\alpha))$, with probability at least $1-\delta$, for $k=k_1,k_2$
$$\|\prob_k(\cdot \mid s,a) - \textbf{V}_{s,a}\textbf{V}_{s,a}^T\prob_k(\cdot \mid s,a)\|_2 \leq \epsilon_{sub}(\delta)$$
where

\begin{itemize}
    \item For $T_n = \Omega\left(  t_{mix} \log^3\left( \frac{f_{min}N_{sub} \alpha}{KS\log(1/\delta)}\right)\right)$ 

    $$\epsilon_{sub}(\delta) = O\left(\sqrt{ \frac{K}{f_{min}} \left( \sqrt{\frac{S}{N_{sub} \cdot \alpha^3} \log\left(\frac{1}{\delta}\right)} \right)}\right)$$
    
    \item While for $T_n  = O\left(  t_{mix} \log^3\left( \frac{f_{min}N_{sub} \alpha}{KS\log(1/\delta)}\right)\right)$

    $$\epsilon_{sub}(\delta) = O\left( \left(\frac{1}{2} \right)^{\frac{1}{16}\left(\frac{T_n}{t_{mix}}\right)^{1/3}}\right)$$
\end{itemize} 

Alternatively, we only need $N_{sub} = \Omega\left( \frac{K^2S\log(1/\delta)}{f_{min}^2\alpha^3\epsilon^4}\right)$ and $T_n = \Omega\left(t_{mix} \log^3(1/\epsilon) \log^4(1/\alpha) \right)$ trajectories for $\epsilon$ accuracy in subspace estimation with probability at least $1-\delta$.
\end{restatable}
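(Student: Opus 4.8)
The plan is to reduce the statement to a matrix perturbation bound and then to establish that perturbation bound under temporal dependence. I would define the target matrix
$$\textbf{M}_{s,a} := \sum_{k=1}^K f_k\, \prob_k(\cdot \mid s,a)\,\prob_k(\cdot \mid s,a)^T,$$
which has rank at most $K$ and whose top-$K$ eigenspace is exactly $\mathspan(\prob_k(\cdot\mid s,a))_{k=1}^K$, as noted in the first remark after Algorithm~\ref{alg:subspace-est}. Since $\textbf{V}_{s,a}^T$ is by construction the top-$K$ eigenprojector of the symmetrization $\textbf{W}_{s,a} := \tfrac12(\HBM_{s,a}+\HBM_{s,a}^T)$, and since each $\prob_k(\cdot\mid s,a)$ already lies in the target eigenspace, writing $P := \textbf{V}_{s,a}\textbf{V}_{s,a}^T$ the quantity I must control is $\|\prob_k(\cdot\mid s,a) - P\,\prob_k(\cdot\mid s,a)\|_2$.

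The key observation is that I can bound this \emph{without} any eigengap or linear-independence hypothesis, which is why no model-separation parameter appears in $\epsilon_{sub}$; the trick is to work with a quadratic form rather than a subspace angle. Using that each $\prob_k$ lies in the range of $\textbf{M}_{s,a}$ and $f_k \geq f_{min}$,
$$f_{min}\,\|\prob_k - P\prob_k\|_2^2 \;\leq\; \sum_{j} f_j\,\|\prob_j - P\prob_j\|_2^2 \;=\; \operatorname{tr}\big((I-P)\,\textbf{M}_{s,a}\big).$$
Writing $\textbf{E}_{s,a} := \textbf{W}_{s,a} - \textbf{M}_{s,a}$, I would expand $\operatorname{tr}((I-P)\textbf{M}_{s,a}) = \sum_{i\le K}\lambda_i(\textbf{M}_{s,a}) - \sum_{i\le K}\lambda_i(\textbf{W}_{s,a}) + \operatorname{tr}(P\,\textbf{E}_{s,a})$; since $\textbf{M}_{s,a}$ has rank at most $K$, both the eigenvalue-sum gap (by Weyl's inequality) and $|\operatorname{tr}(P\,\textbf{E}_{s,a})|$ are at most $K\|\textbf{E}_{s,a}\|_{op}$, so $\operatorname{tr}((I-P)\textbf{M}_{s,a}) \le 2K\|\textbf{E}_{s,a}\|_{op}$ with no eigengap needed. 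Taking a square root produces exactly the nested-square-root shape $\sqrt{(K/f_{min})\,\|\textbf{E}_{s,a}\|_{op}}$ claimed, so the theorem reduces to showing $\|\textbf{E}_{s,a}\|_{op} \lesssim \sqrt{S\log(1/\delta)/(N_{sub}\alpha^3)}$ plus a mixing term.

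This matrix bound is where all the work lies, and I would split it (using that symmetrization is linear and $\textbf{M}_{s,a}$ is symmetric) into a statistical-deviation term $\|\HBM_{s,a}-\E\HBM_{s,a}\|_{op}$ and a bias term $\|\E\HBM_{s,a}-\textbf{M}_{s,a}\|_{op}$. The bias term is the price of the double estimator: because $\Omega_1$ and $\Omega_2$ are separated by a multiple of $t_{mix}$, $\E[\hat\prob_{n,1}(\cdot\mid s,a)\hat\prob_{n,2}(\cdot\mid s,a)^T] \approx \prob_{k_n}\prob_{k_n}^T$, and I would control the residual by the mixing coefficient, which with single-step-subblock spacing $T_n/G$ and the choice $G=(T_n/t_{mix})^{2/3}$ decays like $(1/2)^{(T_n/t_{mix})^{1/3}/16}$; this is the source of the second regime of the theorem. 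For the statistical term I would use matrix concentration across the $N_{sub}$ trajectories, contending with two non-idealities the paper flags: (i) the denominators $N(n,i,s,a)$ are themselves random, handled by conditioning on the realized set of $(s,a)$-observations in each trajectory and applying a conditional Hoeffding inequality before integrating back out, and (ii) within-trajectory dependence, handled by the blocking technique of \citet{bin1994mixing}, i.e.\ coupling the true joint law $\chi$ of the single-step subblocks to a product surrogate $Q$ and paying only the mixing gap. The occupancy lower bound $d_{min}(s,a)\geq\alpha/3$ forces each segment count to be $\gtrsim \alpha G$ with high probability, which controls the variance of $\hat\prob_{n,i}$ and, together with $N_{traj}(s,a)\gtrsim \alpha N_{sub}$, produces the $\alpha^{-3}$ factor.

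I expect the matrix concentration under dependence with random denominators to be the main obstacle: obtaining an operator-norm (not merely entrywise) deviation of the right order while the observations inside each trajectory are Markov-dependent forces the blocking coupling and the conditional-concentration bookkeeping to be carried through a dimension-$S$ matrix rather than a scalar. Once both regimes of $\|\textbf{E}_{s,a}\|_{op}$ are in hand, the two displayed forms of $\epsilon_{sub}$ follow by substitution into the reduction above, and the final ``alternatively'' clause comes for free by inverting each regime: setting the statistical term to $\epsilon$ yields $N_{sub}=\Omega\!\big(K^2S\log(1/\delta)/(f_{min}^2\alpha^3\epsilon^4)\big)$, and setting the mixing term to $\epsilon$ yields $T_n=\Omega(t_{mix}\log^3(1/\epsilon))$, combined with the standing requirement $T_n=\Omega(t_{mix}\log^4(1/\alpha))$.
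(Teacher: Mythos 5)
Your overall architecture coincides with the paper's: reduce to an operator-norm bound on the error of the symmetrized estimator, split that error into a cross-trajectory concentration term (handled by a covering argument plus a conditional Hoeffding inequality to deal with the random denominators $N(n,i,s,a)$) and a mixing/bias term (handled by the blocking technique of \citet{bin1994mixing} with sub-block spacing $T_n/G$ and $G=(T_n/t_{mix})^{2/3}$), and lower-bound $N_{traj}(s,a)\gtrsim \alpha N_{sub}$. The one place you genuinely diverge is the reduction itself: the paper simply invokes Lemma~3 of \citet{poor2022mixdyn} to pass from $\|\HBM_{s,a}-\textbf{M}_{s,a}\|$ to the projection error, whereas you re-derive that step from scratch via $f_{min}\|(I-P)\prob_k\|_2^2 \le \operatorname{tr}((I-P)\textbf{M}_{s,a}) \le 2K\|\textbf{E}_{s,a}\|_{op}$ using Weyl's inequality and $|\operatorname{tr}(P\textbf{E}_{s,a})|\le K\|\textbf{E}_{s,a}\|_{op}$. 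That argument is correct, eigengap-free, and self-contained, which is a modest gain in transparency over the citation.

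There is, however, one concrete misstep in your setup: you define the target as $\textbf{M}_{s,a}=\sum_k f_k\,\prob_k(\cdot\mid s,a)\prob_k(\cdot\mid s,a)^T$, but the estimator averages only over trajectories in $\N_{traj}(s,a)$, i.e.\ those where $(s,a)$ is observed in both $\Omega_1$ and $\Omega_2$. Its (conditional) expectation is therefore approximately $\sum_j \prob(k_n=j\mid \B_n)\,\prob_j\prob_j^T$, and the selection weights $\prob(k_n=j\mid\B_n)=f_j\prob(\B_n\mid k_n=j)/\prob(\B_n)$ can differ from $f_j$ by a constant factor whenever the occupancy measures $d_j(s,a)$ differ across labels. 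With your choice of target, the ``bias term'' $\|\E\HBM_{s,a}-\textbf{M}_{s,a}\|_{op}$ does \emph{not} decay with $T_n$ or $N_{sub}$, so that leg of your decomposition fails as stated. The fix is exactly what the paper does: take the target to be the conditionally weighted sum, and note that your trace reduction is agnostic to the weights provided they are bounded below; the paper shows $\prob(k_n=j\mid\B_n)\ge f_{min}d_{min}(s,a)/2$, which replaces your $f_{min}$ by $f_{min}\alpha/6$ in the denominator. This is also the correct accounting for the $\alpha^{-3}$ inside the inner square root (an $\alpha^{-2}$ from squaring this weight bound plus an $\alpha^{-1}$ from $N_{traj}(s,a)\gtrsim\alpha N_{sub}$), rather than your attribution to the per-segment counts $c_{n,i}\gtrsim\alpha G$ — those counts play no role in the subspace step, since $\|\hat{\textbf{M}}_{n,s,a}\|\le 1$ uniformly and the cross-trajectory Hoeffding bound needs no per-trajectory variance control.
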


\begin{remark} \textbf{Why are short trajectories enough?}
Notice that the length of trajectories only affects the bound as a multiple of $t_{mix}$ with some logarithmic terms. This is because intuitively, the onus of estimating the correct subspace lies on aggregating information across trajectories. So, as long as there are enough trajectories, each trajectory does not have to be long.
\end{remark}

\subsection{Clustering} 

Remember that $\Delta$ is the model separation and $\alpha$ is the corresponding "stationary occupancy measure" from Assumption~\ref{as:model_sep}. We give guarantees for choosing $\lambda=1$, which corresponds to using only model difference information instead of also using occupancy measure information. This is unavoidable since we have no guarantees on the separation of occupancy measures. See Section~\ref{ssec:in-practice-clustering} for a discussion. Here, we provide a high-probability guarantee for exact clustering.

\begin{restatable}[Exact Clustering Guarantee]{thm}{Clustering}\label{thm:clustering}
Pick any pair of trajectories $n,m$. Then for $\Freq_\beta$ so that it contains $(s,a)$ with $d_{min}(s,a) \geq \Omega(\alpha)$, $T_n = \Omega(t_{mix}\log^4(1/\delta)/\alpha^3)$, with probability at least $1-\delta$,
$$\left\vert \dist_1(m,n) - \left\Vert \Delta_{m,n} \right\Vert_2^2 \right\vert$$
is
$$O\left(\sqrt{\frac{K\log(1/\delta)}{\alpha}}\left( \frac{t_{mix}}{T_n} \right)^{\frac{1}{3}} \right) + 4\epsilon_{sub}(\delta/2)$$
This means that if we choose $\lambda = 1$, then if $\epsilon_{sub}(\delta) \leq \Delta^2/32$ and $T_n =  \Omega\left( K^{3/2}t_{mix}\frac{\log^4(N_{clust}/(\alpha\delta))}{\Delta^6\alpha^{3}}\right)$, 
no distance estimate attains a value between $\Delta^2/4$ and $\Delta^2/2$. So, Algorithm \ref{alg:clustering} attains exact clustering using a threshold of say $\Delta^2/3$ with probability at least $1-\delta$.
\end{restatable}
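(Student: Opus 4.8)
The plan is to first establish the single-pair concentration bound on $|\dist_1(m,n) - \|\Delta_{m,n}\|_2^2|$, and then feed it into a union bound and a threshold argument to obtain exact clustering. Fix the pair $(n,m)$ and let $k_n,k_m$ be their (unknown) true labels, so the population target at a pair $(s,a)$ is $\|\prob_{k_n}(\cdot\mid s,a)-\prob_{k_m}(\cdot\mid s,a)\|_2^2$, which vanishes when $k_n=k_m$. Since both $\dist_1(n,m)$ and $\|\Delta_{m,n}\|_2^2$ are maxima over $(s,a)\in\Freq_\beta$, I would use $|\max_{s,a}g-\max_{s,a}h|\le\max_{s,a}|g-h|$ to reduce the claim to a per-$(s,a)$ bound on $|\hat{\bm\Delta}_{1,s,a}^T\hat{\bm\Delta}_{2,s,a}-\|\prob_{k_n}(\cdot\mid s,a)-\prob_{k_m}(\cdot\mid s,a)\|_2^2|$, followed by a union bound over $\Freq_\beta$ (whose size is at most $1/\beta=O(1/\alpha)$, contributing only logarithmic factors).

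For a fixed $(s,a)$ I would split the error into three pieces. First, a \emph{subspace} piece: replacing the ideal target by its projection $\|\textbf{V}_{s,a}^T(\prob_{k_n}-\prob_{k_m})\|_2^2$. Because Theorem~\ref{thm:subspace-est}, invoked with confidence $\delta/2$ at the frequent pair where $d_{min}(s,a)\ge\alpha/3$, gives $\|(\textbf{I}-\textbf{V}_{s,a}\textbf{V}_{s,a}^T)\prob_k\|_2\le\epsilon_{sub}(\delta/2)$ for $k=k_n,k_m$, a triangle inequality together with $\|\prob_{k_n}-\prob_{k_m}\|_2\le\sqrt2$ bounds this piece by the additive $4\epsilon_{sub}(\delta/2)$ term in the statement (the factor $4$ absorbing the two models and the squaring). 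Second, a \emph{double-estimator bias} piece: showing $\E[\hat{\bm\Delta}_{1,s,a}^T\hat{\bm\Delta}_{2,s,a}]\approx\|\textbf{V}_{s,a}^T(\prob_{k_n}-\prob_{k_m})\|_2^2$, using that conditionally on the observed transitions each empirical next-state vector is unbiased for the true $\prob_k(\cdot\mid s,a)$, and that $\Omega_1,\Omega_2$ are time-separated by a multiple of $t_{mix}$ so that the cross terms factor. Third, a \emph{concentration} piece: deviation of the inner product from its mean; the projection onto the $K$-dimensional rowspace of $\textbf{V}_{s,a}^T$ collapses the ambient $S$-dependence to the $\sqrt{K}$ factor, and since $(s,a)\in\Freq_\beta$ appears in $\gtrsim\alpha G$ of the $G$ sub-blocks per segment, the per-sample variance scales like $1/(\alpha G)$.

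The main obstacle is this third piece, because the estimator is a ratio whose denominator $N(n,i,s,a)$ is random and because the sub-block observations within a segment are temporally dependent rather than i.i.d. I would follow the strategy from the Techniques subsection: introduce an auxiliary law $Q$ under which the $G$ sub-blocks are independent, prove concentration under $Q$ by first conditioning on the random set of recorded $(s,a)$ transitions and applying a conditional Hoeffding inequality (to handle the random denominator), and then transfer the bound to the true joint law $\chi$ via the blocking technique of \citet{bin1994mixing}. Choosing $G=(T_n/t_{mix})^{2/3}$ balances the concentration error $\sqrt{K\log(1/\delta)/(\alpha G)}=\sqrt{K\log(1/\delta)/\alpha}\,(t_{mix}/T_n)^{1/3}$ against the $\beta$-mixing cost $\sim G\exp(-(T_n/t_{mix})^{1/3})$, which the hypothesis $T_n=\Omega(t_{mix}\log^4(1/\delta)/\alpha^3)$ renders negligible, yielding the stated error $E=O(\sqrt{K\log(1/\delta)/\alpha}\,(t_{mix}/T_n)^{1/3})$.

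Finally, for exact clustering I would translate the per-pair bound into a statement over all pairs and threshold. Under $\lambda=1$ we have $\dist(n,m)=\dist_1(n,m)$. If $k_n=k_m$ the target is $0$, so $\dist_1\le E+4\epsilon_{sub}$; if $k_n\ne k_m$ then at the separating pair the target is $\ge\Delta^2$, so $\dist_1\ge\Delta^2-E-4\epsilon_{sub}$. Imposing $\epsilon_{sub}(\delta)\le\Delta^2/32$ (so $4\epsilon_{sub}\le\Delta^2/8$) and choosing $T_n$ large enough that $E\le\Delta^2/8$ forces same-label pairs below $\Delta^2/4$ and different-label pairs above $\Delta^2/2$, leaving the interval $(\Delta^2/4,\Delta^2/2)$ empty, so a threshold of $\Delta^2/3$ separates them exactly. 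Solving $E\le\Delta^2/8$ gives the clean requirement $T_n=\Omega(K^{3/2}t_{mix}/(\Delta^6\alpha^{3/2}))$ up to logarithmic terms, and a union bound over the $\binom{N_{clust}}{2}$ pairs replaces $\log(1/\delta)$ by $\log(N_{clust}/(\alpha\delta))$; the more conservative $\alpha^{3}$ and $\log^4$ powers in the statement simply absorb the rougher constants, as the remark after Theorem~\ref{thm:endtoend} anticipates.
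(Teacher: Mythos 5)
Your proposal is correct and follows essentially the same route as the paper's proof: a per-$(s,a)$ decomposition into a subspace-projection error (charged to $\epsilon_{sub}(\delta/2)$ via Theorem~\ref{thm:subspace-est}), a double-estimator bias term, and a concentration term handled by conditioning on the recorded $(s,a)$ observations, applying a conditional Hoeffding inequality under the product law $Q$, and transferring to the true law via the blocking technique, with $G=(T_n/t_{mix})^{2/3}$, followed by the same $\Delta^2/8$ threshold argument and union bounds over $\Freq_\beta$ and over the $\binom{N_{clust}}{2}$ pairs. The only cosmetic difference is that you reduce the max-over-$(s,a)$ comparison via $|\max g - \max h|\le\max|g-h|$, whereas the paper argues directly that the maximizing separating pair lands on the correct side of the threshold; both are fine.
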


Since we already have high probability guarantees for exact clustering before refinement of the clusters, guarantees for the EM step analogous to the single-step guarantees for refinement in \citet{poor2022mixdyn} are not useful here. However, we do still present single-step guarantees for the EM algorithm in our case using a combination of Theorem~\ref{thm:model-est} for the M-step and Theorem~\ref{thm:em-guarantee} in Appendix~\ref{sec:em-guarantee}.

\subsection{Model Estimation and Classification} 

%\kevin{These guarantees are somewhat vacuous, given that we already know the correctness of the hard EM algorithm, but we'll provide them for completeness in the ICML version}

We also have guarantees for correctly estimating the relevant parts of the models and classifying sets of trajectories different from $\N_{clust}$.

\begin{restatable}[Model Estimation Guarantee]{thm}{ModelEstimation}\label{thm:model-est}
For any state action pair $(s,a)$ with $d_{min}(s,a) \geq \alpha/3$, and for $GN_{clust} \geq \Omega\left( \frac{\log(1/\delta)}{f_{min}^2\alpha^2}\right)$ and $T_n \geq \Omega(Gt_{mix}\log(G/\delta))$, with probability greater than $1-\delta$, 
$$\|\hat{\prob}_{k}(\cdot \mid s,a) - \prob_k(\cdot \mid s,a)\|_1$$
is bounded above by
$$O\left(\left( \frac{t_{mix}}{T_n}\right)^{1/3}\sqrt{\frac{1}{N_{clust}f_{min}\alpha} (S+\log(\frac{1}{\delta}))}\right)$$
\end{restatable}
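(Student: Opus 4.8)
The plan is to reduce the claim to a standard $\ell_1$ concentration bound for a pooled empirical next-state distribution, and then pay for two forms of randomness that the pooled estimator hides: the random number of recorded visits sitting in the denominator, and the temporal dependence of observations within each trajectory. First I would condition on the exact-clustering event of Theorem~\ref{thm:clustering}, so that $\C_k$ is precisely the set of label-$k$ trajectories and every recorded transition out of $(s,a)$ inside $\C_k$ is, by the Markov property, a draw from $\prob_k(\cdot\mid s,a)$ regardless of the (unknown) policy. The estimator $\hat{\prob}_k(\cdot\mid s,a)$ is then exactly the empirical distribution of the $M := \sum_{n\in\C_k} N(n,s,a)$ pooled next-state draws, whose common law is $\prob_k(\cdot\mid s,a)$. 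If these draws were independent, McDiarmid's inequality applied to $\|\hat{\prob}_k-\prob_k\|_1$ (which changes by at most $2/M$ when one draw is altered), together with the standard mean bound $\E\|\hat{\prob}_k-\prob_k\|_1 \le \sqrt{(S-1)/M}$, would give $\|\hat{\prob}_k(\cdot\mid s,a)-\prob_k(\cdot\mid s,a)\|_1 = O(\sqrt{(S+\log(1/\delta))/M})$ with probability $1-\delta$.

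Next I would lower-bound the denominator $M$. The number of label-$k$ trajectories is $\mathrm{Binomial}(N_{clust},f_k)$, hence at least $\tfrac12 f_{min}N_{clust}$ with high probability, and each such trajectory contributes in expectation $\gtrsim G\alpha$ recorded single-step visits to $(s,a)$ because $d_{min}(s,a)\ge\alpha/3$ and there are $G$ sub-blocks per trajectory. A multiplicative Chernoff bound then yields $M = \Omega(f_{min}\alpha\,GN_{clust})$ with probability $1-\delta$, and this is exactly where the sample-size hypothesis $GN_{clust}=\Omega(\log(1/\delta)/(f_{min}^2\alpha^2))$ is consumed, guaranteeing that both the cluster size and the visit counts concentrate around their means. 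Substituting this bound into the i.i.d.\ estimate gives $O(\sqrt{(S+\log(1/\delta))/(f_{min}\alpha GN_{clust})}) = O(G^{-1/2}\sqrt{(S+\log(1/\delta))/(N_{clust}f_{min}\alpha)})$. Choosing $G=(T_n/t_{mix})^{2/3}$, as in the discussion following Theorem~\ref{thm:endtoend}, turns $G^{-1/2}$ into $(t_{mix}/T_n)^{1/3}$, which is precisely the claimed rate.

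The main obstacle is that the pooled draws are not independent: within each trajectory the state-action process is a mixing Markov chain, so the single-step sub-blocks are only \emph{approximately} independent. I would handle this with the blocking technique of \citet{bin1994mixing}, exactly as flagged in the techniques discussion. Because the $G$ sub-blocks of a trajectory are separated by $T_n/G$ timesteps, the hypothesis $T_n=\Omega(Gt_{mix}\log(G/\delta))$ forces each gap to exceed $t_{mix}\log(G/\delta)$, so each sub-block is near-stationary and near-independent of the others. Concretely I would introduce a surrogate product distribution $Q$ under which the sub-block transitions are genuinely i.i.d.\ as $\prob_k(\cdot\mid s,a)$, prove the McDiarmid bound of the first paragraph under $Q$, and then transfer it to the true joint law $\chi$ by bounding $|\chi(A)-Q(A)|$ for the relevant bad event $A$ via the accumulated mixing error, which the length condition drives below $\delta$. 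A secondary subtlety, also noted in the techniques section, is that $\hat{\prob}_k$ is a ratio with a random denominator; I would resolve this by first conditioning on the realized multiset of recorded $(s,a)$ visits (under which the next-state draws are still governed by $\prob_k$) and applying a conditional Hoeffding/McDiarmid bound, and only then removing the conditioning using the denominator lower bound of the previous paragraph. The $\ell_1$ empirical-distribution bound itself is routine; the delicate part is the simultaneous bookkeeping of the mixing error, the random ratio, and the cluster-size fluctuation within a single high-probability statement.
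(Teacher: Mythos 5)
Your proposal is correct and follows essentially the same route as the paper's proof: concentration of the pooled empirical distribution under a surrogate product measure conditioned on the visit set, a lower bound on the random denominator of order $f_{min}\alpha\,GN_{clust}$, a transfer to the true joint law via the blocking technique of \citet{bin1994mixing}, and the choice $G=(T_n/t_{mix})^{2/3}$. The only cosmetic differences are that you obtain the $\ell_1$ bound via McDiarmid plus the mean bound where the paper uses a $12^S$ covering argument with a conditional Hoeffding inequality, and that you bound the cluster size and per-trajectory visit counts separately where the paper applies Hoeffding directly to the total count; both yield the same rates.
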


Note that since the $1$-norm is greater than the $2$-norm, the same bound holds in the $2$-norm as well. Also notice that since our assumptions do not say anything about observing all $(s,a)$ pairs often enough, we can only given guarantees in terms of the occurrence frequency of $(s,a)$ pairs.

\begin{restatable}[Classification Guarantee]{thm}{Classification}\label{thm:classification}
Let $\epsilon_{mod}(\delta)$ be a high probability bound on the model estimation error $\|\hat{\prob}_{k}(\cdot \mid s,a) - \prob_k(\cdot \mid s,a)\|_2$. Then there is a universal constant $C_3$ so that Algorithm~\ref{alg:classification} can identify the true labels for trajectories in $\N_{class}$ with probability at least $1-\delta$ for $T_n =  \Omega\left( K^{3/2}t_{mix}\frac{\log^4(N_{class}/(\alpha\delta))}{\Delta^6\alpha^{3}}\right)$, whenever $\epsilon_{mod}(\delta/2) \leq \frac{C_3\Delta^4f_{min}\alpha}{K}$ and $N_{clust} \geq \Omega\left( \frac{\log(1/\delta)}{f_{min}^2\alpha^2}\right)$.
\end{restatable}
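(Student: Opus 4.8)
The plan is to mirror the proof of Theorem~\ref{thm:clustering}, replacing the pairwise comparison of two trajectories by a comparison between a single new trajectory $n \in \N_{class}$ and each candidate model $\hat{\prob}_k(\cdot \mid s,a)$ output by the model estimation step. For each label $k$, Algorithm~\ref{alg:classification} forms (with $\lambda = 1$, using only the model-difference term analogous to $\dist_1$) a double estimator $\dist(n,k) = \max_{(s,a)\in\Freq_\beta} (\hat{\textbf{e}}^{(k)}_{1,s,a})^{T}\hat{\textbf{e}}^{(k)}_{2,s,a}$, where $\hat{\textbf{e}}^{(k)}_{i,s,a} := \Tilde{\textbf{V}}_{s,a}^{T}(\hat{\prob}_{n,i}(\cdot\mid s,a) - \hat{\prob}_k(\cdot\mid s,a))$ and only the segment estimates $\hat{\prob}_{n,1},\hat{\prob}_{n,2}$ carry the two-segment structure. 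The assigned label is $k_n = \argmin_k \dist(n,k)$. I would prove that for the true label $k_n^\star$ the distance is small, while for every $k \neq k_n^\star$ it exceeds $\Delta^2/2$, so that the $\argmin$ recovers $k_n^\star$ for all trajectories in $\N_{class}$ simultaneously.

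The first ingredient is concentration of $\dist(n,k)$ around its target. Since $\hat{\prob}_k$ is held fixed across both segments and only $\hat{\prob}_{n,1},\hat{\prob}_{n,2}$ vary, the blocking argument of \citet{bin1994mixing} and the conditional Hoeffding bound used for $\dist_1$ in Theorem~\ref{thm:clustering} apply essentially verbatim, yielding a statistical error of order $\sqrt{K\log(1/\delta)/\alpha}\,(t_{mix}/T_n)^{1/3}$ for each fixed tuple $(n,k,s,a)$. A union bound over the $K N_{class}$ pairs $(n,k)$ and the frequent $(s,a)$ pairs turns $\log(1/\delta)$ into $\log(N_{class}K/(\alpha\delta))$, which is absorbed into the stated $\log^4(N_{class}/(\alpha\delta))$ factor and reproduces the requirement $T_n = \Omega\left(K^{3/2}t_{mix}\log^4(N_{class}/(\alpha\delta))/(\Delta^6\alpha^3)\right)$ exactly as in clustering. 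After this step $\dist(n,k)$ sits within, say, $\Delta^2/16$ of the projected squared target $\|\Tilde{\textbf{V}}_{s,a}^{T}(\prob_{k_n^\star}(\cdot\mid s,a) - \hat{\prob}_k(\cdot\mid s,a))\|_2^2$ at every frequent pair.

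The second and main ingredient is controlling the bias from using the estimated subspace $\Tilde{\textbf{V}}_{s,a}$ (spanning the $K$ estimated models) and the estimated targets $\hat{\prob}_k$ instead of their true counterparts. For the correct label, $\|\prob_{k_n^\star} - \hat{\prob}_{k_n^\star}\|_2 \le \epsilon_{mod}(\delta/2)$, so the target is $O(\epsilon_{mod}^2)$, safely below threshold. For an incorrect label, Assumption~\ref{as:model_sep} supplies a frequent pair with $\|\prob_{k_n^\star}(\cdot\mid s,a) - \prob_k(\cdot\mid s,a)\|_2 \ge \Delta$; the risk is that projecting through $\Tilde{\textbf{V}}_{s,a}$ rather than the true subspace, and replacing $\prob_k$ by $\hat{\prob}_k$, shrinks this separation. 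I would bound $\|\Tilde{\textbf{V}}_{s,a}\Tilde{\textbf{V}}_{s,a}^{T} - \textbf{V}_{s,a}\textbf{V}_{s,a}^{T}\|$ by a Davis--Kahan/Wedin perturbation argument: the estimated models are within $\epsilon_{mod}$ of the true ones, so the projector error is at most $\epsilon_{mod}$ divided by the $K$-th singular value of the true model matrix, which is bounded below by a quantity of order $\Delta\sqrt{f_{min}\alpha}/\sqrt{K}$ coming from the separation-and-prevalence structure. Propagating this projector error and the $\epsilon_{mod}$ target perturbation back into the projected separation keeps the incorrect-label target $\ge \Delta^2/2$ precisely when $\epsilon_{mod}(\delta/2) \le C_3\Delta^4 f_{min}\alpha/K$. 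The condition $N_{clust} \ge \Omega(\log(1/\delta)/(f_{min}^2\alpha^2))$ serves to guarantee that each cluster, and each separating pair within it, is observed often enough that every $\hat{\prob}_k$, and hence $\Tilde{\textbf{V}}_{s,a}$, is well defined and that $\Freq_\beta$ captures the separating pairs.

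I expect the subspace-perturbation step to be the main obstacle: pinning down the lower bound on the $K$-th singular value of the true model matrix and tracking how it propagates through the projector error into the exponent of $\Delta$ is what produces the comparatively demanding tolerance $\epsilon_{mod}(\delta/2) \le C_3\Delta^4 f_{min}\alpha/K$, and an imprecise bound here would either break the separation argument or degrade the guarantee. The remaining pieces, namely the concentration of $\dist(n,k)$ and the union bound over $\N_{class}$, are reruns of the clustering analysis and should require no technical tools beyond the blocking technique already invoked for Theorem~\ref{thm:clustering}.
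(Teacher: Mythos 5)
Your overall architecture matches the paper's: rerun the clustering analysis with one side of the comparison replaced by the fixed estimated model $\hat{\prob}_k(\cdot \mid s,a)$, take a union bound over the $KN_{class}$ pairs $(n,k)$ and the $O(1/\alpha)$ frequent state-action pairs, and isolate the bias introduced by the estimated subspace and estimated targets. The concentration half of your argument is sound and is indeed a near-verbatim rerun of Lemma~\ref{lem:clustering-concentration-type-term-bound}, as the paper itself notes.

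The gap is in your subspace-perturbation step. You propose to control $\|\Tilde{\textbf{V}}_{s,a}\Tilde{\textbf{V}}_{s,a}^T - \textbf{V}_{s,a}\textbf{V}_{s,a}^T\|$ via Davis--Kahan/Wedin, dividing the model perturbation by the $K$-th singular value of the true model matrix, which you assert is bounded below by a quantity of order $\Delta\sqrt{f_{min}\alpha}/\sqrt{K}$. No such lower bound follows from Assumption~\ref{as:model_sep}: the separation is only pairwise, and the witnessing $(s,a)$ may differ for each pair, so at any fixed $(s,a)$ the matrix $\sum_k \hat{f}_{k,s,a}\prob_k(\cdot\mid s,a)\prob_k(\cdot\mid s,a)^T$ can have rank strictly less than $K$ (two labels other than the pair being separated may have identical transitions there), making the $K$-th eigenvalue zero and the eigengap argument vacuous. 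The paper avoids this entirely by never bounding the projector difference: it bounds $\|\Tilde{\textbf{M}}_{s,a} - \textbf{M}^{true}_{s,a}\| \leq 2\epsilon_{mod}(\delta)$ and then invokes Lemma 3 of \citet{poor2022mixdyn}, which gives the eigengap-free residual bound $\|\prob_k(\cdot\mid s,a) - \Tilde{\textbf{V}}_{s,a}\Tilde{\textbf{V}}_{s,a}^T\prob_k(\cdot\mid s,a)\|_2 \leq \sqrt{2K\epsilon_{mod}(\delta)/\hat{f}_{k,s,a}}$, with $\hat{f}_{k,s,a} \gtrsim f_{min}\alpha$ supplied exactly by the $N_{clust} \geq \Omega(\log(1/\delta)/(f_{min}^2\alpha^2))$ condition. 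The square root in this bound is what produces the stated tolerance: requiring $\sqrt{K\epsilon_{mod}/(f_{min}\alpha)} \lesssim \Delta^2$ yields $\epsilon_{mod}(\delta/2) \leq C_3\Delta^4 f_{min}\alpha/K$. Your linear-in-$\epsilon_{mod}$ propagation, even if the singular-value bound held, would produce a condition of the form $\epsilon_{mod} \lesssim \Delta\sqrt{f_{min}\alpha/K}$ rather than the stated one, so the route you sketch is neither justified by the assumptions nor consistent with the theorem's dependence on $\Delta$, $f_{min}$, $\alpha$ and $K$.
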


Note that by Theorem~\ref{thm:model-est}, a sufficient condition for $\epsilon_{mod}(\delta/2) \leq \frac{C_3\Delta^4f_{min}\alpha}{K}$ is $N_{clust}T_n^{2/3} \geq \Omega\left(K^2t_{mix}^{2/3}S \frac{\log(1/\delta)}{\Delta^8f_{min}^3\alpha^3}\right)$. Under the conditions on $T_n$ in Theorem~\ref{thm:classification}, a suboptimal but sufficient condition on $N_{clust}$ is $N_{clust} = \Omega\left( K^2S\frac{\log(1/\delta)} {f_{min}^2\alpha^3\Delta^8}\right)$, which matches that for $N_{sub}$.

\section{Practical Considerations}\label{sec:in-practice}

\subsection{Subspace Estimation}\label{ssec:in-practice-subspace}

\textbf{Heuristics for choosing K:} One often does not know $K$ beforehand and often wants temporal features to guide the process of determining $K$, for example in identifying the number of groups of similar people represented in a medical study. We suggest a heuristic for this. One can examine how many large eigenvalues there are in the decomposition, via (1) ordering the eigenvalues of $\HBM_{sa} \; \forall s,a$ by magnitude, (2) taking the square of each to obtain the eigenvalue energy, (3) taking the mean or average over states and actions, and (4) plotting a histogram. See Figure~\ref{fig:eigenergy} in the appendix.

% \begin{figure}[h]
%     \centering
%     \includegraphics[width=8cm, height=5cm]{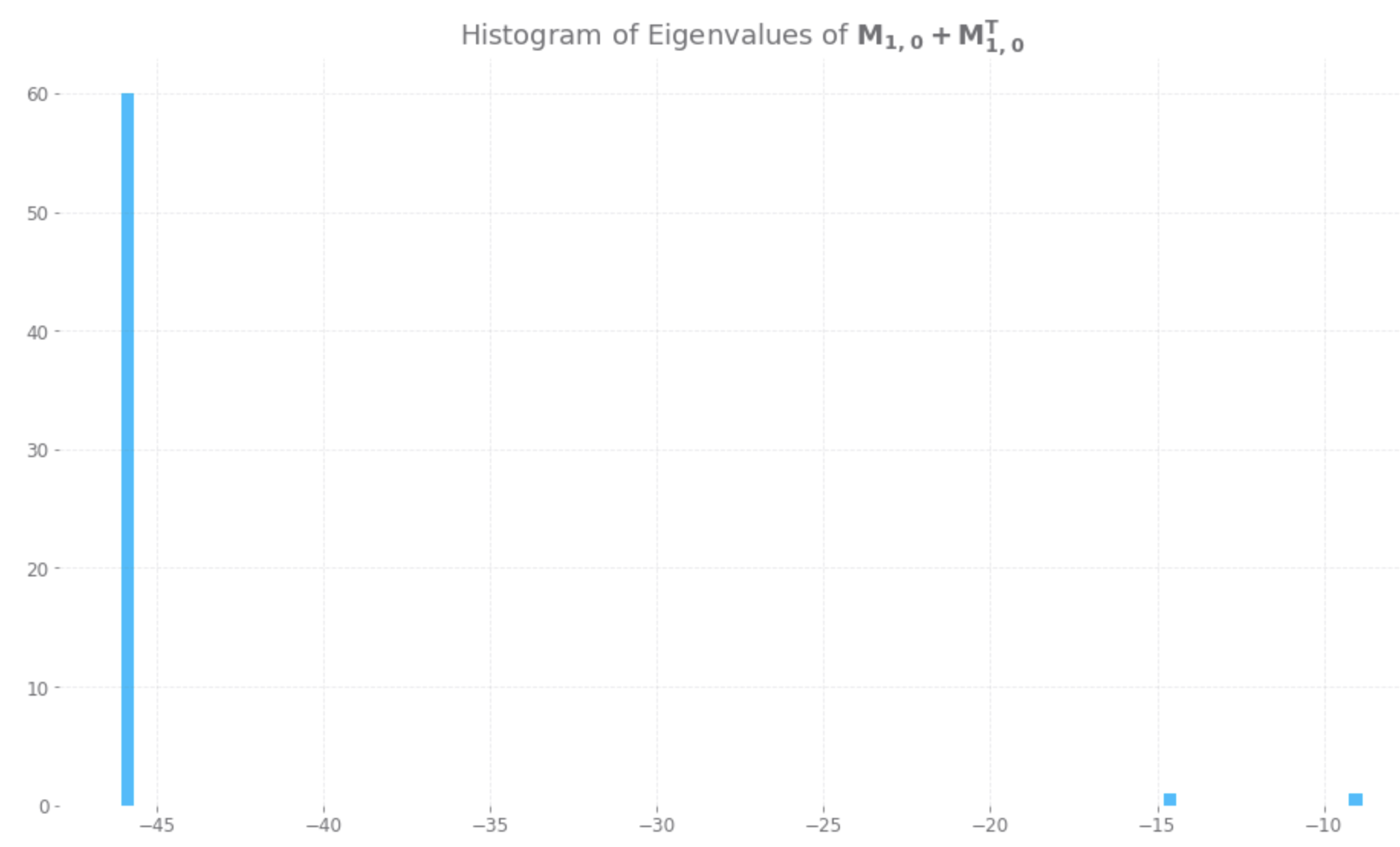}
%     \caption{Eigenvalues of $\HBM_{0,1} + \HBM_{0,1}^T$. Note how there are only $2=K$ large eigenvalues.}
%     \label{fig:eigours}
% \end{figure}

One can also consider running the whole process with different values of $K$ and choose the value of $K$ that maximises the likelihood or the AIC of the data (if one wishes the mixture to be sparse). However, \citet{fitzpatrick2022asymp} points out that such likelihood-based methods can lead to incorrect predictions for $K$ even with infinite data.

% \textbf{Note on software:} $V_{s,a}^T$ is the matrix of the $K$ eigenvectors for the top $K$ eigenvalues, but if some eigenvalues repeat, one may need to replace the set of the corresponding eigenvectors with orthonormal ones, depending on the software used. The user should check the eigenvalues by hand.

\subsection{Clustering}\label{ssec:in-practice-clustering}

\textbf{Picking $\beta$:} Choosing $\beta$ involves heuristically picking state-action pairs that have high frequency and "witness" enough model separation. We propose one method for this. For each $(s,a)$ pair, one first executes subspace estimation and then averages the value of $\dist_1(m,n)$ across all pairs of trajectories. Call this estimate $\Delta_{s,a}$, since it is a measure of how much model separation $(s,a)$ can "witness". We then compute the occupancy measure value $d(s,a)$ of $(s,a)$ in the entire set of observations. Making a scatter-plot of $\Delta_{s,a}$ against $d(s,a)$, we want a value of $\beta$ so that there are enough pairs from $\Freq_\beta$ in the top right. 

\textbf{Picking thresholds $\tau$:} The histogram of $\dist$ plotted will have many modes. The one at $0$ reflects distance estimates between trajectories belonging to the same hidden label, while all the other modes reflect distance between trajectories coming from various pairs of hidden labels. The threshold should thus be chosen between the first two modes. See Figure~\ref{fig:histacc} in the appendix.

\textbf{Picking $\lambda$:} In general, occupancy measures are different for generic policies interacting with MDPs and should be included in the implementation by choosing $\lambda<1$. The histogram for $\dist_2$ should indicate whether or not occupancy measures allow for better clustering (if they have the right number of well-separated modes). 
%import text from history

\textbf{Versions of the EM algorithm:} In our description of the EM algorithm, we only use next-state transitions as observations instead of the whole trajectory. So, we do not learn other parameters like the policy and the starting state's distribution for the EM algorithm. This makes sense in principle, because our minimal assumptions only talk about separation in next-state transition probabilities, and there is no guarantee that other information will help with classification. In practice, one should make a domain-specific decision on whether or not to include them.

\textbf{Initializing soft EM with cluster labels:} We also recommend that when one initializes the soft EM algorithm with results from the clustering step, one introduces some degree of uncertainty instead of directly feeding in the 1-0 clustering labels. That is, for trajectory $m$, instead of assigning $\mathbbm{1}(i=k_m)$ to be the responsibilities, make them say $0.8\cdot\mathbbm{1}(i\in \C_k) + 0.2/K$ instead. We find that this can aid convergence to the global maximum, and do so in our experiments.

%Terminate upon clustering step
%Separate subspace and clustering datasets

\section{Experiments}\label{sec:experiments}

We perform our experiments for MDPs on an 8x8 gridworld with $K=2$ elements in the mixture (from \cite{bruns2021model}). Unlike \citet{bruns2021model}, the behavior policy here is the same across both elements of the mixture to eliminate any favorable effects that a different behavior policy might have on clustering, so that we evaluate the algorithm on fair grounds. The first element is the "normal" gridworld, while the second is adversarial -- transitions are tweaked towards having a higher probability of ending up in the lowest-value adjacent state. The value is only used to adjust the transition structure in the second MDP, and has no other role in our experiments. The mixing time of this system is roughly $t_{mix} \approx 25$. We only use $\dist_1$ for the clustering, omitting the occupancy measures to parallel the theoretical guarantees. Including them would likely improve performance. We chose to perform the experiments with 1000 trajectories, given the difficulty of obtaining large numbers of trajectories in important real-life scenarios that often arise in areas like healthcare. 

\begin{figure}[h]
    \centering
    \includegraphics[width=8cm, height=6cm]{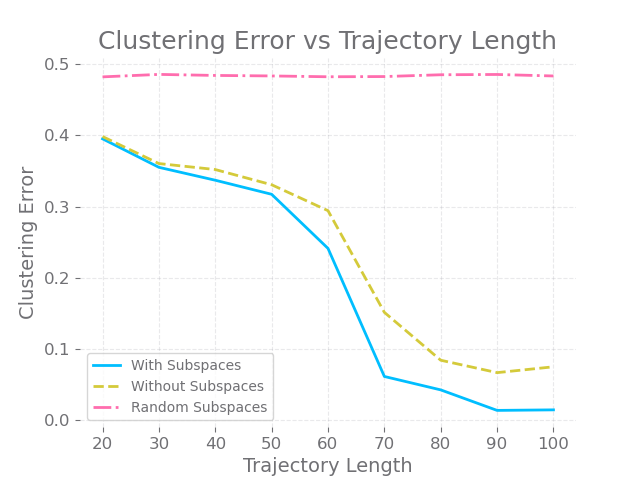}
    \caption{Clustering error v.s. trajectory length on 1000 trajectories, with a comparison between using $\textbf{V}_{s,a}^T$, $I_{S\times S}$ or a random projector to a $K$-dimensional subspace in Algorithm \ref{alg:clustering}. The same threshold was used for each trajectory length. Results averaged over 30 trials. The mixing time of this system is roughly $t_{mix} \approx 25$.}
    \label{fig:clust}
\end{figure}

Figure \ref{fig:clust} plots the error at the end of Algorithm~\ref{alg:clustering} (before refinement) while either using the projectors $\textbf{V}_{s,a}^T$ determined in Algorithm \ref{alg:subspace-est} ("With Subspaces"), replacing them with a random projector ("Random Subspaces") or with the identity matrix ("Without Subspaces"). The difference in performance demonstrates the importance of our structured subspace estimation step. Also note that past a certain point, between $T_n=60$ and $T_n=70 \sim 3t_{mix}$, the performance of our method drastically improves, showing that the dependence of our theoretical guarantees on the mixing time is reflected in practice as well. We briefly discuss the poor performance of choosing a random subspace in Appendix~\ref{sec:random-subspaces}.
\begin{figure}[h]
    \centering
    \includegraphics[width=8cm, height=6cm]{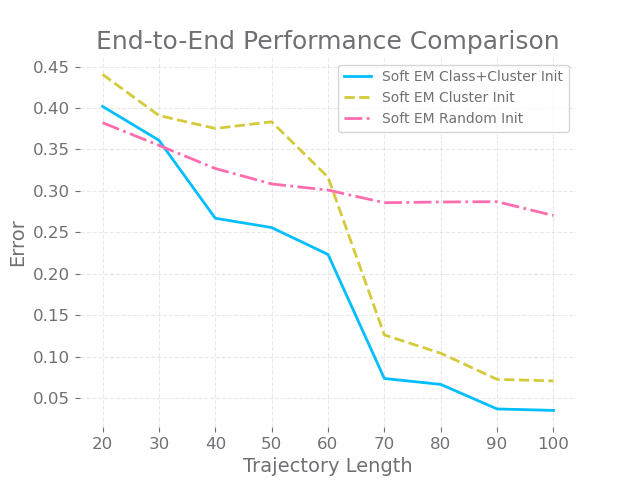}
    \caption{End-to-end error v.s. trajectory length on 1000 trajectories, comparing initializations of the soft EM algorithm using (1) random initializations, (2) models from $\N_{clust}$, and (3) classification and clustering labels from $\N_{clust}$ and $\N_{sub}$. Results averaged over 30 trials, with 30 random initializations for randomly-initialized EM within each trial.}
    \label{fig:emClust}
\end{figure}

In Figure~\ref{fig:emacc}, we benchmark our method's end-to-end performance against the most natural benchmark, the randomly initialized EM algorithm. We use the version of the soft EM algorithm that considers the entire trajectory to be our observation, and thus also includes policies and starting state distributions. So, we are comparing our method against the full power of the EM algorithm. We have three different plots, corresponding to (1) soft EM with random initialization, (2) Refining models obtained from the model estimation step applied to $\N_{clust}$ using soft EM on $\N_{clust} \cup \N_{sub}$, and (3) Refining labels for $\N_{clust}$ and $\N_{sub}$ using soft EM (the latter obtained from applying Algorithm~\ref{alg:classification} to $\N_{sub}$). We report the final label accuracies over the entire dataset, $\N_{clust} \cup \N_{sub}$. Remember that we can view refinement using soft EM as initializing soft EM with the outputs of our algorithms. Note that the plot for (3), which reflects the true end-to-end version of our algorithm, almost always outperforms randomly initialized soft EM. Also, for $T_n > 60$, both variants of our method outperform randomly initialized soft EM. We present a variant of Figure~\ref{fig:emClust} with hard EM included as Figure~\ref{fig:em-soft-hard} in the appendix.

\section{Discussion}

We have shown that we can recover the true trajectory labels with (1) the number of trajectories having only a linear dependence in the size of the state space, and (2) the length of the trajectories depending only linearly in the mixing time -- even before initializing the EM algorithm with these clusters (which would further improve the log-likelihood, and potentially cluster accuracy). End-to-end performance guarantees are provided in Theorem \ref{thm:endtoend}, and experimental results are both promising and in line with the theory. 

%\subsection{Online Classification and Guarantees}
%\kevin{chinmaya to finish}
%\cknote{summarize, other controlled processes. This plus online, separation is not required then.}

\subsection{Future Work}
\textbf{Matrix sketching:} The computation of $\dist_1(m,n)$ is computationally intensive, amounting to computing about $S \times A$ distance matrices. We could alternatively approximate the thresholded version of the matrix $\dist(m,n)$ (which in the ideal case is a rank-$K$ binary matrix) with ideas from \citet{musco2016kernel}.

\textbf{Function approximation:} The question of the right extension of our ideas to Markov chains and MDPs with large, infinite, or uncountable state spaces is very much open (at least, those whose transition kernel is not described by a linear dynamical systems). This is important, as many applications often rely on continuous state spaces.

\textbf{Other controlled processes:} \citet{poor2022mixdyn} learn a mixture of linear dynamical systems without control input. An extension to the case with control input will be very valuable. We believe that the techniques used in our work may prove useful in this, as well as for extensions to other controlled processes that may neither be linear nor Gaussian. 

%\textbf{Non-mixing case:} Our work heavily relies on mixing to obtain guarantees. We invite readers to explore the non-mixing case, to see if any theoretical results can be obtained or to see if the method still succeeds in practice even without meeting the mixing time. 

\bibliographystyle{apalike}
\bibliography{ref}

\begin{thebibliography}{}

\bibitem[Albert, 1991]{albert1991seizure}
Albert, P.~S. (1991).
\newblock A two-state markov mixture model for a time series of epileptic
  seizure counts.
\newblock {\em Biometrics}, 47(4):1371--1381.

\bibitem[Bruns-Smith, 2021]{bruns2021model}
Bruns-Smith, D.~A. (2021).
\newblock Model-free and model-based policy evaluation when causality is
  uncertain.
\newblock In {\em International Conference on Machine Learning}, pages
  1116--1126. PMLR.

\bibitem[Brunskill and Li, 2013]{multitask2013}
Brunskill, E. and Li, L. (2013).
\newblock Sample complexity of multi-task reinforcement learning.
\newblock {\em Uncertainty in Artificial Intelligence - Proceedings of the 29th
  Conference, UAI 2013}.

\bibitem[Bulteel et~al., 2016]{psychVAR2016}
Bulteel, K., Tuerlinckx, F., Brose, A., and Ceulemans, E. (2016).
\newblock Clustering vector autoregressive models: Capturing qualitative
  differences in within-person dynamics.
\newblock {\em Frontiers in Psychology}, 7.

\bibitem[Chades et~al., 2021]{hmMDP2021}
Chades, I., Carwardine, J., Martin, T., Nicol, S., Sabbadin, R., and Buffet, O.
  (2021).
\newblock Momdps: A solution for modelling adaptive management problems.
\newblock {\em Proceedings of the AAAI Conference on Artificial Intelligence},
  26(1):267--273.

\bibitem[Chen and Poor, 2022]{poor2022mixdyn}
Chen, Y. and Poor, H.~V. (2022).
\newblock Learning mixtures of linear dynamical systems.
\newblock {\em CoRR}, abs/2201.11211.

\bibitem[Fitzpatrick and Stewart, 2022]{fitzpatrick2022asymp}
Fitzpatrick, M. and Stewart, M. (2022).
\newblock Asymptotics for markov chain mixture detection.
\newblock {\em Econometrics and Statistics}, 22:56--66.
\newblock The 2nd Special issue on Mixture Models.

\bibitem[Gupta et~al., 2016]{gupta2016mix}
Gupta, R., Kumar, R., and Vassilvitskii, S. (2016).
\newblock On mixtures of markov chains.
\newblock In {\em NIPS}, pages 3441--3449.

\bibitem[Hallac et~al., 2017]{toeplitztimeseries2017}
Hallac, D., Vare, S., Boyd, S., and Leskovec, J. (2017).
\newblock Toeplitz inverse covariance-based clustering of multivariate time
  series data.
\newblock In {\em Proceedings of the 23rd ACM SIGKDD International Conference
  on Knowledge Discovery and Data Mining}, KDD '17, page 215–223, New York,
  NY, USA. Association for Computing Machinery.

\bibitem[Huang et~al., 2021]{timeseriescoresetsneurips2021}
Huang, L., Sudhir, K., and Vishnoi, N. (2021).
\newblock Coresets for time series clustering.
\newblock In Ranzato, M., Beygelzimer, A., Dauphin, Y., Liang, P., and Vaughan,
  J.~W., editors, {\em Advances in Neural Information Processing Systems},
  volume~34, pages 22849--22862. Curran Associates, Inc.

\bibitem[Kong et~al., 2020]{kakade2020meta}
Kong, W., Somani, R., Song, Z., Kakade, S.~M., and Oh, S. (2020).
\newblock Meta-learning for mixed linear regression.
\newblock {\em CoRR}, abs/2002.08936.

\bibitem[Kwon et~al., 2021]{mannor2021latent}
Kwon, J., Efroni, Y., Caramanis, C., and Mannor, S. (2021).
\newblock {RL} for latent mdps: Regret guarantees and a lower bound.
\newblock {\em CoRR}, abs/2102.04939.

\bibitem[Larsen and Nelson, 2017]{JLlemmatight2017}
Larsen, K.~G. and Nelson, J. (2017).
\newblock Optimality of the johnson-lindenstrauss lemma.
\newblock In {\em 2017 IEEE 58th Annual Symposium on Foundations of Computer
  Science (FOCS)}, pages 633--638.

\bibitem[Liao, 2005]{timeseriesclusteringsurvey2005}
Liao, T.~W. (2005).
\newblock Clustering of time series data{\textemdash}a survey.
\newblock {\em Pattern Recognition}, 38(11):1857--1874.

\bibitem[Maharaj, 2000]{maharajtimeseries2000}
Maharaj, E.~A. (2000).
\newblock Cluster of time series.
\newblock {\em Journal of Classification}, 17(2):297--314.

\bibitem[McCulloch and Tsay, 1994]{mcculloch1994econtimeseries}
McCulloch, R. and Tsay, R. (1994).
\newblock Statistical analysis of economic time series via markov switching
  models.
\newblock {\em Journal of Time Series Analysis}, 15(5):523--539.

\bibitem[Musco and Musco, 2016]{musco2016kernel}
Musco, C. and Musco, C. (2016).
\newblock Recursive sampling for the nyström method.

\bibitem[Vempala and Wang, 2004]{vempala2004spectral}
Vempala, S. and Wang, G. (2004).
\newblock A spectral algorithm for learning mixture models.
\newblock {\em J. Comput. Syst. Sci}, 68:2004.

\bibitem[Vidyasagar, 2010]{vidyasagar2010betamixing}
Vidyasagar, M. (2010).
\newblock {\em Learning and Generalization: With Applications to Neural
  Networks}.
\newblock Springer Publishing Company, Incorporated, 2nd edition.

\bibitem[Wong and Li, 2000]{wonglibiology2000}
Wong, C.~S. and Li, W.~K. (2000).
\newblock On a mixture autoregressive model.
\newblock {\em Journal of the Royal Statistical Society. Series B (Statistical
  Methodology)}, 62(1):95--115.

\bibitem[Wong and Shen, 1995]{wongshen1995inequalities}
Wong, W.~H. and Shen, X. (1995).
\newblock {Probability Inequalities for Likelihood Ratios and Convergence Rates
  of Sieve MLES}.
\newblock {\em The Annals of Statistics}, 23(2):339 -- 362.

\bibitem[Yu, 1994]{bin1994mixing}
Yu, B. (1994).
\newblock {Rates of Convergence for Empirical Processes of Stationary Mixing
  Sequences}.
\newblock {\em The Annals of Probability}, 22(1):94 -- 116.

\end{thebibliography}

\newpage
\appendix
\onecolumn

\section{Additional Figures}\label{sec:additional-figures}

\subsection{Determining $K$}

See Figure~\ref{fig:eigenergy} below, following the discussion in section~\ref{ssec:in-practice-subspace}.
\begin{figure}[h]
    \centering
    \includegraphics[width=8cm, height=6cm]{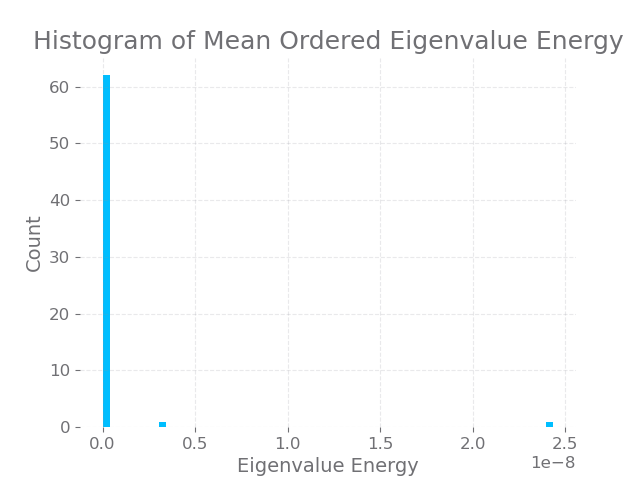}
    \caption{Histogram of the average ordered eigenvalue energy (the square of the eigenvalue) where the mean is taken over states and actions. There are two large eigenvalues, corresponding to $K=2$.}
    \label{fig:eigenergy}
\end{figure}

\subsection{Block Matrix of Raw Distance Estimates}
See Figure~\ref{fig:blocks} below, which presents the raw distance matrix before thresholding, to provide a sense of the quality of the pairwise distance estimates themselves. These could also be used for agglomerative clustering, for example.
\begin{figure}[h]
    \centering
    \includegraphics[width=8cm, height=6cm]{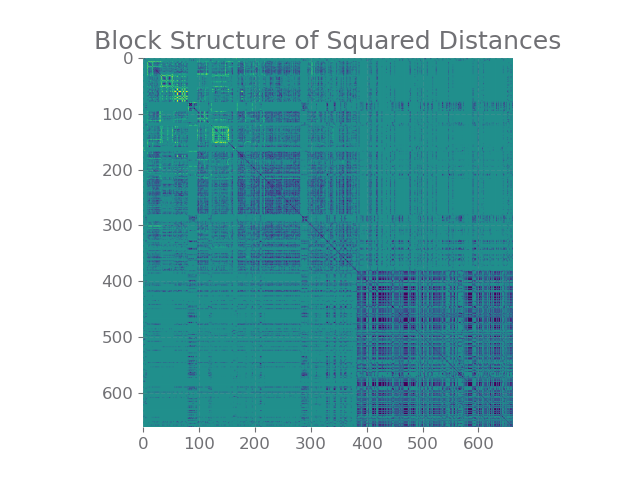}
    \caption{Block structure of the matrix of squared pairwise distance estimates (after sorting).}
    \label{fig:blocks}
\end{figure}

\subsection{Determining The Threshold $\tau$}
See Figure~\ref{fig:histacc} below, following the discussion in section~\ref{ssec:in-practice-clustering}.
\begin{figure}[H]
    \centering
    \includegraphics[width=7cm, height=5cm]{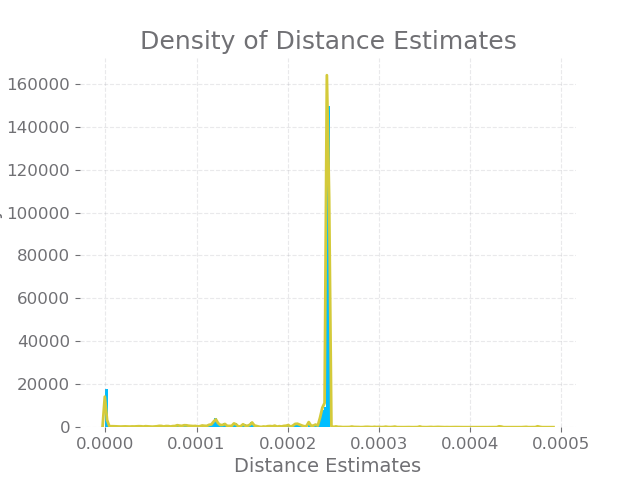}
    \includegraphics[width=8cm, height=6cm]{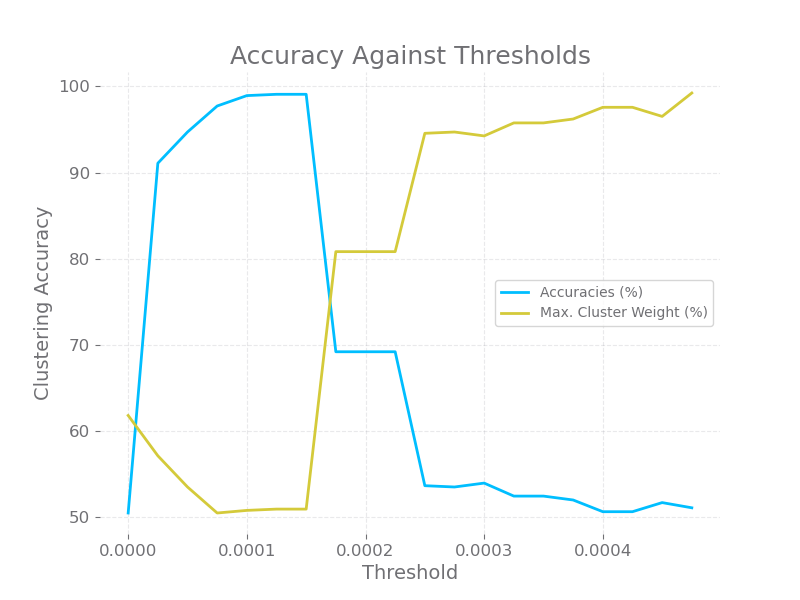}
    \caption{Histogram (and KDE) of pairwise squared distance estimates in projected subspace above, and accuracy against thresholds below. Note how there is a spurious mode around the 0.00015 mark, and picking any threshold past it yields a significant drop in accuracy.}
    \label{fig:histacc}
\end{figure}

\subsection{Local Extrema in EM}
See Figure~\ref{fig:emacc} below, illustrating how EM often gets stuck in suboptimal local extrema, given by the low final log-likelihood values recorded in the scatterplot.
\begin{figure}[H]
    \centering
    \includegraphics[width=8cm, height=6cm]{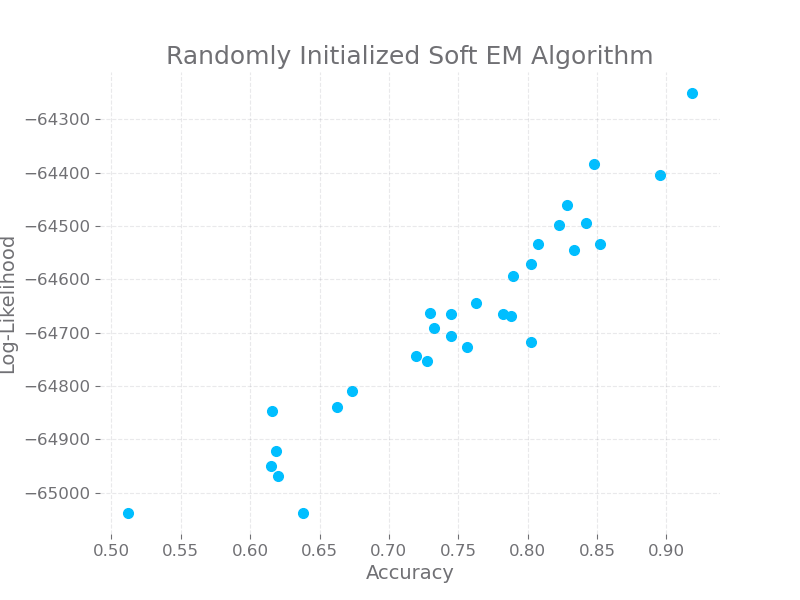}
    \caption{Scatter-plot of likelihoods v.s. clustering accuracy achieved by the randomly-initialized soft EM algorithm over 30 trials. Randomly-initialized soft EM does not achieve the global maximum all of the time.}
    \label{fig:emacc}
\end{figure}

\subsection{Comparing End-To-End Performance Using Soft and Hard EM} 

We compare various initializations of EM -- (1) random initializations, (2) models from $\N_{clust}$, and (3) classification and clustering labels from $\N_{clust}$ and $\N_{sub}$ -- this time using both soft and hard EM.

\begin{figure}[H]
    \centering
    \includegraphics[width=8cm, height=6cm]{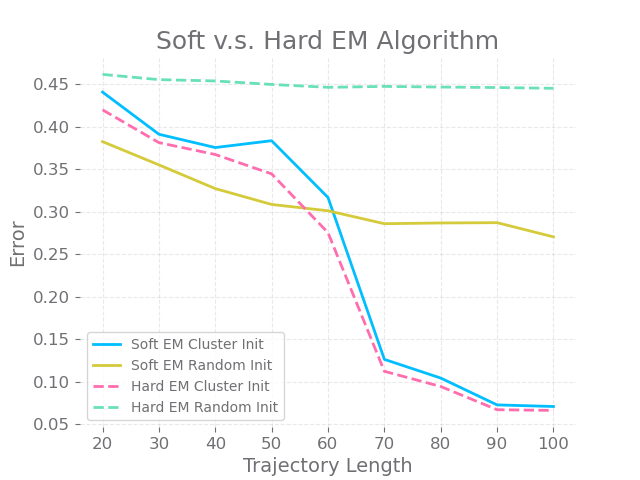}
    \caption{End-to-end error v.s. trajectory length on 1000 trajectories, comparing various initializations of the soft and the hard EM algorithm. Results averaged over 30 trials, with 30 random initializations for randomly-initialized EM within each trial.}
    \label{fig:em-soft-hard}
\end{figure}

\newpage
\newpage
\section{Discussion on Using Random Projections}\label{sec:random-subspaces}

We note that those familiar with the intuition behind the Johnson-Lindenstrauss lemma would guess that a projection to a random $n$-dimensional subspace for low $n$ would preserve distances with good accuracy. However, note that the bound on the dimension $n$ needed to preserve distances between our $N_{clust}$ estimators up to a multiplicative distortion of $1 \pm \epsilon$ is $\frac{\log(N_{clust})}{\epsilon^2}$. This bound is known to be tight, see for example \citet{JLlemmatight2017}. Upon thought, this shows that to get good distortion bounds (which will contribute to the deviation between distance estimates and the thresholds), we need a large dimension, interpreted as being affected by the $1/\epsilon^2$. In fact, as soon as $\log(N_{clust})$ exceeds $1$, we will need a dimension of order $1/\Delta^2$, while $K$ can be arbitrarily small compared to this. 

In our case, $K=2$, and we see that we don't get good performance using a random subspace until we hit dimension 50, where the maximum dimension is $S=64$. Clearly, the $1/\epsilon^2$ term in the Johnson-Lindenstrauss lemma drastically affects the performance of using random subspaces. Using a random subspace of dimension $50$ for $S=64$ is much closer to not projecting at all than to using a subspace of dimension $2$.

\begin{figure}[H]
    \centering
    \includegraphics[width = 8cm, height = 6cm]{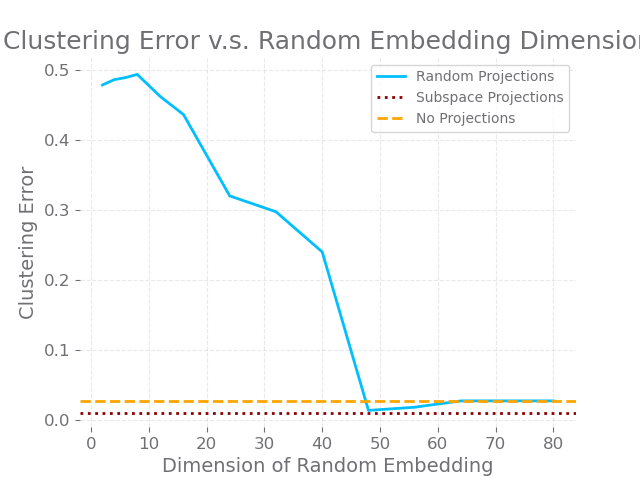}
    \caption{Clustering error using random projections of varying dimension for a trajectory length of 100, benchmarked against the performance of the "with subspace" and "without subspace" versions.}
    \label{fig:hl-random-embedding}
\end{figure}
\newpage
\section{Details of the EM Algorithm}\label{sec:em-algorithm-details}

We describe the E and M steps for hard EM below first, for simplicity. 

\textbf{M-step:} Given the cluster labels, we can estimate each model with the MLE as:
\begin{align*}
    \hat{\prob}_k(s' \vert s,a) &\gets \frac{\sum_{n \in \N_{clust}} \ind_{n \in \C_k}N(n,s,a,s')}{\sum_{n \in \N_{clust}} \ind_{n \in \C_k}N(n,s,a)}\\
    \hat{f}_k &\gets \frac{\sum_{n \in \N_{clust}} \ind_{n \in \C_k}}{N_{clust}} = \frac{|\C_k|}{N_{clust}}
\end{align*}

Readers can convince themselves that this is truly the MLE estimate by making the following observation. We can write the log-likelihood of the predicted clusters $\C_k$ and estimated models as $\sum_{k=1}^K \sum_{n\in \N_{clust}} \ind_{n \in \C_k} \ell(\hat{\prob}_k, \hat{f}_k, n)$, where $\ell(\hat{\prob}_k, \hat{f}_k, n) = \log\left(f_k \prod_{s,s',a} (\hat{\prob}_k(s'\mid s,a))^{N(n,s,a,s')}\right)$. The rest of the derivation mimics the well-known and straightforward computation for Markov chains, using Lagrange multipliers to constrain the estimates to probability distributions.
% We can similarly estimate the policies under each mixture element $\hat{\pi}_k(a|s) \gets \frac{\sum_{n \in \C_k} N(n,s,a)}{\sum_{n \in \C_k} N(n, s)}$ and starting state distributions $\hat{p}_{k,s_{0,n}} \gets \frac{1}{|\C_k|}\sum_{n \in \C_k} N(n, s_0)$.

% Call the starting state of each trajectory $s_{0,n}$ and denote the probability of starting at a state $s$ in label $k$ by $p_{k,s}$ and call its estimate $\hat{p}_{k,s}$. We can write the log-likelihood of the predicted clusters $\C_k$ and estimated models as $\sum_{k=1}^K \ell(\hat{\prob}_k, \hat{\pi}_k, \hat{p}_{k,s_{0,n}})$, where $\ell(\hat{\prob}_k, \hat{\pi}_k, \hat{p}_{k,s_{0,n}}) = \sum_{n \in \C_k} \log\left(\hat{p}^{(k)}_{s_{0,n}} \prod_{s,s',a} (\hat{\prob}_k(s'\mid s,a)\hat{\pi}_k(a|s))^{N(n,s,a,s')}\right)$

\textbf{E-step:} On new or unseen data, assign cluster membership according to the following rule:
\begin{equation}
     k_m \gets \argmax_k \ell(\hat{\prob}_k, \hat{f}_k, m) + \log(\hat{f_i})
\end{equation}
where $\ell(\hat{\prob}_k, m)$ is as above.

Note that for soft EM, we can replace every occurrence of $\ind_{n \in \C_k}$ in the M-step with $p_n(k)$, where $p_n(\cdot)$ is the posterior for trajectory $n$ having label $k$, which is constantly updated during soft EM. For the E-step, we replace the argmax computation by a computation of $p_n(k) = \prob(k_n = k \mid \hat{\prob}_k, \hat{f}_k, 1 \leq k \leq K)$. Intuitively described, in hard EM, we recompute the values of $\ind_{n \in \C_k}$ using the argmax during the E-step, while in soft EM, we recompute the values of $p_n(k)$.
\newpage
\section{The Classification Algorithm}\label{sec:classification-algorithm}

Note that we define a new quantity, $\hat{f}_{k,s,a}$, which is the proportion of trajectories with label $k$ among all trajectories in $\N_{clust}$ where $s,a$ is observed.

\begin{algorithm}
\centering
	\caption{Classification}
    \label{alg:classification}
	\begin{algorithmic}[1]
	    \STATE \textbf{Input:} Clusters $\C_k \subset \N_{clust}$, models $\hat{\prob}_k(\cdot \mid s,a)$ estimated from $\C_k$, and a set $\N_{class}$ of trajectories to classify.
	    \STATE Compute $\hat{f}_{k,s,a}$ for all $k,s,a$.
		\STATE Compute $\Tilde{\textbf{M}}_{s,a} = \sum_{k=1}^K \hat{f}_{k,s,a}\hat{\prob}_{k}(\cdot \vert s,a)\hat{\prob}_{k}(\cdot \vert s,a)^T$ and store the orthogonal projector $\Tilde{\textbf{V}}_{s,a}^T$ to its top-K eigenspace, for each $(s,a)$.
		\STATE Compute $\hat{\textbf{d}}_k = \frac{1}{|\C_k|}\sum_{n \in \C_k} \frac{N(n,s,a)}{G}$ for all $k$.
		\STATE Compute $\Tilde{D} = \sum_{k=1}^K \hat{\textbf{d}}_k\hat{\textbf{d}}_{k}^T$ and store the orthogonal projector $\Tilde{\textbf{U}}^T$ to its top-K eigenspace.
		\STATE Compute the set $SA_\beta$ by picking $(s,a)$ pairs with occurrence more than $\beta$
		\STATE $\textbf{d}_{n,1}, \textbf{d}_{n,2} \gets \textbf{0} \in \Real^{SA}$
		\FOR{$(i,s,a) \in \{1,2\} \times S \times A$}
		\STATE Compute $\textbf{N}(n,i,s,a, \cdot)$, $N(n,i,s,a),$ $\;\; \forall n$
		\STATE $\hat{\prob}_{n,i}(\cdot \vert s,a) \gets \frac{\textbf{N}(n,i,s,a,\cdot)}{N(n,i,s,a)}\ind_{N(n,i,s,a) \neq 0},$  $\;\; \forall n$
		\STATE $[\hat{\textbf{d}}_{n,i}]_{s,a} \gets \frac{N(n,i,s,a)}{G},$ $\;\; \forall n$
		\ENDFOR
        \FOR{$(n,k) \in \N_{clust} \times \{1, 2, \dots K\}$}
        \FOR{$(i,s,a) \in \{1,2\} \times S \times A$}
        \STATE $\hat{\bm{\Delta}}_{i,s,a} := (\hat{\prob}_{n,i}(\cdot \mid s,a) - \hat{\prob}_{k}(\cdot \vert s,a))\Tilde{\textbf{V}}^T_{s,a}$
        \ENDFOR
            \STATE $\dist_1(n,k) := \max_{s,a} \hat{\bm{\Delta}}_{1,s,a}^T \hat{\bm{\Delta}}_{2,s,a}$
        \STATE $\dist_2(n,k) := (\hat{\textbf{d}}_{n,1} - \hat{\textbf{d}}_{k})^T\textbf{U}\textbf{U}^T(\hat{\textbf{d}}_{n,2} - \hat{\textbf{d}}_{k})$
        \STATE $\dist(n,k) := \lambda\dist_1(n,k) + (1-\lambda)\dist_2(n,k)$
        \ENDFOR
        \STATE Assign $k_n \gets \argmin_k \dist(n,k)$ for each $n$.
	\end{algorithmic}
\end{algorithm}
\newpage
\section{Proof of Theorem~\ref{thm:subspace-est}}

\subsection{Proof of the theorem}
We recall the theorem here.

\SubspaceEst*

\begin{remark}
We can convert the $\alpha^3$ in the denominator to an $\alpha$ at the cost of making $T_n$ more heavily dependent on $\alpha$ (more than just $\log(1/\alpha)$). Intuitively, $\alpha$ accounts for the probability of not observing $s,a$, so this is just saying that we can shift the onus for that from the number of trajectories to their length. We chose not to do that since we are trying to minimize the length of trajectories needed, and assume that we have access to many trajectories.
\end{remark}

\begin{proof}

The main input is the proposition below, proved in the next section.

\begin{restatable}{prop}{SubspaceEstProp}\label{prop:subspace-est}
Consider $L<K$ models with labels $j_l$, $1 \leq l \leq L$, with $d_{min}(s,a):= \min_l d_{j_l}(s,a)$. Consider the output $\textbf{V}_{s,a}^T$ of Algorithm~\ref{alg:subspace-est}. Let $f_{min} = \min_l f_{j_l}$ be the minimum frequency across these models in the mixture. Remember that each trajectory has length $T_n$. Then we have the guarantee that with probability at least $1-\delta$

$$\|\prob_j(\cdot \mid s,a) - \textbf{V}_{s,a}\textbf{V}_{s,a}^T\prob_j(\cdot \mid s,a)\|_2 $$ 
is bounded above by 
$$\sqrt{\frac{4K}{f_{min}d_{min}(s,a)} \left( \sqrt{\frac{128}{N_{sub} \cdot d_{min}(s,a)} (2S\log(12)+\log(4/\delta))} +  \left(\frac{1}{2} \right)^{\frac{T_n}{8Gt_{mix}}}\right)}$$

for all $j \in \{j_l \mid 1 \leq l \leq L\}$, when $N_{sub} \geq \frac{32}{d_{min}(s,a)^2}\log\left(\frac{1}{\delta}\right)$ and $\frac{T_n}{8t_{mix}} > \frac{G\log (48G/ d_{min}(s,a))}{\log 2}$.
\end{restatable}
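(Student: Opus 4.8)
The plan is to split the proof into a clean matrix-perturbation argument, which I expect to be short and gap-free, and a genuinely hard matrix-concentration argument. First I would introduce the symmetric target matrix $\M_{s,a} := \sum_{l} w_l\, \prob_{j_l}(\cdot\mid s,a)\prob_{j_l}(\cdot\mid s,a)^T$, the conditional second-moment matrix that $\HBM_{s,a}+\HBM_{s,a}^T$ estimates. Its range is exactly $\mathspan(\prob_{j_l}(\cdot\mid s,a))_l$, it has rank at most $K$, and the weight of each retained model is of order $f_{min}\,d_{min}(s,a)$ (its mixture frequency times the order-$d_{min}(s,a)$ chance that $(s,a)$ is actually recorded). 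Writing $P$ for the top-$K$ eigenprojector returned by the SVD (so $\textbf{V}_{s,a}\textbf{V}_{s,a}^T = P$), the target of the bound is $\|(I-P)\prob_j\|_2$. Since $\M_{s,a}\succeq w_j\,\prob_j\prob_j^T$ with $w_j \gtrsim f_{min}d_{min}(s,a)$, I immediately get $w_j\|(I-P)\prob_j\|_2^2 \le \mathrm{Tr}\big((I-P)\M_{s,a}(I-P)\big)$, which reduces the whole problem to controlling the ``leaked energy'' of the target outside the estimated subspace.

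The second step is a gap-free trace bound. Set $E := (\HBM_{s,a}+\HBM_{s,a}^T) - \M_{s,a}$ and let $P^\ast$ project onto $\Image(\M_{s,a})$. Using $\mathrm{Tr}((I-P)\M_{s,a}(I-P)) = \mathrm{Tr}(\M_{s,a}) - \mathrm{Tr}(P\,\M_{s,a})$, the optimality of the top-$K$ eigenprojector ($\mathrm{Tr}(P(\M_{s,a}+E)) \ge \mathrm{Tr}(P^\ast(\M_{s,a}+E))$), and $\mathrm{Tr}(P^\ast\M_{s,a}) = \mathrm{Tr}(\M_{s,a})$, I obtain
$$\mathrm{Tr}\big((I-P)\M_{s,a}(I-P)\big) \;\le\; \langle P - P^\ast,\, E\rangle \;\le\; \|P-P^\ast\|_\ast\,\|E\|_{op} \;\le\; 2K\,\|E\|_{op}.$$
The last inequality is the crux: $P-P^\ast$ is a difference of rank-$\le K$ projectors, so its nuclear norm is at most $2K$. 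This yields the prefactor $K$ and, crucially, needs no spectral gap or conditioning on $\M_{s,a}$ — which is essential because Assumption~\ref{as:model_sep} gives no control over the Gram matrix of the $\prob_{j_l}$. Combining with the first step gives $\|(I-P)\prob_j\|_2^2 \le \tfrac{2K}{w_j}\|E\|_{op} \le \tfrac{4K}{f_{min}d_{min}(s,a)}\|E\|_{op}$, exactly the claimed form once $\|E\|_{op}$ is matched to the statistical-plus-mixing term, and also explaining why the final bound has a square root and a $K/(f_{min}d_{min})$ factor.

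The final and hardest step is to show $\|E\|_{op} \le \sqrt{\tfrac{128}{N_{sub}d_{min}(s,a)}(2S\log 12 + \log(4/\delta))} + (1/2)^{T_n/(8Gt_{mix})}$. I would pass to a $\tfrac12$-net of the sphere (size $\le 12^{S}$, contributing the $2S\log 12$), reducing the operator norm to uniform control of bilinear forms $u^T E v$. Each such form is an average over the $N_{traj}(s,a)\gtrsim N_{sub}\,d_{min}(s,a)$ trajectories observing $(s,a)$ in both segments (the $N_{sub}\ge 32\,d_{min}^{-2}\log(1/\delta)$ condition guarantees this count via binomial concentration) of the double-estimator products $u^T\hat\prob_{n,1}\,\hat\prob_{n,2}^T v$. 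I would then separate two error sources: the deviation of each average from its conditional mean, and the bias of that conditional mean away from $\prob_{j_l}\prob_{j_l}^T$. For the first, I would condition on the random multiset of recorded $(s,a)$ visits to tame the random denominators $N(n,i,s,a)$, apply a conditional Hoeffding bound, and remove the conditioning, producing the $\sqrt{(2S\log12+\log(4/\delta))/(N_{sub}d_{min})}$ term.

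The bias term is where the real difficulty lies, and I expect it to be the main obstacle. Because $\Omega_1,\Omega_2$ are only \emph{nearly} independent and the single-step sub-blocks within a segment are themselves temporally dependent, the conditional mean is not $\prob_{j_l}\prob_{j_l}^T$. I would invoke the blocking technique of \citet{bin1994mixing}: replace the true joint law $\chi$ by a product surrogate $Q$ over the $G$ blocks and charge the discrepancy to $\beta$-mixing coefficients that decay geometrically in the block gap divided by $t_{mix}$. Summing over the $G$ blocks yields the geometric factor $(1/2)^{T_n/(8Gt_{mix})}$ and requires precisely the stated condition $T_n/(8t_{mix}) > G\log(48G/d_{min})/\log 2$. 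Handling the random denominators, the within-trajectory dependence, and the cross-segment dependence simultaneously — all under the surrogate measure — is the delicate part, whereas Steps~1--2 are short and make no use of any eigengap.
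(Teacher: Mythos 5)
Your proposal follows essentially the same route as the paper: the same reduction of the projection error to $\sqrt{\tfrac{4K}{f_{min}d_{min}(s,a)}\|E\|_{op}}$ (the paper cites Lemma 3 of \citet{poor2022mixdyn} for this step, whereas you re-derive it via the gap-free trace/nuclear-norm argument with $\|P-P^*\|_*\le 2K$), the same $12^S$-net plus conditional Hoeffding bound for the concentration term, the same Hoeffding lower bound on $N_{traj}(s,a)$, and the same use of \citet{bin1994mixing}'s blocking technique to control the segment- and sub-block-level mixing bias. One small correction: the target matrix $\M_{s,a}$ must be the full conditional second moment, summing over all $K$ mixture components with weights $\prob(k_n=j\mid \B_n)$, not only over the $L$ labels $j_l$ under consideration — otherwise $E$ absorbs the omitted rank-one terms, which are order one rather than small; since the full matrix still has rank at most $K$, your gap-free argument goes through unchanged with the weight lower bound $\prob(k_n=j_l\mid\B_n)\ge f_{min}d_{min}(s,a)/2$ applied only to the labels of interest.
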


For a state-action pair with $d_{min}(s,a) \geq \alpha/3$, the conditions simplify to $N_{sub} \geq \Omega\left( \frac{\log(1/\delta)}{\alpha^2}\right)$ and $T_n \geq \Omega(Gt_{mix}\log(G/\alpha))$. We set $G = \left( \frac{T_n}{t_{mix}}\right)^{\frac{2}{3}}$ to get bounds that only depend on $T_n$. Note that this means a sufficient condition on $T_n$ is $T_n \geq \Omega(t_{mix}\log^4(1/\alpha))$ (one can show this with an elementary computation). Also note that
$$\sqrt{\frac{S+\log(1/\delta)}{N_{sub} \cdot \alpha}} \leq \sqrt{\frac{S\log(1/\delta)}{N_{sub} \cdot \alpha}} $$
Then with probability at least $1-\delta$, the following bound holds for any label $j=j_l$ for some $l$.
$$\|\prob_j(\cdot \mid s,a) - \textbf{V}_{s,a}\textbf{V}_{s,a}^T\prob_j(\cdot \mid s,a)\|_2 \leq O\left(\sqrt{\frac{K}{f_{min}\alpha} \left( \sqrt{\frac{S\log(1/\delta)}{N_{sub} \cdot \alpha} } +  \left(\frac{1}{2} \right)^{\frac{1}{8}\left(\frac{T_n}{t_{mix}}\right)^{1/3}}\right)}\right)$$

So, there is a universal constant $C_2$ so that for $T_n > C_2 t_{mix} \log^3\left( \frac{f_{min}N_{sub} \alpha}{KS\log(1/\delta)}\right)$, 

$$\left(\frac{1}{2} \right)^{\frac{1}{8}\left(\frac{T_n}{t_{mix}}\right)^{1/3}} \leq C'\frac{K}{f_{min}} \left( \sqrt{\frac{S}{N_{sub} \cdot \alpha^3} \log\left(\frac{1}{\delta}\right)} \right)$$

While for $T_n = O\left( t_{mix} \log^3\left( \frac{f_{min}N_{sub} \alpha}{KS\log(1/\delta)}\right)\right)$,

$$\frac{K}{f_{min}} \left( \sqrt{\frac{S}{N_{sub} \cdot \alpha^3} \log\left(\frac{1}{\delta}\right)} \right) \leq O\left(\left(\frac{1}{2} \right)^{\frac{1}{8}\left(\frac{T_n}{t_{mix}}\right)^{1/3}}\right)$$

So, combining all these, for $N_{sub} = \Omega\left( \frac{\log(1/\delta)}{\alpha^2} \right)$, $T_n =\Omega(t_{mix}\log^4(1/\alpha))$\\

\begin{itemize}
    \item For $T_n = \Omega\left(  t_{mix} \log^3\left( \frac{f_{min}N_{sub} \alpha}{KS\log(1/\delta)}\right)\right)$ 

    $$\epsilon_{sub}(\delta) = O\left(\sqrt{ \frac{K}{f_{min}} \left( \sqrt{\frac{S}{N_{sub} \cdot \alpha^3} \log\left(\frac{1}{\delta}\right)} \right)}\right)$$
    
    \item While for $T_n  = O\left(  t_{mix} \log^3\left( \frac{f_{min}N_{sub} \alpha}{KS\log(1/\delta)}\right)\right)$

    $$\epsilon_{sub}(\delta) = O\left( \left(\frac{1}{2} \right)^{\frac{1}{16}\left(\frac{T_n}{t_{mix}}\right)^{1/3}}\right)$$
\end{itemize}

\end{proof}

\subsection{Proof of the Proposition~\ref{prop:subspace-est}}

We recall the proposition here.

\SubspaceEstProp*

\begin{remark}
We should point out that we will only need $L=2$ for subsequent theorems. Also, remember that only $s,a$ with $d_{min}(s,a) > \alpha$ will be relevant in subsequent theorems, with $\alpha$ as in our assumption.
\end{remark}

\begin{proof}
For brevity of notation, we will denote $c_{n,i} := N(n,i,s,a)$, $\textbf{w}_{n,i} := N(n,i,s,a, \cdot)$ and suppress the $(s,a)$ dependence. We will first need the following lemma which guarantees that we can get past mixing and concentration hurdles with our estimator, modulo actually observing $s,a$ in both segments.

\begin{restatable}{lemma}{HatVsTilde}\label{lem:h-hat-h-tilde}
Let $\B_n$ be the event given by $n \in \N_{traj}(s,a)$, which is the same as $c_{n,1}c_{n,2} \neq 0$ and let

$$\textbf{M}_{s,a} = \sum_{j=1}^K \prob(k_n=j \mid \B_n)\prob_j(\cdot \mid s,a) \prob_j(\cdot \mid s,a)^T$$.

Call our estimator $\hat{\textbf{M}}_{s,a}$. Then we know that
$$\hat{\textbf{M}}_{s,a} = \frac{1}{N_{traj}(s,a)} \sum_n \hat{\prob}_{n,1}(\cdot \mid s,a) \hat{\prob}_{n,2}(\cdot \mid s,a)^T$$
and we have

$$\|\hat{\textbf{M}}_{s,a} - \textbf{M}_{s,a}\| < \sqrt{\frac{32}{N_{traj}(s,a)} (2S\log(12)+\log(\frac{2}{\delta}))} + \frac{48G}{d_{min}(s,a)}\left(\frac{1}{4}\right)^{\frac{T_n}{8Gt_{mix}}}$$

\end{restatable}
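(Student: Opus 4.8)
The plan is to bound the operator norm $\|\hat{\textbf{M}}_{s,a} - \textbf{M}_{s,a}\|$ by a discretization-plus-concentration argument, treating the non-independence of the within-trajectory data as a separate additive error. First I would reduce the spectral norm to a supremum over a finite net: since $\|\textbf{A}\| = \sup_{u,v \in \Sph^{S-1}} u^T \textbf{A} v$, taking a $1/4$-net of $\Sph^{S-1}$ of cardinality at most $12^S$ gives $\|\textbf{A}\| \leq 2\max_{u,v} u^T \textbf{A} v$ over the net, and a union bound over the at most $12^{2S}$ pairs $(u,v)$ produces the $2S\log 12$ contribution. It then suffices, for each fixed $(u,v)$, to control the scalar average $\frac{1}{N_{traj}(s,a)}\sum_n X_n$ with $X_n := (u^T \hat{\prob}_{n,1}(\cdot\mid s,a))(\hat{\prob}_{n,2}(\cdot\mid s,a)^T v)$ around its target $u^T \textbf{M}_{s,a} v$; note $X_n \in [-1,1]$ because empirical next-state distributions have $\ell_2$ norm at most one.

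Next I would introduce a reference law $Q$ for the within-trajectory observations under which both (i) the two segments $\Omega_1,\Omega_2$ and (ii) the $G$ single-step sub-blocks within each segment are independent. Under $Q$, conditioned on the label $k_n = j$ and on the event $\B_n$, the double estimator is unbiased for $\prob_j(\cdot\mid s,a)\prob_j(\cdot\mid s,a)^T$, so $\E_Q[X_n \mid \B_n] = u^T \textbf{M}_{s,a} v$ exactly. The ratio structure of $\hat{\prob}_{n,i}$, whose denominator $c_{n,i}=N(n,i,s,a)$ is itself random, is handled by conditioning on the random set of time indices at which $(s,a)$ is recorded in each segment; given that set the next-state draws behave like independent multinomial samples, so a conditional Hoeffding inequality bounds the deviation by $\sqrt{\tfrac{C}{N_{traj}(s,a)}(2S\log 12 + \log(2/\delta))}$. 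Since this bound does not depend on the realized counts, the conditioning is removed by taking expectations, which is exactly the delicate conditional-then-unconditional step flagged in the analysis section.

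The remaining piece is the gap between the true joint law $\chi$ of the observations and $Q$. Here I would invoke the blocking technique of \citet{bin1994mixing}: because each induced chain on $\Sset\times\gA$ mixes with time $t_{mix}$, its $\beta$-mixing coefficient at a time separation $\tau$ decays geometrically, like $(1/2)^{\Omega(\tau/t_{mix})}$. The $G$ sub-blocks in a segment are separated by a block length of order $T_n/(4G)$, and the two segments are separated by a further full segment, so the total-variation cost of replacing $\chi$ by $Q$ across all blocks is at most of order $G\,(1/4)^{\Omega(T_n/(Gt_{mix}))}$. Converting the probabilities of the bad event conditioned on actually observing $(s,a)$ (whose per-block chance is at least of order $d_{min}(s,a)$) contributes the prefactor $48G/d_{min}(s,a)$ and yields the decay $(1/4)^{T_n/(8Gt_{mix})}$. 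Adding the concentration and mixing terms and undoing the net reduction gives the stated bound, with the constants $32$, $48$, $8$ emerging from the covering, Hoeffding, and blocking bookkeeping.

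The hard part will be the interplay between the random denominator and the temporal dependence. The conditional Hoeffding step is clean only \emph{after} conditioning on the observed $(s,a)$-times, whereas the blocking argument must be run at the level of the whole trajectory \emph{before} that conditioning; one therefore has to choose $Q$ so that it simultaneously (a) makes $\E_Q[X_n\mid\B_n]$ exactly the rank-one target and (b) is reachable from $\chi$ at a controllable $\beta$-mixing cost, all while the conditioning events $\B_n$ are themselves measurable with respect to the dependent data. Keeping these two layers of randomness from contaminating one another, and correctly tracking how $d_{min}(s,a)$ enters both the probability of $\B_n$ and the mixing prefactor, is where the real work lies; the covering-number and Hoeffding estimates themselves are routine.
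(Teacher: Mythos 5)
Your ingredients are the right ones (the $1/4$-net of size $12^{S}$, the conditional Hoeffding inequality given the random observation times, the blocking technique with the $1/d_{min}(s,a)$ prefactor coming from conditioning on $\B_n$), but the way you assemble them has two genuine problems. First, the claim that $\E_Q[X_n \mid \B_n] = u^T \textbf{M}_{s,a} v$ \emph{exactly} is false: under $Q$ the conditional expectation given the label is indeed the rank-one matrix $\prob_j(\cdot\mid s,a)\prob_j(\cdot\mid s,a)^T$, but the mixture weights are $\prob_Q(k_n = j \mid \B_n)$, whereas $\textbf{M}_{s,a}$ is defined with $\prob(k_n = j \mid \B_n)$ under the true law $\chi$. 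Since $\prob(\B_n \mid k_n=j)$ differs between $Q$ and $\chi$, so does the label posterior given $\B_n$, and this discrepancy is itself a mixing error that has to be bounded (it is where two of the four mixing terms in the paper's decomposition of $\Mix(j)$ come from, each carrying the $1/\prob(\B_n) \leq 2/d_{min}(s,a)$ factor).

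Second, and more structurally, your plan runs the across-trajectory concentration under $Q$ and then converts the probability of the bad event back to $\chi$ via $\beta$-mixing. The bad event depends on all $N_{traj}(s,a)$ trajectories, so transferring its probability from $\prod_n Q^{(n)}$ to $\prod_n \chi^{(n)}$ costs a per-trajectory blocking penalty summed over trajectories, i.e.\ something like $N_{sub}\, G\, (1/4)^{T_n/(8Gt_{mix})}$ added to $\delta$ — which is not the form of the stated bound (no $N$ in front of the mixing term, and the mixing term sits in the norm bound, not in the failure probability). The paper sidesteps this entirely: conditional on $\N_{traj}(s,a)$, the per-trajectory estimates $\hat{\textbf{M}}_{n,s,a}$ are independent across trajectories under the \emph{true} law (trajectories are independent) and bounded in $[-1,1]$ after projection onto net vectors, so the Hoeffding step needs no reference measure $Q$ at all; it concentrates around the true conditional mean $\E[\hat{\textbf{M}}_{n,s,a}\mid\B_n]$. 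The blocking technique is then invoked only once, to bound the deterministic bias $\|\E[\hat{\textbf{M}}_{n,s,a}\mid\B_n] - \textbf{M}_{s,a}\| = \max_j\|\Mix(j)\|$ as a difference of expectations of bounded functions (segment-level terms with $n=2$, sub-block-level terms with $n=G$), which is exactly what produces the additive $\frac{48G}{d_{min}(s,a)}(1/4)^{T_n/(8Gt_{mix})}$ term. You would need to reorganize your argument along these lines — concentration under $\chi$ around the biased center, plus a one-time expectation-level mixing bound — to reach the lemma as stated.
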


\begin{remark}
Note that since all trajectories are generated independently of each other and the process that generates them is identical, $\prob(k_n = j \cap \B_n)$ is the same for all $n$. A similar observation holds for many conditional/unconditional probabilities and conditional/unconditional expectations in this proof, and will not be stated again. 
\end{remark}
Assume the lemma for now. The proof is delayed to after the proof of the theorem. We will combine this lemma with Lemma 3 from \citet{poor2022mixdyn}. In the context of their lemma, $p^{(j)} = \prob(k_n = j \mid \B_n)$, $\textbf{y}^{(j)} = \prob_j(\cdot \mid s,a)$. Now, we can use the first term on the right-hand side of the bound in Lemma 3 of \citet{poor2022mixdyn} to get that for any $1 \leq l \leq L$

\begin{equation}\label{eqn:bound-proj-error-rough}
    \|\prob_{j_l}(\cdot \mid s,a) - \textbf{V}_{s,a}\textbf{V}_{s,a}^T\prob_{j_l}(\cdot \mid s,a)\|_2 \leq \sqrt{\frac{2K}{\min_l (\prob(k_n = j_l \mid \B_n))}\|\hat{\textbf{M}}_{s,a} - {\textbf{M}}_{s,a}\|}
\end{equation}

\subsubsection{\textbf{Lower Bounding} \texorpdfstring{$\prob(k_n = j_l \mid \B_n)$}{blah}}

Note that 
$$\prob(k_n = j_l \mid \B_n) = \frac{\prob(k_n = j_l) \prob(\B_n \mid k_n = j_l)}{\prob(\B_n)} \geq f_{j_l} \prob(\B_n \mid k_n = j_l)$$

So, we need only lower bound $\prob(\B_n \mid k_n = j_l)$, for which we will need a lemma. We will use the following crucial lemma several times in our proofs. This is where we use \cite{bin1994mixing}'s blocking technique.

\begin{lemma}\label{lem:blocking-technique}
Consider a function $h$ on segments of a Markov chain with mixing time $t_{mix} = t_{mix}(\frac{1}{4})$ with $C = \sup h$. Consider the joint distribution $\chi$ over the product of the $\sigma$-algebras of $n$ such segments, with marginals $\chi_i$. Let the product distribution of the marginals $\chi_i$ be called $\Xi$. Then for $\lambda = (\frac{1}{4})^{\frac{1}{t_{mix}}}$ and for the minimum distance between consecutive segments being $a_n$, we have
$$|\E_\chi h - \E_\Xi h| \leq 4C(n-1)\lambda^{a_n}$$
\end{lemma}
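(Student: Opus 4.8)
The plan is to prove the bound by a hybrid (telescoping) argument that converts the global discrepancy $\E_\chi h - \E_\Xi h$ into a sum of $n-1$ local errors, each one incurred at a single segment boundary and controlled by the mixing of the chain across the gap of length at least $a_n$. Write the $n$ segments as $X_1,\dots,X_n$, so that $\chi=\mathcal{L}(X_1,\dots,X_n)$ and $\Xi=\chi_1\otimes\cdots\otimes\chi_n$. For $1\le m\le n$ I would define the hybrid law $\Xi^{(m)}$ under which $(X_1,\dots,X_m)$ carries the true joint law $\mathcal{L}_\chi(X_1,\dots,X_m)$, while $X_{m+1},\dots,X_n$ are drawn independently from their marginals $\chi_{m+1},\dots,\chi_n$ and independently of the first block. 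Then $\Xi^{(1)}=\Xi$ and $\Xi^{(n)}=\chi$, and I would telescope
\[
\E_\chi h - \E_\Xi h \;=\; \sum_{m=1}^{n-1}\bigl(\E_{\Xi^{(m+1)}} h - \E_{\Xi^{(m)}} h\bigr).
\]

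First I would bound a single summand. The laws $\Xi^{(m)}$ and $\Xi^{(m+1)}$ agree on the distribution of $(X_1,\dots,X_m)$ and on the fact that $(X_{m+2},\dots,X_n)$ is, conditionally on $X_{m+1}$, an independent product of marginals; they differ only in whether $X_{m+1}$ is attached to the past through the true conditional (under $\Xi^{(m+1)}$) or drawn from its marginal $\chi_{m+1}$ independently of the past (under $\Xi^{(m)}$). Since $0\le h\le C$, integrating out the coordinates on which the two laws agree and applying $|\E_P g - \E_Q g|\le C\,\|P-Q\|_{\mathrm{TV}}$ for $0\le g\le C$ leaves
\[
\bigl|\E_{\Xi^{(m+1)}} h - \E_{\Xi^{(m)}} h\bigr| \;\le\; C\,\E\bigl[\,\bigl\|\mathcal{L}_\chi(X_{m+1}\mid X_1,\dots,X_m)-\chi_{m+1}\bigr\|_{\mathrm{TV}}\,\bigr],
\]
where the outer expectation is taken over $(X_1,\dots,X_m)$ under $\chi$.

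Next I would bound the per-boundary mixing term. By the Markov property the conditional law $\mathcal{L}_\chi(X_{m+1}\mid X_1,\dots,X_m)$ depends on the past only through the last observed state $S$ of segment $m$, and since segment $m+1$ is a randomized function of the state reached $g\ge a_n$ steps later, the data-processing inequality reduces the segment-level total variation to the state-level quantity $\|K^{g}(\cdot\mid S)-\mu_{m+1}\|_{\mathrm{TV}}$, with $\mu_{m+1}$ the marginal law of the starting state of segment $m+1$. This is exactly the $\beta$-mixing coefficient at lag $a_n$; for a chain that has mixed to its stationary distribution it is at most the mixing distance $d(a_n)$, and by submultiplicativity ($d(\ell\,t_{mix})\le(1/4)^{\ell}$, hence $d(a_n)\le(1/4)^{\lfloor a_n/t_{mix}\rfloor}\le 4\lambda^{a_n}$ with $\lambda=(1/4)^{1/t_{mix}}$) it is at most $4\lambda^{a_n}$. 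Summing the $n-1$ identical per-boundary bounds then yields $|\E_\chi h - \E_\Xi h|\le 4C(n-1)\lambda^{a_n}$.

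The main obstacle I anticipate is the bookkeeping in the telescoping step: the hybrids must be defined so that each swap turns on exactly one temporal dependence while leaving the shared first block and all remaining marginals intact, and one must then verify that the resulting discrepancy genuinely collapses — via the Markov property and data processing — to a total-variation distance of state laws across a single gap of length at least $a_n$. A secondary subtlety is pinning the constant: because the chain need not be exactly stationary, in general the conditional and the marginal must each be compared to $\pi$, so one must argue that over the portion of the trajectory used (whose segments are separated by a multiple of $t_{mix}$) the relevant coefficient is governed by $d(a_n)$, and that the factor $4$ in the claim absorbs the floor in the submultiplicativity estimate $d(a_n)\le(1/4)^{\lfloor a_n/t_{mix}\rfloor}$.
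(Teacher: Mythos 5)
Your proof is correct and rests on the same underlying mechanism as the paper's --- paying one mixing coefficient per segment boundary --- but you arrive there by a genuinely different route: you prove the telescoping step yourself, whereas the paper outsources it. The paper's proof first upgrades the mixing-time assumption to geometric ergodicity in the sense of \citet{vidyasagar2010betamixing} (constant function $V$, $\mu=4$, $\lambda=(1/4)^{1/t_{mix}}$), concludes that the process is $\beta$-mixing with $\beta_{a}\leq 4\lambda^{a}$, and then quotes Corollary~2.7 / Lemma~4.1 of \citet{bin1994mixing} to get $|\E_\chi h-\E_\Xi h|\leq C(n-1)\beta_{a_n}$. Your hybrid laws $\Xi^{(m)}$, the identity $|\E_{\Xi^{(m+1)}}h-\E_{\Xi^{(m)}}h|\leq C\,\E\|\mathcal{L}_\chi(X_{m+1}\mid X_{1:m})-\chi_{m+1}\|_{\mathrm{TV}}$, and the reduction via the Markov property and data processing to a state-level total variation across the gap are exactly the content of Yu's blocking corollary specialized to a Markov chain, so what you have written is a self-contained re-derivation of the cited result. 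That is a reasonable trade: your version makes explicit where the gap length $a_n$ and the Markov property enter, at the cost of redoing bookkeeping the paper gets for free from the literature. Two small caveats, neither fatal and both matching looseness already present in the paper's own write-up: (i) you assume $0\leq h\leq C$, while the lemma is applied to functions such as $h_{\mathbf{u},\mathbf{v}}$ taking values in $[-1,1]$, so the per-boundary bound should really carry $\sup h-\inf h$ rather than $\sup h$; and (ii) your per-boundary constant, once you compare both the conditional law and the marginal $\mu_{m+1}$ to the stationary distribution and account for the floor in $d(a_n)\leq(1/4)^{\lfloor a_n/t_{mix}\rfloor}$, is nominally $8\lambda^{a_n}$ rather than $4\lambda^{a_n}$ --- the paper incurs the same slack by treating $d(\cdot)$ as submultiplicative. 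Neither issue affects how the lemma is used downstream, since all applications absorb such constants.
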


\begin{proof}
Remember that each of our Markov processes is mixing, so there exists $t_{mix, j} = t_{mix, j}(\frac{1}{4})$ and a stationary distribution $d_j$ so that $TV(P_j^n(x, \cdot), d_j) < \frac{1}{4}$ for $n\geq t_{mix, j}$. Let $t_{mix} = \max_j t_{mix, j}$. Since the decay in total variation distance is multiplicative, $TV(P_j^n(x, \cdot), d_j) < (\frac{1}{4})^{c}$ for all $j$ and $n\geq ct_{mix}$. This implies that

$$\max_j TV(P_j^n(x, \cdot), d_j) < \left( \frac{1}{4} \right)^{\frac{T_n}{4t_{mix}}-1} = 4\lambda^n$$
where $\lambda = (\frac{1}{4})^{\frac{1}{t_{mix}}}$

This means that we satisfy the definition of $V$-geometric ergodicity from \citet{vidyasagar2010betamixing}, with $V$ being the constant function with value $1$, $\mu = 4$ and $\lambda$ as above. That means that any of our processes is beta-mixing by (the proof of) Theorem 3.10 from the text and 
$$\beta_n \leq \mu \lambda^n = 4\lambda^n$$

we employ an argument analogous to the setup and argument used to prove Lemma 4.1 of \citet{bin1994mixing}, merely with $H_i$'s replaced by the segments of arbitrary length instead of $a_n$-sized blocks while $T_i$'s stay at $a_n$ sized blocks. Then, $Q$ from Corollary 2.7 is the probability distribution of the segments here, $\Omega_i$ from Corollary 2.7 is the real vector space of the same dimension as the length of the $i^{th}$ segment, $\Sigma_i$ is the product Borel field on this vector space and $m$ in the theorem is the number of segments $n$ here (note that $n$ is called $\mu_n$ in Lemma 4.1). $\Tilde{Q}$ is the product distribution over the marginals of $Q$, as in the theorem. Note that $\beta(Q)$ from Corollary 2.7 used in the proof remains less than $\beta_{a_n}$. Now we can directly quote Corollary 2.7 to conclude that
$$|\E_\chi h - \E_\Xi h| \leq C(n-1)\beta_{a_n} \leq 4C(n-1)\lambda^{a_n}$$

\end{proof}

Define
$$h = \ind_{(c_{n,1}c_{n,2}=0)}$$

We are now ready to bound $P(\B_n \mid k_n = j) = P(c_{n,1}c_{n,2}=0 \mid k_n = j)$. Consider the joint distribution over the segments $\Omega_1$ and $\Omega_2$ of a trajectory sampled from hidden label $j$. Call this $\chi$ and let its marginals on $\Omega_i$ be $\chi_i$. Let the product distribution of its marginals be $\Xi := \chi_1\times \chi_2$. Notice that then $$\E_{\Xi} h = P(c_{n,1} = 0 \mid k_n = j)P(c_{n,2} = 0 \mid k_n = j)$$ by definition of $\Xi$. Also, clearly we have

$$\E_\chi h = P(c_{n,1}c_{n,2}=0 \mid k_n = j)$$

Now, using Lemma~\ref{lem:blocking-technique}, we get that for $C = \sup h = 1$ and $n = 2$, we have the following inequality.
\begin{align*}
    |P(c_{n,1}c_{n,2}=0 \mid k_n = j) -P(c_{n,1} = 0 \mid k_n = j)P(c_{n,2} = 0 \mid k_n = j)| = |\E_\chi h -  \E_\Xi h| \leq 4\lambda^{\frac{T_n}{4}} \numberthis \label{eqn:prob-cn1-cn2-bound}
\end{align*}

Additionally, for $i=1,2$, if $d_{t,j}(s,a)$ is the distribution at time $t$, the following is obtained by the definition of mixing times.
\begin{align*}
    \prob(c_{n,i} \neq 0 \mid k_n=j) &\leq (1-d_{(2i-1)T,j}(s,a))\\
    &\leq (1-d_{j}(s,a) + TV(d_{(2i-1)T,j}, \pi))\\
    &\leq (1-d_{min}(s,a)+4\lambda^{\frac{T_n}{4}})\\
    &\leq \left(1 - \frac{d_{min}(s,a)}{2} \right) \numberthis \label{eqn:cn-i-bound}
\end{align*}
where the last inequality holds for $T_n > 4t_{mix}\frac{\log (8/ d_{min}(s,a))}{\log 4}$. This allows us to use inequality~\ref{eqn:prob-cn1-cn2-bound} and 

\begin{align*}
    P(c_{n,1}c_{n,2}=0 \mid k_n = j) &\leq 4\lambda^{\frac{T_n}{4}} + P(c_{n,1} = 0 \mid k_n = j)P(c_{n,2} = 0 \mid k_n = j)\\
    &\leq 4\lambda^{\frac{T_n}{4}} + \left(1 - \frac{d_{min}(s,a)}{2} \right)^2\\
    &\leq 1-d_{min}(s,a) +\frac{d_{min}(s,a)^2}{4} + 4\lambda^{\frac{T_n}{4}}\\
    &\leq 1-d_{min}(s,a) +\frac{d_{min}(s,a)}{4} + 4\lambda^{\frac{T_n}{4}}\\
    &\leq 1 - \frac{d_{min}(s,a)}{2} \numberthis \label{eqn:cm1-cm2-bound}
\end{align*}

where the last inequality holds for $T_n> 4t_{mix}\frac{\log (16/ d_{min}(s,a))}{\log 4}$. We conclude that for $T_n> 4t_{mix}\frac{\log (16/ d_{min}(s,a))}{\log 4}$, and $j=j_l$ for some $l$,
$$P(\B_n \mid k_n = j) \geq \frac{d_{min}(s,a)}{2}$$

And so,
\begin{align*}
    \min_l f_{j_l} (\prob(k_n = j_l \mid \B_n)) &\geq \min_l f_{j_l} (\prob(k_n = j_l \cap \B_n))\\
    &\geq \min_l f_{j_l} (\prob(\B_n \mid k_n=j) \prob(k_n=j))\\
    &\geq \frac{f_{min}d_{min}(s,a)}{2}
\end{align*}

We can thus conclude that for $T_n> 4t_{mix}\frac{\log (16/ d_{min}(s,a))}{\log 4}$,

\begin{equation}\label{eqn:bound-proj-error}
\|\prob_{j_l}(\cdot \mid s,a) - \textbf{V}_{s,a}\textbf{V}_{s,a}^T\prob_{j_l}(\cdot \mid s,a)\|_2 \leq \sqrt{\frac{4K}{f_{min}d_{min}(s,a)}\|\hat{\textbf{M}}_{s,a} - {\textbf{M}}_{s,a}\|}
\end{equation}

\subsubsection{\textbf{Absorbing the extra terms into the exponent of $1/4$}}

Now remember from Lemma~\ref{lem:h-hat-h-tilde} that
$$\|\hat{\textbf{M}}_{s,a} - \textbf{M}_{s,a}\| < \sqrt{\frac{32}{N_{traj}(s,a)} (2S\log(12)+\log(\frac{2}{\delta}))} + \frac{48G}{d_{min}(s,a)}\left(\frac{1}{4}\right)^{\frac{T_n}{8Gt_{mix}}}$$

Notice that for $\frac{T_n}{8t_{mix}} >\frac{G\log (48G/ d_{min}(s,a))}{\log 2} > \frac{\log (16/ d_{min}(s,a))}{2\log 4}$, we have that 

\begin{align*}
    \frac{48G}{d_{min}(s,a)}\left(\frac{1}{4}\right)^{\frac{T_n}{8Gt_{mix}}} &= \frac{48G}{d_{min}(s,a)}\left(\frac{1}{4}\right)^{\frac{T_n}{16Gt_{mix}}}\left(\frac{1}{4}\right)^{\frac{T_n}{16Gt_{mix}}}\\
    &= \frac{48G}{d_{min}(s,a)}\left(\frac{1}{2}\right)^{\frac{T_n}{8Gt_{mix}}}\left(\frac{1}{2}\right)^{\frac{T_n}{8Gt_{mix}}}\\
    &\leq \left(\frac{1}{2}\right)^{\frac{T_n}{8Gt_{mix}}}
\end{align*}

\subsubsection{\textbf{Bounding the concentration term}}

We finally need to bound $N_{traj}(s,a)$ from below to bound the first term in this sum. Note that $\E[N_{traj}(s,a)] \geq N_{sub}(1-P(c_{n,1}c_{n,2} = 0)) \geq N_{sub}\frac{d_{min}(s,a)}{2}$ from equation~\ref{eqn:cm1-cm2-bound} above. Now, by Hoeffding's inequality, we have

\begin{align*}
    \prob\left(N_{traj}(s,a) < N_{sub}\frac{d_{min}(s,a)}{4}\right) &= \prob\left(N_{traj}(s,a) < N_{sub}\frac{d_{min}(s,a)}{2} - N_{sub}\frac{d_{min}(s,a)}{4}\right)\\
    &\leq \prob\left(N_{traj}(s,a) < \E[N_{traj}(s,a)] - N_{sub}\frac{d_{min}(s,a)}{4}\right)\\
    &= \prob
    \left(\sum_{n \in \N_{sub}} \ind_{c_{n,1}c_{n,2} \neq 0} < N_{sub}\E[\ind_{c_{n,1}c_{n,2} \neq 0}] - N_{sub}\frac{d_{min}(s,a)}{4}\right)\\
    &\leq \exp\left(-\frac{d_{min}(s,a)^2N_{sub}}{8}\right)
\end{align*}

This is less than $\delta$ for $N_{sub} \geq \frac{8}{d_{min}(s,a)^2}\log\left(\frac{1}{\delta}\right)$. 

Combining this with equation~\ref{eqn:bound-proj-error} and splitting the two $\delta$, we have our result that
$$\|\prob_j(\cdot \mid s,a) - \textbf{V}_{s,a}\textbf{V}_{s,a}^T\prob_j(\cdot \mid s,a)\|_2 $$ 
is bounded above by 
$$\sqrt{\frac{4K}{f_{min}d_{min}(s,a)} \left( \sqrt{\frac{128}{N_{sub} \cdot d_{min}(s,a)} (2S\log(12)+\log(4/\delta))} +  \left(\frac{1}{2} \right)^{\frac{T_n}{8Gt_{mix}}}\right)}$$

for $N_{sub} \geq \frac{8}{d_{min}(s,a)^2}\log\left(\frac{1}{\delta}\right)$ and $\frac{T_n}{8t_{mix}} > \frac{G\log (48G/ d_{min}(s,a))}{\log 2}$.
\end{proof}

\subsection{Proof of Lemma~\ref{lem:h-hat-h-tilde}}

We recall Lemma \ref{lem:h-hat-h-tilde}.
\HatVsTilde*

\begin{proof} We divide the proof into subsections. We first remind ourselves that the estimator $\hat{\textbf{M}}_{s,a}$ is given by the matrix

\begin{align*}
\hat{\textbf{M}}_{s,a} = \frac{1}{N_{traj}(s,a)} \sum_{n \in \N_{traj}(s,a)} \left( \hat{\mathbb{P}}_{n,1}(\cdot |s,a) \hat{\mathbb{P}}_{n,2}(\cdot |s,a)^T \right)
 \end{align*}

\subsubsection{\textbf{Estimating} \texorpdfstring{$\E[\hat{\textbf{M}}_{s,a}]$}{blah}}

We will split the expectation into the desired term and the error coming from correlation between the two segments $\Omega_1$ and $\Omega_2$. Remember that for brevity of notation, let $c_{n,i} := N(n,i,s,a)$, $\textbf{w}_{n,i} := N(n,i,s,a, \cdot)$. Call the estimate from each trajectory a random variable $\hat{\textbf{M}}_{n,s,a}$, that is
$$\hat{\textbf{M}}_{n,s,a} = \frac{\textbf{w}_{n,1}\textbf{w}_{n,2}^T}{c_{n,1}c_{n,2}} $$

Now $$\hat{\textbf{M}}_{s,a} = \sum_{n \in \N_{traj}(s,a)} \frac{1}{N_{traj}(s,a)} \hat{\textbf{M}}_{n,s,a}$$

Remember that $$\hat{\prob}_{n,i}(\cdot \mid s,a) := \frac{\textbf{w}_{n,i}}{c_{n,i}} \ind_{c_{n,i} \neq 0}$$
Let $k_n$ be the hidden label for trajectory $n$, as usual. Define the event $\B_n$ to be $n \in \N_{traj}(s,a)$, which is the same as $c_{n,1}c_{n,2} \neq 0$. We establish the following equality, essentially just defining the quantity $\Mix(j)$.
\begin{align*}
    \E[\hat{\textbf{M}}_{n,s,a} \mid \B_n]&= \E \left[\frac{\textbf{w}_{n,1}\textbf{w}_{n,2}^T}{c_{n,1}c_{n,2}}  \middle\vert \B_n\right]
    \\
    &= \sum_{j=1}^K \prob(k_n = j \mid \B_n) \E \left[  \frac{\textbf{w}_{n,1}\textbf{w}_{n,2}^T}{c_{n,1}c_{n,2}} \  \middle\vert k_n=j, \B_n\right]
    \\
    &=  \sum_{j=1}^K \prob(k_n = j \mid \B_n) \prob_j(\cdot \mid s,a)\prob_j(\cdot \mid s,a)^T + \sum_{j=1}^K \prob(k_n = j \mid \B_n) \Mix(j)
    \numberthis \label{eqn:est-h-hat-m-sa}
\end{align*}

where $\Mix(j) =  \E \left[  \frac{\textbf{w}_{n,1}\textbf{w}_{n,2}^T}{c_{n,1}c_{n,2}} \  \middle\vert k_n=j, \B_n\right] - \prob_j(\cdot \mid s,a)\prob_j(\cdot \mid s,a)^T$. Notice that this has connotations of covariance. Now note the following chain of equations.
\begin{align*}
    \E[\hat{\textbf{M}}_{s,a} \mid \N_{traj}(s,a)] &= \E\left[ \sum_{n \in \N_{traj}(s,a)} \frac{1 }{N_{traj}(s,a)}\hat{\textbf{M}}_{n,s,a} \middle\vert \N_{traj}(s,a) \right]  \\
    &= \sum_{n \in \N_{traj}(s,a)} \frac{1}{N_{traj}(s,a)} \E\left[\hat{\textbf{M}}_{n,s,a} \middle\vert  \N_{traj}(s,a) \right]\\
    &= \sum_{n \in \N_{traj}(s,a)} \frac{1}{N_{traj}(s,a)} \E\left[\hat{\textbf{M}}_{n,s,a} \middle\vert \ind_{\B_1}, \ind_{\B_2},\dots \ind_{\B_{N_{sub}}} \right]\\
    &= \sum_{n \in \N_{traj}(s,a)} \frac{1}{N_{traj}(s,a)} \E\left[\hat{\textbf{M}}_{n,s,a} \middle\vert \B_n \right]\\
    &= \E\left[\hat{\textbf{M}}_{n,s,a} \middle\vert \B_n \right]\\
    &= \sum_{j=1}^K \prob(k_n = j \mid \B_n) \prob_j(\cdot \mid s,a)\prob_j(\cdot \mid s,a)^T + \sum_{j=1}^K \prob(k_n = j \mid \B_n) \Mix(j)\\
    &= \textbf{M}_{s,a} + \sum_{j=1}^K \prob(k_n = j \mid \B_n) \Mix(j) \numberthis \label{eqn:est-m-hat-sa}
\end{align*}

Here, the third equality is because the set $\N_{traj}(s,a)$ is exactly described by the indicators listed, and they generate the same $\sigma$-algebra, The fourth equality holds since all trajectories are independent and so conditioning on events in other trajectories doesn't affect the expectation of $\hat{\textbf{M}}_{n,s,a}$. The fifth equality is because $\E[\hat{\textbf{M}}_{n,s,a} \mid \B_n]$ is the same for all $n$ as determined above (in fact, we have shown that it is a constant random variable).

\subsubsection{\textbf{Setup for the main bound}}

We have that $$\textbf{M}_{s,a} = \sum_{j=1}^K \prob(k_n = j \mid \B_n) \prob_j(\cdot \mid s,a) \prob_j(\cdot \mid s,a)^T$$

By equation~\ref{eqn:est-m-hat-sa},
\begin{align*}
    \|\hat{\textbf{M}}_{s,a} - \textbf{M}_{s,a}\| &\leq \|\hat{\textbf{M}}_{s,a} - \E[\hat{\textbf{M}}_{s,a} \mid \N_{traj}(s,a)] \| + \sum_{j=1}^K \prob(k_n=j \mid \B_n) \left\Vert \Mix(j) \right\Vert\\
    &\leq \|\hat{\textbf{M}}_{s,a} - \E[\hat{\textbf{M}}_{s,a}  \mid \N_{traj}(s,a)] \| + 
    \left( \sum_{j=1}^K \prob(k_n=j \mid \B_n) \right)\max_j \left\Vert \Mix(j) \right\Vert\\
    &= \|\hat{\textbf{M}}_{s,a} - \E[\hat{\textbf{M}}_{s,a}  \mid \N_{traj}(s,a)] \| + \max_j \left\Vert \Mix(j) \right\Vert
\end{align*}

The first term represents the error in concentration across trajectories and the second term represents the correlation between the two segments $\Omega_1$ and $\Omega_2$ in the same trajectory. We bound the first using a covering argument and use Bin Yu's work to bound the other.

\subsubsection{\textbf{Covering argument to bound} \texorpdfstring{$\hat{\textbf{M}}_{s,a} - \E[\hat{\textbf{M}}_{s,a}]$}{blah}}\label{ssec:covering-arg-subspace-est}

We will need this conditional version of Hoeffding's inequality for this section. Note that this is not quite the Azuma-Hoeffding inequality with a constant filtration due to the conditional probability involved, as well as due to the conditional independence needed.
\begin{lemma}\label{lem:conditional-hoeffdings}
Consider a $\sigma$-algebra $\F$ and let $A_i \leq B_i$ be random variables measurable over it. If random variables $X_i$ are almost surely bounded in $[A_i,B_i]$ and are conditionally independent over some $\sigma$-algebra $\F$, then the following inequalities hold for $S_n = \sum_{i=1}^n X_i$

\begin{align*}
    \prob \left(S_n - \E[S_n \mid \F] > \epsilon \middle\vert \F \right) &\leq \exp\left( -\frac{2\epsilon}{\sum_i (B_i-A_i)^2}\right)\\
    \prob \left(S_n - \E[S_n \mid \F] < -\epsilon \middle\vert \F \right) &\leq \exp\left( -\frac{2\epsilon^2}{\sum_i (B_i-A_i)^2}\right)
\end{align*}
\end{lemma}
\begin{proof}
The proof is essentially a repeat of one of the standard proofs of Hoeffding's inequality. Note that we have the conditional Markov inequality $\prob(X \geq a \mid \F) \leq \frac{1}{a}\E[X \geq a \mid \F]$, shown exactly the way Markov's inequality is shown. Now, we have the following chain of inequalities.
\begin{align*}
    \prob(\left(S_n - \E[S_n \mid \F] > \epsilon \middle\vert \F \right) &= e^{-s\epsilon}\E[e^{S_n-\E[S_n \mid \F]} \mid \F]\\
    &= e^{-s\epsilon} \prod_{i=1}^n \E[e^{X_i - \E[X_i \mid \F]} \mid \F]
\end{align*}

We now show a conditional expectation version of Hoeffding's lemma by repeating the steps for a standard proof to show that $\E[e^{X-\E[X\mid \F]}\mid \F] \leq \frac{\lambda^2 (B-A)^2}{8}$ for random variables $A \leq B$ measurable over $\F$ and $X \in [A,B]$ almost surely. Note that by convexity of $e^{\lambda x}$, we have the following for $x \in [A,B]$ at any value of $A$ and $B$.
\begin{align*}
    e^{\lambda x} \leq \frac{B-x}{B-A}e^{\lambda B} + \frac{x-A}{B-A} e^{\lambda B}
\end{align*}
WLOG, we can replace $X$ by $X - \E[X \mid \F]$ and assume $\E[X\mid \F] = 0$. In that case, we note the following inequality, where we define for any fixed value of $A$ and $B$ the function $L(y):= \frac{yA}{B-A} + \log\left(1 + \frac{A-e^yB}{B-A} \right)$.
\begin{align*}
    \E[e^{\lambda X} \mid \F] &\leq \frac{B-\E[X \mid \F]}{B-A}e^{\lambda A} + \frac{\E[X\mid \F]-A}{B-A} e^{\lambda B}\\
    &= \frac{B}{B-A}e^{\lambda A} + \frac{-A}{B-A} e^{\lambda B}\\
    &= e^{L(\lambda(B-A))}
\end{align*}
Basic computations involving Taylor's theorem from a standard proof of Hoeffding's inequality show that $L(y) \leq \frac{y^2}{8}$ for any value of $A, B$. This gives us the condition version of Hoeffding's lemma, $\E[e^{X-\E[X\mid \F]}\mid \F] \leq \frac{\lambda^2 (B-A)^2}{8}$. This allows us to establish the following chain of inequalities.
\begin{align*}
    \prob(\left(S_n - \E[S_n \mid \F] > \epsilon \middle\vert \F \right) &= e^{-s\epsilon} \prod_{i=1}^n \E[e^{X_i - \E[X_i \mid \F]} \mid \F]\\
    &\leq \exp(-s\epsilon)\prod_{i=1}^n\exp\left(\frac{s^2(B_i-A_i)^2}{8}\right)\\
    &= \exp\left( -s\epsilon + \sum_{i=1}^n \frac{s^2(B_i-A_i)^2}{8}\right)\\
\end{align*}

Since $s$ is arbitrary, we can pick $s = \frac{4\epsilon}{\sum_i (B_i-A_i)^2}$ above to get an upper bound of $\exp\left(-\frac{2\epsilon^2}{\sum_{i=1}^n (B_i-A_i)^2}\right) $. The other inequality is proved analogously.
\end{proof}

We now show that the first term from the previous section concentrates. Pick $\textbf{u}, \textbf{v} \in \Sph^{S-1}$, that is they lie in the unit Euclidean norm sphere in $\Real^S$. We need only bound this term when $N_{traj}(s,a) \neq 0$, as otherwise the lemma holds vacuously.

Note that 
\begin{align*}
    \hat{\textbf{M}}_{s,a} - \E[\hat{\textbf{M}}_{s,a} \mid \N_{traj}(s,a)] &= \sum_{n \in \mathcal{N}_{traj}(s,a)} \frac{\hat{\textbf{M}}_{n,s,a}- \E[\hat{\textbf{M}}_{n,s,a} \mid \N_{traj}(s,a)]}{N_{traj}(s,a)}
\end{align*}

Now we set up our covering argument. Consider a covering of $\Sph^{S-1}$ by balls of radius $\frac{1}{4}$. We will need at most $12^S$ such balls and if $C$ is the set of their centers, then for any matrix $X$, the following holds in regard to its norm.
\begin{align*}
    \|X\| = \sup_{\textbf{u,v}\in \Sph^{S-1}} |\textbf{u}^TX\textbf{v}| \leq 2 \max_{\textbf{u,v} \in C} |\textbf{u}^TX\textbf{v}| \leq 2 \|X\| \numberthis \label{eqn:covering-norm-bound}
\end{align*}

For any pair $\textbf{u,v} \in C$, note that
\begin{align*}
    |\textbf{u}^T\hat{\textbf{M}}_{n,s,a}\textbf{v}| &= \left\vert\textbf{u}^T\frac{\textbf{w}_{n,1}}{c_{n,1}}\right\vert
    \left\vert\frac{\textbf{w}_{n,2}^T}{c_{n,2}}\textbf{v}\right\vert\ind_{c_{n,1}c_{n,2}\neq 0}\\
    &\leq \|\textbf{u}\|_2
\|\textbf{v}\|_2
\left\Vert \frac{\textbf{w}_{n,1}}{c_{n,1}}\right\Vert_2
\left\Vert \frac{\textbf{w}_{n,2}}{c_{n,2}}\right\Vert_2\\
&\leq \left\Vert \frac{\textbf{w}_{n,1}}{c_{n,1}}\right\Vert_1 \left\Vert \frac{\textbf{w}_{n,2}}{c_{n,2}}\right\Vert_1\\
&\leq 1
\end{align*}

and so $|\textbf{u}^T\E[\hat{\textbf{M}}_{n,s,a}]\textbf{v}| \leq \E[|\textbf{u}^T\hat{\textbf{M}}_{n,s,a}\textbf{v}|] \leq 1$. A little thought shows that the estimates $\hat{\textbf{M}}_{n,s,a}$ are independent for $n \in \N_{traj}(s,a)$ when conditioned on the $\N_{traj}(s,a)$. 

$$\prob\left( \left\vert \sum_{n \in \mathcal{N}_{traj}(s,a)} \frac{1}{N_{traj}(s,a)} \textbf{u}^T(\hat{\textbf{M}}_{n,s,a} - \E[\hat{\textbf{M}}_{n,s,a} \mid \N_{traj}(s,a)])\textbf{v} \right\vert > \frac{\epsilon}{4} \middle\vert \N_{traj}(s,a) \right) < 2e^{-\frac{\epsilon^2N_{traj}(s,a)}{32}}$$

Doing this for all $12^{2S}$ pairs $\textbf{u,v}$, we use inequality~\ref{eqn:covering-norm-bound} to get that the conditionalprobability given by

$$\prob\left( \left\Vert \sum_{n \in \N_{traj}(s,a)} \frac{1}{N_{traj}(s,a)} \hat{\textbf{M}}_{n,s,a} - \E[\hat{\textbf{M}}_{n,s,a} \mid \N_{traj}(s,a)] \right\Vert > \frac{\epsilon}{2} \middle\vert \N_{traj}(s,a) \right) $$
 is bounded above by the following expression.
\begin{align*}
    &\prob\left(\exists \textbf{u,v} \in C; \left\vert \sum_{n \in \N_{traj}(s,a)} \frac{1}{N_{traj}(s,a)} \textbf{u}^T(\hat{\textbf{M}}_{n,s,a} - \E[\hat{\textbf{M}}_{n,s,a} \mid \N_{traj}(s,a)])\textbf{v} \right\vert > \frac{\epsilon}{4} \middle\vert \N_{traj}(s,a) \right)\\
    &\leq \sum_{\textbf{u,v} \in C} \prob\left( \left\vert \sum_{n \in \N_{traj}(s,a)} \frac{1}{N_{traj}(s,a)} \textbf{u}^T(\hat{\textbf{M}}_{n,s,a} - \E[\hat{\textbf{M}}_{n,s,a} \mid \N_{traj}(s,a)])\textbf{v} \right\vert > \frac{\epsilon}{4} \middle\vert \N_{traj}(s,a) \right)\\
    &< 2*12^{2S}*e^{-\frac{\epsilon^2N_{traj}(s,a)}{32}}
\end{align*}

This is less than $\delta$ for $N_{traj}(s,a) > \frac{32}{\epsilon^2} (2S\log(12)+\log(\frac{2}{\delta}))$. Since this holds for such values of $N_{traj}(s,a)$ irrespective of $\N_{traj}(s,a)$, we can conclude that for $N_{traj}(s,a) > \frac{32}{\epsilon^2} (2S\log(12)+\log(\frac{2}{\delta}))$, with probability universally greater than $1-\delta$, 

$$\|\hat{\textbf{M}}_{s,a} - \textbf{M}_{s,a}\| < \frac{\epsilon}{2} + \max_j \left\Vert \Mix(j) \right\Vert$$

Alternatively, this establishes that with probability greater than $1-\delta$, we have the following inequality involving the random variables $\hat{\textbf{M}}_{s,a}$ and $N_{traj}(s,a)$.

$$\|\hat{\textbf{M}}_{s,a} - \textbf{M}_{s,a}\| < \sqrt{\frac{32}{N_{traj}(s,a)} (2S\log(12)+\log(\frac{2}{\delta}))} + \max_j \left\Vert \Mix(j) \right\Vert$$

\subsubsection{\textbf{Bounding the mixing term}}\label{sssec:mix-bound-subspace-est}

We now resolve the last remaining thread, which is that of bounding the mixing term. Let's fix a $j$ for this section, since proving our upper bounds for arbitrary $j$ is sufficient. Let the joint distribution of the observations under label $j$ be $\chi$. Let its marginal on the segment $\Omega_i$ be $\chi_i$. Let the marginals on each of the $G$ single-step sub-blocks be $\chi_{i,g}$. Denote the product distribution $\prod_g \chi_{i,g}$ by $Q_i$. 
\begin{align*}
    \|\Mix(j)\| &=  \left\Vert \E \left[  \frac{\textbf{w}_{n,1}\textbf{w}_{n,2}^T}{c_{n,1}c_{n,2}} \  \middle\vert k_n=j, \B_n\right] -\prob_j(\cdot \mid s,a)\prob_j(\cdot \mid s,a)^T\right\Vert\\
    \
    &= \left\Vert \frac{1}{\prob(\B_n)} \E_\chi \left[  \frac{\textbf{w}_{n,1}\textbf{w}_{n,2}^T}{c_{n,1}c_{n,2}} \ind_{\B_n}\right] - \prob_j(\cdot \mid s,a)\prob_j(\cdot \mid s,a)^T \right\Vert\\
    \
    &\leq  \frac{1}{\prob(\B_n)}\left\Vert \E_\chi \left[  \frac{\textbf{w}_{n,1}\textbf{w}_{n,2}^T}{c_{n,1}c_{n,2}} \ind_{\B_n}\right] - \E_{\chi_1}\left[ \frac{\textbf{w}_{n,1}}{c_{n,1}} \ind_{c_{n,1} \neq 0}\right]\E_{\chi_2}\left[ \frac{\textbf{w}_{n,2}^T}{c_{n,2}} \ind_{c_{n,2} \neq 0}\right]  \right\Vert\\
    &\hspace{5ex}+  \frac{1}{\prob(\B_n)}\left\Vert \E_{\chi_1}\left[ \frac{\textbf{w}_{n,1}}{c_{n,1}} \ind_{c_{n,1} \neq 0}\right]\E_{\chi_2}\left[ \frac{\textbf{w}_{n,2}^T}{c_{n,2}} \ind_{c_{n,2} \neq 0}\right] - \prob_{Q_1}(c_{n,1}\neq 0)\prob_{Q_2}(c_{n,2}\neq 0) \prob_j(\cdot \mid s,a)\prob_j(\cdot \mid s,a)^T\right\Vert\\
    &\hspace{5ex} +  \frac{1}{\prob(\B_n)} \left\Vert(\prob_{Q_1}(c_{n,1}\neq 0)\prob_{Q_2}(c_{n,2}\neq 0) - \prob(c_{n,1}\neq 0)\prob(c_{n,2}\neq 0))\prob_j(\cdot \mid s,a)\prob_j(\cdot \mid s,a)^T (\right\Vert\\
    &\hspace{5ex} +  \frac{1}{\prob(\B_n)} \left\Vert\left( \prob(c_{n,1}\neq 0)\prob(c_{n,2} \neq 0) - \prob(\B_n)\right) \prob_j(\cdot \mid s,a)\prob_j(\cdot \mid s,a)^T\right\Vert\\
    \
    &\leq \frac{1}{\prob(\B_n)}\left\Vert \E_\chi \left[  \frac{\textbf{w}_{n,1}\textbf{w}_{n,2}^T}{c_{n,1}c_{n,2}} \ind_{\B_n}\right] - \E_{\chi_1}\left[ \frac{\textbf{w}_{n,1}}{c_{n,1}} \ind_{c_{n,1} \neq 0}\right]\E_{\chi_2}\left[ \frac{\textbf{w}_{n,2}^T}{c_{n,2}} \ind_{c_{n,2} \neq 0}\right]  \right\Vert\\
    &\hspace{5ex}+  \frac{1}{\prob(\B_n)}\left\Vert \E_{\chi_1}\left[ \frac{\textbf{w}_{n,1}}{c_{n,1}} \ind_{c_{n,1} \neq 0}\right]\E_{\chi_2}\left[ \frac{\textbf{w}_{n,2}^T}{c_{n,2}} \ind_{c_{n,2} \neq 0}\right] - \prob_{Q_1}(c_{n,1}\neq 0)\prob_{Q_2}(c_{n,2}\neq 0) \prob_j(\cdot \mid s,a)\prob_j(\cdot \mid s,a)^T\right\Vert\\
    &\hspace{5ex} +  \frac{1}{\prob(\B_n)} \left\vert\prob_{Q_1}(c_{n,1}\neq 0)\prob_{Q_2}(c_{n,2}\neq 0) - \prob_{\chi_1}(c_{n,1}\neq 0)\prob_{\chi_2}(c_{n,2}\neq 0)\right\vert\\
    &\hspace{5ex} +  \frac{1}{\prob(\B_n)} \left\vert\prob_{\chi_1}(c_{n,1}\neq 0)\prob_{\chi_2}(c_{n,2} \neq 0) - \prob(\B_n)\right\vert \numberthis \label{eqn:mixing-term-inequality}
\end{align*}

Here, in the last inequality, we used the fact that $\|\prob_j(\cdot \mid s,a)\|_2 \leq \|\prob_j(\cdot \mid s,a)\|_1 = 1$ and $\|\textbf{a}\textbf{a}^T\| \leq \|\textbf{a}\|_2\|\textbf{a}\|_2$. Also note that $\prob_{\chi_i}(c_{n,i} \neq 0) = \prob_\chi(c_{n,i} \neq 0) = \prob(c_{n,i} \neq 0)$.

Intuitively, the first term represents mixing of the expectation across the two segments, the second term represents mixing of the expectations across the single-step sub-blocks inside segments, the third term represents mixing of the observation probabilities across the single-step sub-blocks inside segments, and the fourth term represents mixing of the observation probabilities across the two segments. In short, the first and fourth represent segment-level mixing while the second and third represent sub-block-level mixing.

\textbf{Bounding the first term (segment-level mixing)}

We will use \cite{bin1994mixing}'s blocking technique again, invoking Lemma~\ref{lem:blocking-technique}. Pick an arbitrary $\textbf{u,v}\in \Sph^{S-1}$. Recall that
$$\hat{\prob}_{n,i}(\cdot \mid s,a) := \frac{\textbf{N}(n,i,s,a, \cdot)}{N(n,i,s,a)} \ind_{N(n,i,s,a) \neq 0} = \frac{\textbf{w}_{n,i}}{c_{n,i}} \ind_{c_{n,i} \neq 0}$$
Consider the real-valued random variable 
\begin{align*}
    h_{\textbf{u,v}} &:= \textbf{u}^T \left( \frac{\textbf{w}_{n,1}\textbf{w}_{n,2}^T}{c_{n,1}c_{n,2}} \ind_{\B_n} \right)\textbf{v}
\end{align*}

We have the following basic computations for expectations. Remember that $\ind_{\B_n} = \ind_{c_{n,1}c_{n,2}\neq 0} = \ind_{c_{n,1} \neq 0} \ind_{c_{n,2} \neq 0}$.

$$\E_\chi h_{\textbf{u,v}} = \textbf{u}^T \left( \E_\chi \left[  \frac{\textbf{w}_{n,1}\textbf{w}_{n,2}^T}{c_{n,1}c_{n,2}} \ind_{\B_n} \right] \right)\textbf{v}$$
and
$$\E_{\chi_1\times\chi_2} h_{\textbf{u,v}} = \textbf{u}^T \left( \E_{\chi_1}\left[\frac{\textbf{w}_{n,1}}{c_{n,1}} \ind_{c_{n,1} \neq 0}\right]  \E_{\chi_2}\left[\frac{\textbf{w}_{n,2}^T}{c_{n,2}} \ind_{c_{n,2} \neq 0}\right] \right)\textbf{v}$$
This allows us to establish the following relation.
$$\sup_{\textbf{u,v} \in \Sph^{S-1}} \left\vert \E_\chi h_{\textbf{u,v}} - \E_{\chi_1\times\chi_2} h_{\textbf{u,v}} \right\vert = \left\Vert \E_\chi \left[  \frac{\textbf{w}_{n,1}\textbf{w}_{n,2}^T}{c_{n,1}c_{n,2}} \ind_{\B_n}\right] - \E_{\chi_1}\left[ \frac{\textbf{w}_{n,1}}{c_{n,1}} \ind_{c_{n,1} \neq 0}\right]\E_{\chi_2}\left[ \frac{\textbf{w}_{n,2}^T}{c_{n,2}} \ind_{c_{n,2} \neq 0}\right]  \right\Vert$$

Now, we want to use Lemma~\ref{lem:blocking-technique}. Note the following upper bound.
\begin{align*}
    |h_{\textbf{u,v}}| &\leq  \|\textbf{u}\|_2 \left\Vert \frac{\textbf{w}_{n,1}}{c_{n,1}} \right\Vert_2 \left\Vert \frac{\textbf{w}_{n,2}}{c_{n,2}} \right\Vert_2 \|\textbf{v}\|_2\\
    &\leq \left\Vert \frac{\textbf{w}_{n,1}}{c_{n,1}} \right\Vert_1 \left\Vert \frac{\textbf{w}_{n,2}}{c_{n,2}} \right\Vert_1\\
    &= 1
\end{align*}

So, we can use Lemma~\ref{lem:blocking-technique} with $C = C_{\textbf{u,v}} := \sup h_{\textbf{u,v}}$ and $n = 2$ for any $\textbf{u,v} \in \Sph^{S-1}$, giving us the following inequality.

\begin{equation}\label{eqn:mixing-bound-one}
    \left\vert \E_\chi h_{\textbf{u,v}} - \E_{\chi_1\times\chi_2} h_{\textbf{u,v}} \right\vert \leq 4\lambda^{\frac{T_n}{4}}
\end{equation}

Since inequality~\ref{eqn:mixing-bound-one} holds for any $\textbf{u}, \textbf{v} \in \Sph^{S-1}$, we can take the supremum over such $\textbf{u}, \textbf{v}$ to get the desired inequality below. We also recall that $\prob(\B_n) \geq \frac{d_{min}{s,a}}{2}$ from equation~\ref{eqn:cm1-cm2-bound}.

\begin{align*}
    \frac{1}{\prob(\B_n)}\left\Vert \E_\chi \left[  \frac{\textbf{w}_{n,1}\textbf{w}_{n,2}^T}{c_{n,1}c_{n,2}} \ind_{\B_n}\right] - \E_{\chi_1}\left[ \frac{\textbf{w}_{n,1}}{c_{n,1}} \ind_{c_{n,1} \neq 0}\right]\E_{\chi_2}\left[ \frac{\textbf{w}_{n,2}^T}{c_{n,2}} \ind_{c_{n,2} \neq 0}\right]  \right\Vert &\leq \frac{1}{\prob(\B_n)} 4\lambda^{\frac{T_n}{4}}\\
    &\leq \frac{8\lambda^{\frac{T_n}{4}}}{d_{min}(s,a)} 
\end{align*}

\textbf{Bounding the second term (sub-block-level mixing)}

Remember that the product distribution $\prod_g \chi_{i,g}$ is $Q_i$. First note that, since under $Q_i$, each observation is independent, we have the following expectation.

\begin{align*}
    \E_{Q_i}\left[ \frac{\textbf{w}_{n,i}}{c_{n,i}} \ind_{c_{n,i} \neq 0} \right] &= \E_{Q_i}\left[ \frac{\E_{Q_i}[\textbf{w}_{n,i} \mid c_{n,i}]}{c_{n,i}} \ind_{c_{n,i} \neq 0} \right]\\
    &= \E_{Q_i}\left[ \frac{\prob_j(\cdot \mid s,a)c_{n,i}}{c_{n,i}} \ind_{c_{n,i} \neq 0} \right]\\
    &= \prob_j(\cdot \mid s,a) \prob_{Q_i}(c_{n,i} \neq 0) \numberthis \label{eqn:q-i-estimator-expectation}
\end{align*}

\begin{remark}
Note that this holds crucially because we are working with the product distribution $Q_i$ over the single-step sub-blocks.
\end{remark}

Also, let $h_\textbf{u} = \textbf{u}^T\frac{\textbf{w}_{n,1}}{c_{n,1}}\ind_{c_{n,1}}$ and let $g_\textbf{v} = \frac{\textbf{w}_{n,2}^T}{c_{n,2}}\ind_{c_{n,2}}\textbf{v}$. Then the second term is exactly given by the following expression.

    $$\frac{1}{\prob(\B_n)} \sup_{\textbf{u}, \textbf{v} \in \Sph^{S-1}} \left\vert\E_{\chi_1}[ h_\textbf{u}]\E_{\chi_2}[g_\textbf{v}] - \E_{Q_1}[ h_\textbf{u}]\E_{Q_2}[g_\textbf{v}] \right\vert$$
    
Also note that both $|h_\textbf{u}|$ and $|g_\textbf{v}|$ are bounded by 1. We then have the following chain of inequalities.

\begin{align*}
    \left\vert\E_{\chi_1}[ h_\textbf{u}]\E_{\chi_2}[g_\textbf{v}] - \E_{Q_1}[ h_\textbf{u}]\E_{Q_2}[g_\textbf{v}] \right\vert &\leq \left\vert\E_{\chi_1}[ h_\textbf{u}] - \E_{Q_1}[ h_\textbf{u}]\right\vert |\E_{\chi_2}[g_\textbf{v}]| + |\E_{\chi_2}[g_\textbf{v}] - \E_{Q_2}[g_\textbf{v}]||\E_{Q_1}[ h_\textbf{u}]|\\
    &\leq |\E_{\chi_1}[ h_\textbf{u}] - \E_{Q_1}[ h_\textbf{u}]| + |\E_{\chi_2}[g_\textbf{v}] - \E_{Q_2}[g_\textbf{v}]|
\end{align*}

Since the single step sub-blocks are separated by at least $\frac{T_n}{8G}$ timesteps, we can apply Lemma~\ref{lem:blocking-technique} with $C=1$ and $n=G$ to get bounds on both terms here, since $Q_i = \prod_g \chi_{i,g}$. Also remember that $\prob(\B_n) \geq \frac{d_{min}(s,a)}{2}$ from equation~\ref{eqn:cm1-cm2-bound}.

\begin{align*}
    \frac{1}{\prob(\B_n)} \sup_{\textbf{u}, \textbf{v} \in \Sph^{S-1}} \left\vert\E_{\chi_1}[ h_\textbf{u}]\E_{\chi_2}[g_\textbf{v}] - \E_{Q_1}[ h_\textbf{u}]\E_{Q_2}[g_\textbf{v}] \right\vert &\leq \frac{1}{\prob(\B_n)}\left(4G\lambda^{\frac{T_n}{8G}} + 4G\lambda^{\frac{T_n}{8G}}\right)\\
    &\leq \frac{16G\lambda^{\frac{T_n}{8G}}}{d_{min}(s,a)}
\end{align*}

\textbf{Bounding the third term (sub-block-level mixing)}

Again, note that the third term is given by the following expression.
\begin{align*}
     \frac{1}{\prob(\B_n)} \left\vert\E_{Q_1}[\ind_{c_{n,1}\neq 0}]\E_{Q_2}[\ind_{c_{n,2}\neq 0}] - \E_{\chi_1}[\ind_{c_{n,1}\neq 0}]\E_{\chi_2}[\ind_{c_{n,2}\neq 0}]\right\vert
\end{align*}

We can bound this above using the fact that $|ab-cd| \leq |b||a-c| + |c||b-d|$, to get the following upper bound.
\begin{align*}
    \E_{Q_2}[\ind_{c_{n,2}\neq 0}]\left\vert\E_{Q_1}[\ind_{c_{n,1}\neq 0}] - \E_{\chi_1}[\ind_{c_{n,1}\neq 0}]\right\vert + \E_{\chi_1}[\ind_{c_{n,1}\neq 0}]\left\vert\E_{Q_2}[\ind_{c_{n,2}\neq 0}] - \E_{\chi_2}[\ind_{c_{n,2}\neq 0}]\right\vert
\end{align*}
This in turn is bounded above by the expression below.
$$|\E_{Q_1}[\ind_{c_{n,1}\neq 0}] - \E_{\chi_1}[\ind_{c_{n,1}\neq 0}]| + |\E_{Q_2}[\ind_{c_{n,2}\neq 0}] - \E_{\chi_2}[\ind_{c_{n,2}\neq 0}]|$$

Since indicator functions are bounded above by 1, we can apply Lemma~\ref{lem:blocking-technique} as in the second term ($C=1$, $n=G$) to bound both the differences above. Skipping the routine details, we finally get the following inequality, analogous to the second term.
\begin{align*}
     \frac{1}{\prob(\B_n)} \left\vert\E_{Q_1}[\ind_{c_{n,1}\neq 0}]\E_{Q_2}[\ind_{c_{n,2}\neq 0}] - \E_{\chi_1}[\ind_{c_{n,1}\neq 0}]\E_{\chi_2}[\ind_{c_{n,2}\neq 0}]\right\vert \leq \frac{16G\lambda^{\frac{T_n}{8G}}}{d_{min}(s,a)}
\end{align*}

\textbf{Bounding the fourth term (segment-level mixing)}

Now note that the fourth term is the same as the expression below.
$$\frac{1}{\prob(\B_n)} |\E_{\chi_1}[\ind_{c_{n,1}\neq 0}]\E_{\chi_2}[\ind_{c_{n,2}\neq 0}] - \E_\chi[\ind_{c_{n,1}\neq 0}\ind_{c_{n,2}\neq 0}]| = \frac{1}{\prob(\B_n)} |\E_{\chi_1\times \chi_2}[\ind_{c_{n,1}\neq 0}\ind_{c_{n,2}\neq 0}] - \E_\chi[\ind_{c_{n,1}\neq 0}\ind_{c_{n,2}\neq 0}]|$$

We can now apply Lemma~\ref{lem:blocking-technique} with $C=1$ and $n=2$. The segments are separated by $T$ and $\prob(\B_n) \geq \frac{d_{min}(s,a)}{2}$, giving us the following bound.
$$\frac{1}{\prob(\B_n)} \left\vert\prob_{\chi_1}(c_{n,1}\neq 0)\prob_{\chi_2}(c_{n,2} \neq 0) - \prob(\B_n)\right\vert \leq \frac{8\lambda^{\frac{T_n}{4}}}{d_{min}(s,a)}$$

Combining all these, we get that

\begin{align*}
    \|\hat{\textbf{M}}_{s,a} - \textbf{M}_{s,a}\| &< \sqrt{\frac{32G}{N_{traj}(s,a)} (2S\log(12)+\log(\frac{2}{\delta}))} + \frac{16}{d_{min}(s,a)}\left(\frac{1}{4}\right)^{\frac{T_n}{4t_{mix}}} + \frac{32G}{d_{min}(s,a)}\left(\frac{1}{4}\right)^{\frac{T_n}{8Gt_{mix}}}\\
    &\leq \sqrt{\frac{32}{N_{traj}(s,a)} (2S\log(12)+\log(\frac{2}{\delta}))} + \frac{48G}{d_{min}(s,a)}\left(\frac{1}{4}\right)^{\frac{T_n}{8Gt_{mix}}}\numberthis \label{eqn:h-hat-h-tilde}
\end{align*}

as desired.

\end{proof}
\newpage
\section{Proof of Theorem~\ref{thm:clustering}}

\Clustering*

\begin{proof}

Consider the testing of trajectories $m$ and $n$. Recall that we defined 
$$\dist_{1}(m,n) := \max_{(s,a) \in SA_{\alpha}} \left[ \left( \left(\hat{\prob}_{n,1}(\cdot \mid s,a) - \hat{\prob}_{m,1}(\cdot \mid s,a) \right)^T\textbf{V}_{s,a} \right) \left(\left( \hat{\prob}_{n,2}(\cdot \mid s,a) - \hat{\prob}_{m,2}(\cdot \mid s,a) \right)^T \textbf{V}_{s,a} \right)^T \right]$$ Let $k_m$ be the label of trajectory $m$ and $k_n$ the label of trajectory $n$. According to our assumptions, if $k_m \neq k_n$, then we have an $s,a$ so that $d_{k_m}(s,a), d_{k_n}(s,a) \geq \alpha$ and $\|\prob_{k_m}(\cdot \mid s,a) - \prob_{k_n}(\cdot \mid s,a)\|_2 \geq \Delta$. We will make $s,a$ implicit in our notation except in $\prob_j(\cdot \mid s,a)$. Let $c_{n,i} := N(n,i,s,a)$, $\textbf{w}_{n,i} := \textbf{N}(n,i,s,a, \cdot)$. Recall that we have two nested partitions: (1) of the entire trajectory into the two $\Omega_i$ and (2) of each segment $\Omega_i$ into $G$ blocks. Finally, define $\dist_{1,(s,a)}$ as below, suppressing $m$ and $n$. Note that $\dist_1(m,n)$ is the maximum of $\dist_{1,(s,a)}$ over all $(s,a) \in \Freq_\beta$, for the given two trajectories $m$ and $n$.
\begin{align*}
\dist_{1, (s,a)}&:= \left[ \left( (\hat{\prob}_{n,1}(\cdot \mid s,a) - \hat{\prob}_{m,1}(\cdot \mid s,a))^T\textbf{V}_{s,a} \right) \left( (\hat{\prob}_{n,2}(\cdot \mid s,a) - \hat{\prob}_{m,2}(\cdot \mid s,a))^T V_{s,a} \right)^T \right]
\end{align*}

We want to show that this is close to $\|\Delta_{m,n}(s,a)\|_2^2$ for the $(s,a)$ pairs that we search over, where
$$\Delta_{m,n}(s,a) = \prob_{k_m}(\cdot \mid s,a) - \prob_{k_n}(\cdot \mid s,a)$$

Assume the lemma below for now, we prove it in the next subsection.
\begin{restatable}{lemma}{ClusteringStatDeltaBound}\label{lem:clustering-stat-delta-mn-bound}
We claim that there is a universal constant $C_1$ so that for any $(s,a)$ with $d_{min}(s,a) \geq \alpha/3$, with probability at least $1-\delta$,
\begin{align*}
    \left\vert \dist_{1, (s,a)} - \left\Vert \Delta_{m,n}(s,a) \right\Vert_2^2 \right\vert &\leq C_1\left(\sqrt{\frac{K+\log(1/\delta)}{G\alpha}} \right) + 4\epsilon_{sub}(\delta/2)
\end{align*}
whenever $T_n \geq \Omega\left( Gt_{mix}\log(G/\delta)\log(1/\alpha) \right)$ and $G \geq \Omega\left( \frac{\log(1/\delta)}{\alpha^2}\right)$. Here, $\epsilon_{sub}(\delta)$ is the high probability bound on $\|\prob_{j}(\cdot \mid s,a) - \textbf{V}_{s,a}\textbf{V}_{s,a}^T\prob_{j}(\cdot \mid s,a)\|_2$ with $j=k_n,k_m$, from Theorem~\ref{thm:subspace-est} (satisfied with probability $> 1-\delta$).
\end{restatable}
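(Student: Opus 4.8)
The plan is to reduce everything to the subspace-estimation machinery already in hand, exploiting that $\textbf{V}_{s,a}$ is computed from $\N_{sub}$ and is therefore independent of the clustering data in $\N_{clust}$: conditioning on $\N_{sub}$ makes $P := \textbf{V}_{s,a}\textbf{V}_{s,a}^T$ a \emph{fixed} rank-$K$ orthogonal projector for the rest of the argument. Writing $\hat{\bm{\delta}}_i := \hat{\prob}_{n,i}(\cdot\mid s,a) - \hat{\prob}_{m,i}(\cdot\mid s,a)$ and suppressing $(s,a)$ in $\Delta_{m,n}$, we have $\dist_{1,(s,a)} = \hat{\bm{\delta}}_1^T P\,\hat{\bm{\delta}}_2$, and I would split the target as
\begin{align*}
\dist_{1,(s,a)} - \|\Delta_{m,n}\|_2^2 = \big(\hat{\bm{\delta}}_1^T P\,\hat{\bm{\delta}}_2 - \Delta_{m,n}^T P\,\Delta_{m,n}\big) + \big(\Delta_{m,n}^T P\,\Delta_{m,n} - \|\Delta_{m,n}\|_2^2\big),
\end{align*}
the first bracket being the estimation error and the second the projection error.

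First I would dispatch the projection error. Since $P$ is an orthogonal projector, $\Delta_{m,n}^T P\,\Delta_{m,n} = \|P\Delta_{m,n}\|_2^2$, so the second bracket equals $-\|(I-P)\Delta_{m,n}\|_2^2$. By Theorem~\ref{thm:subspace-est} applied at this $(s,a)$ (which satisfies $d_{min}(s,a)\geq\alpha/3$), with probability $\geq 1-\delta/2$ we have $\|(I-P)\prob_{k_j}(\cdot\mid s,a)\|_2 \leq \epsilon_{sub}(\delta/2)$ for $j=k_m,k_n$, so by the triangle inequality $\|(I-P)\Delta_{m,n}\|_2 \leq 2\epsilon_{sub}(\delta/2)$. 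Combining this with the trivial bound $\|(I-P)\Delta_{m,n}\|_2\leq\|\Delta_{m,n}\|_2\leq\sqrt2$ gives $\|(I-P)\Delta_{m,n}\|_2^2 \leq 4\epsilon_{sub}(\delta/2)$, which is exactly the additive term in the claim.

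The bulk of the work is the estimation error. I would write $\hat{\bm{\delta}}_i = \Delta_{m,n} + \bm{\eta}_i$ and expand
\begin{align*}
\hat{\bm{\delta}}_1^T P\,\hat{\bm{\delta}}_2 - \Delta_{m,n}^T P\,\Delta_{m,n} = \Delta_{m,n}^T P\,\bm{\eta}_2 + \bm{\eta}_1^T P\,\Delta_{m,n} + \bm{\eta}_1^T P\,\bm{\eta}_2.
\end{align*}
The decisive structural fact is that every noise term is pre-multiplied by $P$, so only the $K$-dimensional components $P\bm{\eta}_i$ enter; a covering argument over $\Sph^{K-1}$ rather than $\Sph^{S-1}$ then costs $\log(12^{K})\asymp K$ instead of $\asymp S$, which is precisely where the $K$ (in place of $S$) in the rate comes from. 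To control each $P\bm{\eta}_i$ I would mirror Proposition~\ref{prop:subspace-est}: condition on the random set of block-times witnessing $(s,a)$ in segment $i$, apply the conditional Hoeffding inequality (Lemma~\ref{lem:conditional-hoeffdings}) under the product surrogate $Q_i=\prod_g\chi_{i,g}$, and then pay the bias between $Q_i$ and the true law via the blocking technique (Lemma~\ref{lem:blocking-technique}). The hypotheses $d_{min}(s,a)\geq\alpha/3$ and $G\geq\Omega(\log(1/\delta)/\alpha^2)$ guarantee, by a Hoeffding bound as in the subspace proof, that $\gtrsim G\alpha$ blocks witness $(s,a)$ per segment, so each projected coordinate concentrates at rate $\sqrt{(K+\log(1/\delta))/(G\alpha)}$, while $T_n\geq\Omega(Gt_{mix}\log(G/\delta)\log(1/\alpha))$ makes every mixing-bias term negligible.

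The main obstacle is the cross term $\bm{\eta}_1^T P\,\bm{\eta}_2$, which is quadratic in the noise. Here the double estimator is essential: because $\Omega_1$ and $\Omega_2$ are separated by a multiple of $t_{mix}$, $\bm{\eta}_1$ and $\bm{\eta}_2$ are near-independent, so $\E[\bm{\eta}_1^T P\,\bm{\eta}_2]$ factors (up to a blocking-bias term) into $\E[\bm{\eta}_1]^T P\,\E[\bm{\eta}_2]$, each factor being $O(\epsilon_{sub})$-small in mean, and the fluctuation of the bilinear form is handled by the same conditional-Hoeffding-plus-covering argument, now over $\Sph^{K-1}\times\Sph^{K-1}$. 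The delicate points I expect to fight through are (a) the ratio-of-random-variables form of $\hat{\prob}_{n,i}$, which forces me to condition on observation counts and recover an unconditional bound exactly as in Section~\ref{ssec:covering-arg-subspace-est}, and (b) the bookkeeping of the several segment-level and sub-block-level mixing biases of Section~\ref{sssec:mix-bound-subspace-est}, so that all of them collapse into the single stated rate under the stated conditions on $G$ and $T_n$. A final union bound over the four error contributions and the two subspace events, splitting $\delta$, yields the claim with a universal constant $C_1$.
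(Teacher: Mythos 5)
Your proposal is correct in substance and uses the same toolkit as the paper (conditioning on the random observation set $\G$, the conditional Hoeffding inequality of Lemma~\ref{lem:conditional-hoeffdings} under the product surrogate $Q_i$, a covering of $\Sph^{K-1}$ to get the $K$ in place of $S$, the blocking technique of Lemma~\ref{lem:blocking-technique} for the mixing bias, and a separate high-probability argument that $(s,a)$ is observed $\gtrsim G\alpha$ times per segment), but the algebraic decomposition is genuinely different. The paper does not center the bilinear form at $\Delta_{m,n}^T P\,\Delta_{m,n}$; it compares $\hat{\bm{\delta}}_1^T P\,\hat{\bm{\delta}}_2$ to the \emph{product of conditional expectations} $\E_{Q_1}[\cdot\mid\G]^T\E_{Q_2}[\cdot\mid\G]$ via $|a^Tb-\bar a^T\bar b|\leq \|a-\bar a\|_2\|\bar b\|_2+\|a\|_2\|b-\bar b\|_2$, which produces only the two linear deviation terms and \emph{no cross term}; the $4\epsilon_{sub}$ then arises from comparing $\textbf{diff}_1^T\textbf{V}_{s,a}\textbf{V}_{s,a}^T\textbf{diff}_2$ to $\|\Delta_{m,n}\|_2^2$. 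Your centering at the truth forces you to confront the quadratic term $\bm{\eta}_1^T P\,\bm{\eta}_2$, and here your description is the one weak spot: you do not need (and should not lean on) the near-independence of $\Omega_1$ and $\Omega_2$, because on the good event $\E_{Q_i}[\bm{\eta}_i\mid\G]$ is exactly zero (not ``$O(\epsilon_{sub})$-small'' --- the subspace error has nothing to do with the noise mean), and Cauchy--Schwarz gives $|\bm{\eta}_1^TP\bm{\eta}_2|\leq\|P\bm{\eta}_1\|_2\|P\bm{\eta}_2\|_2$, which is the square of the already-controlled rate and hence dominated by the linear terms in the nontrivial regime. Indeed the paper explicitly remarks, right after its product-difference inequality, that the double estimator plays no role in this high-probability exact-clustering bound. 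Your treatment of the projection error as $-\|(I-P)\Delta_{m,n}\|_2^2\leq 2\sqrt{2}\,\epsilon_{sub}(\delta/2)\leq 4\epsilon_{sub}(\delta/2)$ is fine and lands on the same constant. Net effect: both routes work; the paper's choice of centering buys a cleaner proof with no second-order term, while yours is slightly tighter on the $\epsilon_{sub}$ contribution but requires the extra (easy, once done by Cauchy--Schwarz rather than by independence) cross-term step.
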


We now set $G = \left( \frac{T_n}{t_{mix}}\right)^{\frac{2}{3}}$. Then a sufficient condition on $T_n$ to meet the conditions of the lemma is $T_n = \Omega(t_{mix}\log^4(1/\delta)/\alpha^3)$, under which, with probability at lest $1-\delta$, we have the following bound for $(s,a)$ with $d_{min}(s,a) \geq \alpha/3$.

\begin{align*}
    \left\vert \dist_{1, (s,a)} - \left\Vert \Delta_{m,n}(s,a) \right\Vert_2^2 \right\vert \leq O\left(\sqrt{\frac{K\log(1/\delta)}{\alpha}}\left( \frac{t_{mix}}{T_n} \right)^{\frac{1}{3}} \right) + 4\epsilon_{sub}(\delta/2) \numberthis \label{eqn:single-s-a-dist-bound}
\end{align*}

It is now easy to see that the first term on the right-hand side is less than $\Delta^2/8$ when $T_n = \Omega\left(K^{3/2}t_{mix} \frac{\log^{3/2}(1/\delta)}{\Delta^6\alpha^{3/2}}\right)$ and $T_n = \Omega(t_{mix}\log^4(1/\delta)/\alpha^3)$. We can combine these to have the guarantee that the first term on the right-hand side is less $\Delta^2/8$ with probability at least $1-\delta$ when $T_n =  \Omega\left( K^{3/2}t_{mix}\frac{\log^4(1/\delta)}{\Delta^6\alpha^{3}}\right)$. 

Now note that if $\beta \geq \alpha/3$, then a separating state action pair always lies in $\Freq_\beta$ and thus, the maximum over the $\left\Vert \Delta_{m,n}(s,a) \right\Vert_2^2$ values corresponding to $\Freq_\beta$ is in fact either $0$ if $k_m = k_n$ or larger than $\Delta^2$ if $k_m \neq k_n$. So, if $\epsilon_{sub}(\delta/2) \leq \Delta^2/32$ and for each of the $(s,a)$ pairs, the first term on the right-hand side of inequality~\ref{eqn:single-s-a-dist-bound} is less than $\Delta^2/8$, then our distance estimate $\dist_1(m,n)$ is on the right side of any threshold as long as $\Delta^2/4 \leq \tau \leq \Delta^2/2$. That is, the distance estimate is then less than the threshold if $k_m=k_n$, and larger than it if $k_m \neq k_n$.

Note that upon choosing an occurrence threshold of order $\alpha$, we will have at most $O(1/\alpha)$ many $(s,a)$ pairs in $\Freq_\beta$ to maximize $\dist_{1, (s,a)}$ over to get $\dist_1(m,n)$. By applying a union bound over all $(s,a)$ pairs in $\Freq_\beta$ and using the conclusion of the previous paragraph, we correctly determine if $k_m = k_n$ with probability $1-\delta$ for $T_n =  \Omega\left( K^{3/2}t_{mix}\frac{\log^4(1/(\alpha\delta))}{\Delta^6\alpha^{3}}\right)$, as long as $\epsilon_{sub}(\delta/2) \leq \Delta^2/32$ and $\Delta^2/4 \leq \tau \leq \Delta^2/2$.

By applying a union bound over incorrectly deciding whether or not $k_m = k_n$ for any of the $N_{clust}(N_{clust}-1)/2$ pairs, we get that we can recover the true clusters with probability at least $1-\delta$ for $T_n =  \Omega\left( K^{3/2}t_{mix}\frac{\log^4(N_{clust}/(\alpha\delta))}{\Delta^6\alpha^{3}}\right)$, whenever $\epsilon_{sub} \leq \Delta^2/32$ and as long as $\epsilon_{sub}(\delta/2) \leq \Delta^2/32$ and $\Delta^2/4 \leq \tau \leq \Delta^2/2$.
\end{proof}

\subsection{Proof of Lemma~\ref{lem:clustering-stat-delta-mn-bound}}

We recall the statement of the lemma.
\ClusteringStatDeltaBound*

\textbf{Notation:} We say $c_{n,i} = N(n,i,s,a)$ as in the statement of the lemma and $\textbf{w}_{n,i} = \textbf{N}(n,i,s,a, \cdot)$. Let the joint distribution of the observations over the pair of trajectories $(m,n)$ be $\chi$. This means that $\chi$ is the product of the joint distribution of the observations over the trajectory $m$ and that of the observations over the trajectory $n$, since trajectories are generated independently. Let its marginals on the segments $\Omega_i$ be $\chi_{i}$. Let the marginals on each of the $G$ single-step sub-blocks along with their next states be $\chi_{i,g}$. Denote the product distribution $\prod_g \chi_{i,g}$ by $Q_i$. Let $\G(s,a)$ denote the two sets of indices where the state-action pair $(s,a)$ is observed in trajectories $n$ and $m$. For brevity, we will abbreviate $\G(s,a)$ to $\G$. Note that the sizes of these two sets are exactly $c_{n,i}$ and $c_{m,i}$ respectively.

We first prove some preliminary lemmas.

\subsubsection{Decomposition of \texorpdfstring{$|\dist_{1, (s,a)} - \left\Vert \Delta_{m,n}(s,a) \right\Vert_2^2|$}{blah}}

\begin{lemma}\label{lem:clustering-stat-decomp}
We claim that for each fixed value of $\G(s,a)$ (abbreviated to $\G$), with probability at least $1-\delta$, the following bound holds.
\begin{equation}
    \left\vert \dist_{1, (s,a)} - \left\Vert \Delta_{m,n}(s,a) \right\Vert_2^2 \right\vert\leq \sum_{i=1}^2 2\left\Vert \bm{\Delta}_i - \E_{Q_i}[\bm{\Delta}_i \mid \G] \right\Vert_2 + 4\epsilon_{sub}(\delta) + 4\left(\max_{i} \ind_{c_{n,i} = 0} + \max_{i} \ind_{c_{m,i} = 0} \right) 
\end{equation}
Here $c_{n,i} = N(n,i,s,a)$, $\epsilon_{sub}(\delta)$ is the high probability bound on $\|\prob_{j_l}(\cdot \mid s,a) - \textbf{V}_{s,a}\textbf{V}_{s,a}^T\prob_{j_l}(\cdot \mid s,a)\|_2$ from Theorem~\ref{thm:subspace-est} (satisfied with probability $> 1-\delta$), and
$$\bm{\Delta}_i^T = (\hat{\prob}_{n,i}(\cdot \mid s,a) - \hat{\prob}_{m,i}(\cdot \mid s,a))^T\textbf{V}_{s,a}$$
\end{lemma}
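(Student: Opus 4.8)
The plan is to pivot $\dist_{1,(s,a)} = \bm{\Delta}_1^T\bm{\Delta}_2$ through the quantity $\bm{\delta}_i := \E_{Q_i}[\bm{\Delta}_i \mid \G]$, the embedded empirical difference averaged under the \emph{product} distribution $Q_i = \prod_g \chi_{i,g}$ over single-step sub-blocks. Writing
$$\left\vert \dist_{1,(s,a)} - \|\Delta_{m,n}(s,a)\|_2^2 \right\vert \le \left\vert \bm{\Delta}_1^T\bm{\Delta}_2 - \bm{\delta}_1^T\bm{\delta}_2 \right\vert + \left\vert \bm{\delta}_1^T\bm{\delta}_2 - \|\Delta_{m,n}(s,a)\|_2^2 \right\vert,$$
I would treat the first summand as a concentration term (to be controlled in the \emph{following} lemma) and the second as a deterministic approximation term governed by the subspace guarantee. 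Only the subspace bound $\epsilon_{sub}(\delta)$ injects randomness here; since $\textbf{V}_{s,a}$ is computed from $\N_{sub}$ while $m,n\in\N_{clust}$ are independent of it, the event on which Theorem~\ref{thm:subspace-est} holds is unaffected by conditioning on $\G$.

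For the first summand I would split the bilinear difference as $\bm{\Delta}_1^T\bm{\Delta}_2 - \bm{\delta}_1^T\bm{\delta}_2 = (\bm{\Delta}_1-\bm{\delta}_1)^T\bm{\Delta}_2 + \bm{\delta}_1^T(\bm{\Delta}_2-\bm{\delta}_2)$ and apply Cauchy--Schwarz. Each factor $\bm{\Delta}_i = \textbf{V}_{s,a}^T(\hat{\prob}_{n,i}(\cdot\mid s,a) - \hat{\prob}_{m,i}(\cdot\mid s,a))$ is the image under the norm-nonincreasing map $\textbf{V}_{s,a}^T$ of a difference of two (sub-)probability vectors, so $\|\bm{\Delta}_i\|_2 \le \|\hat{\prob}_{n,i}(\cdot\mid s,a) - \hat{\prob}_{m,i}(\cdot\mid s,a)\|_1 \le 2$, and $\|\bm{\delta}_i\|_2 \le 2$ follows by Jensen. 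This gives exactly $\sum_{i=1}^2 2\|\bm{\Delta}_i - \bm{\delta}_i\|_2$.

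The crux is identifying $\bm{\delta}_i$. Conditioning on $\G$ fixes the recorded counts $c_{n,i}, c_{m,i}$ and the positions at which $(s,a)$ occurs; under $Q_i$ the next states at those positions are independent draws from the true kernel, so exactly as in equation~\ref{eqn:q-i-estimator-expectation} one obtains $\E_{Q_i}[\hat{\prob}_{n,i}(\cdot\mid s,a)\mid\G] = \prob_{k_n}(\cdot\mid s,a)$ whenever $c_{n,i}\neq 0$ (and $0$ otherwise), and likewise for $m$. Hence, when all four counts are nonzero, $\bm{\delta}_1 = \bm{\delta}_2 = \textbf{V}_{s,a}^T(\prob_{k_n}(\cdot\mid s,a) - \prob_{k_m}(\cdot\mid s,a))$, so that $\bm{\delta}_1^T\bm{\delta}_2 = \|\textbf{V}_{s,a}^T\Delta_{m,n}(s,a)\|_2^2$. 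By the Pythagorean identity for the orthogonal projector $\textbf{V}_{s,a}\textbf{V}_{s,a}^T$,
$$\|\Delta_{m,n}(s,a)\|_2^2 - \|\textbf{V}_{s,a}^T\Delta_{m,n}(s,a)\|_2^2 = \|(I - \textbf{V}_{s,a}\textbf{V}_{s,a}^T)\Delta_{m,n}(s,a)\|_2^2 \ge 0,$$
and since $\Delta_{m,n}(s,a)$ is the difference of $\prob_{k_m}(\cdot\mid s,a)$ and $\prob_{k_n}(\cdot\mid s,a)$, the triangle inequality together with Theorem~\ref{thm:subspace-est} bounds this residual by $(2\epsilon_{sub}(\delta))^2 \le 4\epsilon_{sub}(\delta)$, accounting for the second term on the right-hand side.

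The remaining obstacle, and the reason for the last term, is the zero-count case, where I expect the bookkeeping to be the delicate part. When some $c_{n,i}$ or $c_{m,i}$ vanishes, $\hat{\prob}$ is set to $0$ and the clean identity $\bm{\delta}_i = \textbf{V}_{s,a}^T(\prob_{k_n} - \prob_{k_m})$ breaks. Here I would fall back on the crude bounds $\|\bm{\delta}_i\|_2 \le \sqrt{2}$ (a difference of at most two probability vectors, projected) and $\|\Delta_{m,n}(s,a)\|_2^2 \le 2$, giving $|\bm{\delta}_1^T\bm{\delta}_2 - \|\Delta_{m,n}(s,a)\|_2^2| \le 2 + 2 = 4$; since at least one of $\max_i \ind_{c_{n,i}=0}$ or $\max_i \ind_{c_{m,i}=0}$ then equals $1$, this slack is charged to $4(\max_i \ind_{c_{n,i}=0} + \max_i \ind_{c_{m,i}=0})$. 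Adding the bounds for the two summands yields the stated inequality, with the one care-point being that the product-distribution expectation identity must be applied on the conditioning $\sigma$-algebra generated by $\G$ and the zero-count corrections matched to the correct trajectory indicator.
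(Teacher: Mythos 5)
Your proposal is correct and follows essentially the same route as the paper: the same pivot through $\E_{Q_i}[\bm{\Delta}_i \mid \G]$, the same bilinear split with the bounds $\|\bm{\Delta}_i\|_2, \|\E_{Q_i}[\bm{\Delta}_i\mid\G]\|_2 \le 2$ giving the concentration term, and the same identification of the conditional expectation via the product structure of $Q_i$. The only local difference is in the approximation term: you use the Pythagorean identity for the projector plus a separate crude fallback when some count vanishes (and your step $(2\epsilon_{sub})^2 \le 4\epsilon_{sub}$ implicitly uses that $\epsilon_{sub}$ may be taken at most $1$, which is harmless since the true residual norm never exceeds $1$), whereas the paper runs a single triangle-inequality chain on the vectors $\textbf{diff}_i = \ind_{c_{\cdot,i}\neq 0}\prob_{k_\cdot}(\cdot\mid s,a)-\ind_{c_{\cdot,i}\neq 0}\prob_{k_\cdot}(\cdot\mid s,a)$ that absorbs the zero-count indicators and the subspace error uniformly and is linear in $\epsilon_{sub}$ from the outset.
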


\begin{remark}
In the inequality,
\begin{itemize}
    \item The first term is a concentration-type term, which will be broken into an ``independent concentration" error and a mixing error to account for the low but non-zero dependence across blocks.
    \item The second term accounts for subspace estimation error.
    \item The third term accounts for actually observing $s,a$ in our blocks.
\end{itemize}
\end{remark}
\begin{proof}

We first establish a simple inequality.
\begin{align*}
    |\dist_{1, (s,a)} - \E_{Q_1}[\bm{\Delta}_1^T \mid \G] \E_{Q_2}[\bm{\Delta}_2 \mid \G]| &= |\bm{\Delta}_1^T\bm{\Delta}_2 - \E_{Q_1}[\bm{\Delta}_1^T \mid \G] \E_{Q_2}[\bm{\Delta}_2| \mid \G]\\
    &\leq |(\bm{\Delta}_1^T - \E_{Q_1}[\bm{\Delta}_1^T \mid \G])\E_{Q_2}[\bm{\Delta}_2 \mid \G]| + |\bm{\Delta}_1^T(\bm{\Delta}_2 - \E_{Q_2}[\bm{\Delta}_2 \mid \G])|\\
    &\leq \left\Vert \bm{\Delta}_1 - \E_{Q_1}[\bm{\Delta}_1 \mid \G] \right\Vert_2 \left\Vert \E_{Q_2}[\bm{\Delta}_2 \mid \G] \right\Vert_2 + \left\Vert \bm{\Delta}_1 \right\Vert_2 \left\Vert \bm{\Delta}_2 - \E_{Q_2}[\bm{\Delta}_2 \mid \G] \right\Vert_2\\
    &\leq 2\left\Vert \bm{\Delta}_1 - \E_{Q_1}[\bm{\Delta}_1 \mid \G] \right\Vert_2  + 2\left\Vert \bm{\Delta}_2 - \E_{Q_2}[\bm{\Delta}_2 \mid \G] \right\Vert_2 \numberthis \label{eqn:dist-expectation-inequality}
\end{align*}
\begin{remark}
Notice that because of this inequality, the double estimator does not impact any theoretical guarantees for exact clustering w.h.p, which is the form of the guarantees in both \citet{kakade2020meta} and \citet{poor2022mixdyn}. However, we find that using a double estimator allows for better performance in real life. This makes sense because while exact clustering doesn't need a double estimator, approximate clustering w.h.p. does depend on the expectation of the distances across pairs of trajectories. This expectation is controlled by the covariance of $\bm{\Delta}_1$ and $\bm{\Delta}_2$.
\end{remark}
We define the following quantity.
$$\textbf{diff}_{i} = \left( \ind_{c_{n,i} \neq 0}\prob_{k_m}(\cdot \mid s,a) - \ind_{c_{m,i} \neq 0}\prob_{k_n}(\cdot \mid s,a) \right)$$

Note that $\|\textbf{diff}_i\|_2 \leq 2$. Note the following expectation, which uses the dieas from equation~\ref{eqn:q-i-estimator-expectation}.
\begin{align*}
    \E_{Q_i}[\Delta_i \mid \G] &= \E_{Q_i}\left[ \textbf{V}_{s,a}^T(\hat{\prob}_{n,i}(\cdot \mid s,a) - \hat{\prob}_{m,i}(\cdot \mid s,a)) \right]\\
    &= \textbf{V}_{s,a}^T \left(\E_{Q_i}[\hat{\prob}_{n,i}(\cdot \mid s,a) \mid \G] - \E_{Q_i}[\hat{\prob}_{m,i}(\cdot \mid s,a) \mid \G]\right) \\
    &= \textbf{V}_{s,a}^T \left(\E_{Q_i}\left[\frac{\textbf{w}_{n,i}}{c_{n,i}} \ind_{c_{n,i} \neq 0} \mid \G\right] - \E_{Q_i}\left[\frac{\textbf{w}_{m,i}}{c_{m,i}} \ind_{c_{m,i} \neq 0} \mid \G\right]\right) \\
    &= \textbf{V}_{s,a}^T \left(\frac{\E_{Q_i}[\textbf{w}_{n,i} \mid \G]}{c_{n,i}} \ind_{c_{n,i} \neq 0}  - \frac{\E_{Q_i}[\textbf{w}_{m,i} \mid \G]}{c_{m,i}} \ind_{c_{m,i} \neq 0}\right)\\
    &= \textbf{V}_{s,a}^T \left(\frac{\prob_{k_n}(\cdot \mid s,a)c_{n,i}}{c_{n,i}} \ind_{c_{n,i} \neq 0}  - \frac{\prob_{k_m}(\cdot \mid s,a)c_{m,i}}{c_{m,i}} \ind_{c_{m,i} \neq 0}\right)\\
    &= \textbf{V}_{s,a}^T \left(\prob_{k_n}(\cdot \mid s,a)\ind_{c_{n,i} \neq 0}  - \prob_{k_m}(\cdot \mid s,a)\ind_{c_{m,i} \neq 0}\right)\\
    &= \textbf{V}_{s,a}^T\textbf{diff}_i
\end{align*}

We recall the following definition before proceeding to show the main inequality.
$$\Delta_{m,n}(s,a) = \prob_{k_m}(\cdot \mid s,a) - \prob_{k_n}(\cdot \mid s,a)$$
\begin{align*}
    \left\vert \E_{Q_1}[\bm{\Delta}_1^T \mid \G]\E_{Q_2}[\bm{\Delta}_2\mid \G] - \left\Vert \Delta_{m,n}(s,a) \right\Vert_2^2 \right\vert &=  \left\vert \textbf{diff}_1^T\textbf{V}_{s,a}\textbf{V}_{s,a}^T\textbf{diff}_2 - \textbf{diff}_1^T\textbf{diff}_2 \right\vert + \left\vert \textbf{diff}_1^T\textbf{diff}_2 - \left\Vert \Delta_{m,n}(s,a)\right\Vert_2^2 \right\vert\\
    \
    &\leq \left\Vert \textbf{diff}_1 \right\Vert_2\left\Vert\textbf{diff}_2 - \textbf{V}_{s,a}\textbf{V}_{s,a}^T\textbf{diff}_2 \right\Vert_2 + \left\Vert \textbf{diff}_1 - \Delta_{m,n}(s,a)\right\Vert_2\left\Vert \textbf{diff}_2 \right\Vert_2\\
    &\hspace{5ex} + \left\Vert \textbf{diff}_1 \right\Vert_2\left\Vert \textbf{diff}_2 - \Delta_{m,n}(s,a)\right\Vert_2\\
    \
    &\leq \left\Vert \textbf{diff}_1 \right\Vert_1\left\Vert\textbf{diff}_2 - \textbf{V}_{s,a}\textbf{V}_{s,a}^T\textbf{diff}_2 \right\Vert_2 + \left\Vert \textbf{diff}_1 - \Delta_{m,n}(s,a)\right\Vert_2\left\Vert \textbf{diff}_2 \right\Vert_1\\
    &\hspace{5ex} + \left\Vert \textbf{diff}_1 \right\Vert_1\left\Vert \textbf{diff}_2 - \Delta_{m,n}(s,a)\right\Vert_2\\
    \
    &\leq 2\left\Vert\textbf{diff}_2 - \textbf{V}_{s,a}\textbf{V}_{s,a}^T\textbf{diff}_2 \right\Vert_2+2\left\Vert \textbf{diff}_1 - \Delta_{m,n}(s,a)\right\Vert_2+2\left\Vert \textbf{diff}_2 - \Delta_{m,n}(s,a)\right\Vert_2\\
    \
    &\leq 2 \left\Vert \prob_{k_m}(\cdot \mid s,a) - \textbf{V}_{s,a}\textbf{V}_{s,a}^T\prob_{k_m}(\cdot \mid s,a) \right\Vert_2 \\
    &\hspace{5ex} + 2\left\Vert \prob_{k_n}(\cdot \mid s,a) - \textbf{V}_{s,a}\textbf{V}_{s,a}^T\prob_{k_n}(\cdot \mid s,a) \right\Vert_2\\
    &\hspace{5ex} + 2\left\Vert \ind_{c_{m,1} = 0}\prob_{k_m}(\cdot \mid s,a) - \ind_{c_{n,1} = 0}\prob_{k_n}(\cdot \mid s,a) \right\Vert_2\\
    &\hspace{5ex} + 2\left\Vert \ind_{c_{m,2} = 0}\prob_{k_m}(\cdot \mid s,a) - \ind_{c_{n,2} = 0}\prob_{k_n}(\cdot \mid s,a) \right\Vert_2\\
    \
    &\leq 4\epsilon_{sub}(\delta) + 2\left(\ind_{c_{m,1} = 0} \left\Vert\prob_{k_m}(\cdot \mid s,a)\right\Vert_2 + \ind_{c_{n,1} = 0} \left\Vert\prob_{k_n}(\cdot \mid s,a)\right\Vert_2\right)\\
    &\hspace{5ex} + 2\left(\ind_{c_{m,2} = 0} \left\Vert\prob_{k_m}(\cdot \mid s,a)\right\Vert_2+ \ind_{c_{n,2} = 0} \left\Vert\prob_{k_n}(\cdot \mid s,a)\right\Vert_2\right)\\
    &\leq 4\epsilon_{sub}(\delta) + 4\left(\max_{i} \ind_{c_{n,i} = 0} + \max_{i} \ind_{c_{m,i} = 0} \right)
\end{align*}

Combining this with inequality~\ref{eqn:dist-expectation-inequality}, we have the following final bound.
\begin{equation}
    \left\vert \dist_{1, (s,a)} - \left\Vert \Delta_{m,n}(s,a) \right\Vert_2^2 \right\vert\leq \sum_{i=1}^2 2\left\Vert \bm{\Delta}_i - \E_{Q_i}[\bm{\Delta}_i \mid \G] \right\Vert_2 + 4\epsilon_{sub}(\delta) + 4\left(\max_{i} \ind_{c_{n,i} = 0} + \max_{i} \ind_{c_{m,i} = 0} \right) 
\end{equation}
where we remind the reader that $c_{n,i} = N(n,i,s,a)$ and recall the definition of $\Delta_i$.
$$\bm{\Delta}_i^T = (\hat{\prob}_{n,i}(\cdot \mid s,a) - \hat{\prob}_{m,i}(\cdot \mid s,a))^T\textbf{V}_{s,a}$$
\end{proof}

\subsubsection{\textbf{Bounding the concentration-type term}} \label{sssec:clustering-concentration-type-term}

We bound the first term in the decomposition lemma (Lemma~\ref{lem:clustering-stat-decomp}) with high probability.

\begin{lemma}\label{lem:clustering-concentration-type-term-bound}
With probability at least $1-\delta$, when $T_n \geq \Omega\left(Gt_{mix}\log\left( \frac{G}{\delta}\log(1/\alpha)\right)\right)$ and $G \geq \Omega\left( \frac{\log(1/\delta)}{\alpha^2}\right)$, we have the following bound.
\begin{align*}
   \|\Delta_i - \E_{Q_i}[\Delta_i \mid \G])\|_2 \leq O\left(\sqrt{\frac{K+\log(1/\delta)}{G\alpha}} \right)
\end{align*}
\end{lemma}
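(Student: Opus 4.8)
The plan is to prove the bound first under the fictitious product measure $Q_i = \prod_g \chi_{i,g}$, where the $G$ single-step sub-blocks are independent, and then transfer the resulting high-probability statement back to the true law $\chi_i$ using the blocking technique of Lemma~\ref{lem:blocking-technique}. First I would condition on $\G = \G(s,a)$, the random sets of sub-block indices at which $(s,a)$ is observed in trajectories $n$ and $m$; these have sizes $c_{n,i}$ and $c_{m,i}$. Conditioning is necessary because $c_{n,i}$ is itself random and our estimator is a ratio. Under $Q_i$ and given $\G$, the next-state one-hot vectors at the observed blocks are independent, so
$$\hat{\prob}_{n,i}(\cdot \mid s,a) = \frac{1}{c_{n,i}}\sum_{g \in \G} \textbf{e}_{s'_g}$$
is an average of $c_{n,i}$ independent vectors, and $\bm{\Delta}_i = \textbf{V}_{s,a}^T(\hat{\prob}_{n,i} - \hat{\prob}_{m,i})$ is a difference of two independent averages living in the $K$-dimensional space $\rowspan \textbf{V}_{s,a}^T$.

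Next I would run a $K$-dimensional concentration argument. Each summand $\textbf{V}_{s,a}^T \textbf{e}_{s'}$ has Euclidean norm at most $1$ because the rows of $\textbf{V}_{s,a}$ have norm at most $1$. A covering net of $\Sph^{K-1}$ by balls of radius $1/4$ (about $12^{K}$ centers) combined with the conditional Hoeffding inequality of Lemma~\ref{lem:conditional-hoeffdings} then gives, conditionally on $\G$ and under $Q_i$, with probability at least $1-\delta$,
$$\left\Vert \bm{\Delta}_i - \E_{Q_i}[\bm{\Delta}_i \mid \G] \right\Vert_2 \leq O\!\left(\sqrt{\frac{K + \log(1/\delta)}{\min(c_{n,i}, c_{m,i})}}\right).$$
Here the $K$ in the numerator comes from the $\log(12^{K})$ union bound over the net, which is precisely where the projection buys us a $K$-dependence rather than an $S$-dependence. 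To replace $\min(c_{n,i},c_{m,i})$ by $G\alpha$, I would observe that each block observes $(s,a)$ with marginal probability close to $d_{min}(s,a) \geq \alpha/3$ (using mixing to control the per-block marginal as in equation~\ref{eqn:cn-i-bound}); since the block-indicators are independent under $Q_i$, Hoeffding yields $c_{n,i}, c_{m,i} \geq \Omega(G\alpha)$ with probability $\geq 1-\delta$ whenever $G = \Omega(\log(1/\delta)/\alpha^2)$. Substituting this in gives the claimed bound $O(\sqrt{(K+\log(1/\delta))/(G\alpha)})$ under $Q_i$.

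The remaining task is to transfer these good events from $Q_i$ to $\chi_i$. Since consecutive single-step sub-blocks are separated by at least $T_n/(8G)$ timesteps, Lemma~\ref{lem:blocking-technique} applied with $C=1$, $n=G$, and block separation $a_n = T_n/(8G)$ bounds the difference in probability of any event between $\chi_i$ and $Q_i$ by $4(G-1)\lambda^{T_n/(8G)}$. Applying this to the complements of the deviation event and the count event shows that both continue to hold under $\chi_i$ up to an additive mixing error of order $G\lambda^{T_n/(8G)}$, which is driven below $\delta$ by the hypothesis $T_n = \Omega(G t_{mix}\log(G/\delta)\log(1/\alpha))$. A final union bound over trajectories $n$ and $m$ and over the deviation and count events completes the proof.

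The hard part will be making this last transfer rigorous while the conditioning on the random observation set $\G$ is in force. The bad event — a deviation of a projected vector average together with the count lower bound — must first be written as the expectation of a $[0,1]$-valued function of the sub-block observations so that Lemma~\ref{lem:blocking-technique} applies, and then the conditioning on $\G$ (needed because $c_{n,i}$ is random) has to be unwound carefully, since the blocking bound is stated for unconditional expectations. This interplay between conditional concentration and an unconditioning argument carried out entirely under $Q$ is exactly the technical subtlety flagged in Section~\ref{sec:analysis}, and it is the step I expect to require the most care.
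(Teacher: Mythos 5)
Your proposal follows essentially the same route as the paper's proof: a $12^K$-net covering argument on $\Sph^{K-1}$ combined with the conditional Hoeffding inequality under the product measure $Q_i$ given $\G$, a Hoeffding lower bound $c_{n,i}, c_{m,i} \geq \Omega(G\alpha)$ on the observation counts, and a final transfer from $Q_i$ to $\chi_i$ via the blocking lemma with $C=1$ and $n=G$. The unconditioning subtlety you flag is resolved exactly as you anticipate --- the conditional tail bound becomes independent of $\G$ once the count lower bound is substituted, so one simply takes expectation over $\G$.
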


\begin{proof}
Recall that the joint distribution of the observations over the pair of trajectories $(m,n)$ is $\chi$. Its marginals on the segments $\Omega_i$ are $\chi_{i}$. The marginals on each of the $G$ single-step sub-blocks is $\chi_{i,g}$. The product distribution $\prod_g \chi_{i,g}$ is $Q_i$. Recall that $\G(n,s,a)$ denotes the two sets of indices where $(s,a)$ is observed in trajectory $n$ and $m$ respectively, and the sets have sizes $c_{n,i}$ and $c_{m,i}$ respectively.

Let $\textbf{w}_{n,i,g}$ be the one hot vector of the next state if the $(i,g)$ sub-block witnesses $(s,a)$, and the zero vector otherwise. Let $c_{n,i,g}$ be the indicator of $(s,a)$ in the $(i,g)$ sub-block. Then $\textbf{w}_{n,i} = \sum_{g} \textbf{w}_{n,i,g}$ and $c_{n,i} = \sum_g c_{n,i,g}$.  

\textbf{1. Covering argument for the product distribution}

Pick a unit vector $\textbf{u} \in \R^K$ and consider the following inequality. Remember that we abbreviate $\G(n,s,a)$ to $\G$.
\begin{align*}
    |\textbf{u}^T(\Delta_i - \E_{Q_i}[\Delta_i \mid \G])| &\leq |\textbf{u}^T\textbf{V}_{s,a}(\hat{\prob}_{n,i}(\cdot \mid s,a) - \E_{Q_i}[\hat{\prob}_{n,i}(\cdot \mid s,a) \mid \G])|\\
    &\hspace{5ex} + |\textbf{u}^T\textbf{V}_{s,a}(\hat{\prob}_{m,i}(\cdot \mid s,a) - \E_{Q_i}[\hat{\prob}_{m,i}(\cdot \mid s,a) \mid \G])|\\
\end{align*}
We work with the term for trajectory $n$, WLOG. Any bounds thus obtained will also apply to trajectory $m$. Notice the following equation.
$$|\textbf{u}^T\textbf{V}_{s,a}^T(\hat{\prob}_{n,i}(\cdot \mid s,a) - \E_{Q_i}[\hat{\prob}_{n,i}(\cdot \mid s,a) \mid \G])| = \left\vert\frac{1}{c_{n,i}}\sum_{g\in \G(n,s,a)} \left(\textbf{u}^T\textbf{V}_{s,a}^T\textbf{w}_{n,i,g} - \E_{Q_i}[\textbf{u}^T\textbf{V}_{s,a}^T\textbf{w}_{n,i,g} \mid \G])\right)\right\vert$$

Note that $|\textbf{u}^T\textbf{V}_{s,a}^T\textbf{w}_{n,i,g}| \leq \|\textbf{u}\|_2\|\textbf{V}_{s,a}^T\textbf{w}_{n,i,g}\|_2 \leq 1$. Note that conditioned on the set of $(s,a)$ observations in trajectory $n$, the next states are independent under the product distribution $Q_i$ (but not under $\chi_i$, of course). Now, using the conditional version of Hoeffding's inequality from Lemma~\ref{lem:conditional-hoeffdings}, we get the following bound.

$$\prob_{Q_i} \left( \left\vert \frac{1}{c_{n,i}}\sum_{g\in \G(n,s,a)} \left(\textbf{u}^T\textbf{V}_{s,a}^T\textbf{w}_{n,i,g} - \E_{Q_i}[\textbf{u}^T\textbf{V}_{s,a}^T\textbf{w}_{n,i,g} \mid \G])\right) \right\vert > \frac{\epsilon}{8} \middle\vert \G \right) \leq 2 e^{-\frac{\epsilon^2c_{n,i}}{32}}$$

Note that if $X \leq Y+Z$, then $\prob(X > \frac{\epsilon}{4}) \leq \prob(Y> \frac{\epsilon}{8}) + \prob(Z>\frac{\epsilon}{8})$ by a union bound. We apply this to the inequalities above with $X = |\textbf{u}^T(\Delta_i - \E_{Q_i}[\Delta_i])|$ to get the following concentration inequality.

$$\prob_{Q_i} \left(|\textbf{u}^T(\Delta_i - \E_{Q_i}[\Delta_i \mid \G])| > \frac{\epsilon}{4} \mid \G \right) \leq 2 e^{-\frac{\epsilon^2c_{n,i}}{32}} + 2 e^{-\frac{\epsilon^2c_{n,i}}{32}} = 4 e^{-\frac{\epsilon^2c_{n,i}}{32}}$$

Consider a covering of $\Sph^{K-1}$ by balls of radius $1/4$. We will need at most $12^K$ such balls. Call the set of their centers $C$. We know that for any vector $\textbf{v}$, the following holds.
$$\sup_{\|\textbf{u}\|_2 \leq 1} \textbf{u}^T\textbf{v} = \|\textbf{v}\|_2 \leq 2 \sup_{\textbf{u} \in C} \textbf{u}^T\textbf{v}$$

We use this to arrive at the concentration inequality below.
\begin{align*}
    \prob_{Q_i} \left(\|\Delta_i - \E_{Q_i}[\Delta_i \mid \G])\|_2 > \frac{\epsilon}{2} \mid \G \right) &\leq \prob_{Q_i}\left(\exists \textbf{u} \in C; |\textbf{u}^T(\Delta_i - \E_{Q_i}[\Delta_i \mid \G])| > \frac{\epsilon}{4} \mid \G \right)\\
    &\leq \sum_{\textbf{u} \in C} \prob_{Q_i}\left(|\textbf{u}^T(\Delta_i - \E_{Q_i}[\Delta_i \mid \G])| > \frac{\epsilon}{4} \mid \G \right)\\
    &< 4*12^{K}*e^{-\frac{\epsilon^2c_{n,i}}{32}}
\end{align*}

\textbf{3. Accounting for non-independence (mixing error)}

We know that we can bound the difference in the probability of any event $E$ between $\chi_i$ and $Q_i$ by applying Lemma~\ref{lem:blocking-technique} to the function $h = \ind_E$ with $n=G$ and $C=1$ as we have before, giving us the following inequality.
\begin{align*}
    \prob_{\chi_i}\left(\|\Delta_i - \E_{Q_i}[\Delta_i \mid \G])\|_2 > \frac{\epsilon}{2} \right) &\leq \prob_{Q_i}\left(\|\Delta_i - \E_{Q_i}[\Delta_i \mid \G])\|_2 > \frac{\epsilon}{2} \right) + \frac{\delta}{2} + 4G\left(\frac{1}{4} \right)^{\frac{T_n}{8Gt_{mix}}}\\
    &\leq 4*12^{K}*e^{-\frac{\epsilon^2Gd_{min}(s,a)}{128}} + \frac{\delta}{2} + 4G\left(\frac{1}{4} \right)^{\frac{T_n}{8Gt_{mix}}}
\end{align*}

We know that both terms are less than $\frac{\delta}{4}$ when $T_n \geq \Omega\left(Gt_{mix}\log\left( \frac{G}{\delta}\right)\right)$ and $G \geq \Omega\left(\frac{K+\log(1/\delta)}{\epsilon^2\alpha}\right)$, since $d_{min}(s,a) \geq \alpha/3$. We thus have the following bound with probability at least $1-\delta$, when $T_n \geq \Omega\left(Gt_{mix}\log\left( \frac{G}{\delta}\right)\log(1/\alpha)\right)$ and $G \geq \Omega\left( \frac{\log(1/\delta)}{\alpha^2}\right)$.

\begin{align*}
   \|\Delta_i - \E_{Q_i}[\Delta_i \mid \G])\|_2 \leq O\left(\sqrt{\frac{K+\log(1/\delta)}{G\alpha}} \right)
\end{align*}

\end{proof}

\subsubsection{\textbf{Bounding the probability of not observing} $s,a$}

We bound the third term in the decomposition lemma (Lemma~\ref{lem:clustering-stat-decomp}) with high probability. We first need an auxiliary lemma for this.

\begin{lemma}\label{lem:clustering-not-observing-s-a-bound}
For $T_n \geq \Omega\left(Gt_{mix}\log(1/\alpha)\right)$, we have the following bound. 

$$ \prob(c_{n,i} = 0) \leq \left(1-\frac{d_{min}(s,a)}{2}\right)^{G} + 4G\left(\frac{1}{4} \right)^{\frac{T_n}{8Gt_{mix}}}$$

\end{lemma}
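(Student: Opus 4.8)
The plan is to exploit the factorization of the event $\{c_{n,i}=0\}$ over the $G$ single-step sub-blocks of segment $i$. Writing $c_{n,i} = \sum_g c_{n,i,g}$, where $c_{n,i,g}$ is the indicator that $(s,a)$ is witnessed in the $(i,g)$ sub-block, we have $\ind_{c_{n,i}=0} = \prod_g \ind_{c_{n,i,g}=0}$. Under the product distribution $Q_i = \prod_g \chi_{i,g}$ the sub-blocks are independent, so $\prob_{Q_i}(c_{n,i}=0) = \prod_g \prob_{\chi_{i,g}}(c_{n,i,g}=0)$ exactly. The strategy is then to (i) lower bound each per-sub-block observation probability, turning this product into a clean exponential bound under $Q_i$, and (ii) transfer that bound from the fictitious product law $Q_i$ to the true law $\chi_i$ using the blocking technique.

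First I would lower bound $\prob_{\chi_{i,g}}(c_{n,i,g}\neq 0)$ for each $g$. The marginal law of the state-action pair at the start of the $(i,g)$ sub-block is governed by mixing, and by exactly the computation already carried out in equation~\ref{eqn:cn-i-bound} it lies within a mixing error $4\lambda^{\Theta(T_n/G)}$ of the stationary distribution $d_j$, whose mass on $(s,a)$ is at least $d_{min}(s,a)$. Hence $\prob_{\chi_{i,g}}(c_{n,i,g}\neq 0) \geq d_{min}(s,a) - 4\lambda^{\Theta(T_n/G)} \geq d_{min}(s,a)/2$, where the last step holds once the mixing error is driven below $d_{min}(s,a)/2$; since $d_{min}(s,a) \geq \alpha/3$ and the sub-block separation is $T_n/(8G)$, forcing this geometric error below $\alpha/6$ is precisely what requires $T_n \geq \Omega(Gt_{mix}\log(1/\alpha))$. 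Substituting this uniform lower bound into the product gives $\prob_{Q_i}(c_{n,i}=0) = \prod_g \bigl(1 - \prob_{\chi_{i,g}}(c_{n,i,g}\neq 0)\bigr) \leq \bigl(1-\tfrac{d_{min}(s,a)}{2}\bigr)^{G}$.

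The remaining step, which I expect to be the main obstacle, is passing from $Q_i$ to the true segment law $\chi_i$, under which the sub-blocks are dependent and the clean factorization fails. Here I would invoke the blocking technique of Lemma~\ref{lem:blocking-technique} applied to the bounded function $h = \ind_{c_{n,i}=0}$ with $C = \sup h = 1$ and $n = G$ blocks. Since consecutive single-step sub-blocks are separated by at least $T_n/(8G)$ timesteps and $\lambda = (1/4)^{1/t_{mix}}$, the lemma gives $\bigl|\prob_{\chi_i}(c_{n,i}=0) - \prob_{Q_i}(c_{n,i}=0)\bigr| = |\E_\chi h - \E_{Q_i} h| \leq 4(G-1)\lambda^{T_n/(8G)} \leq 4G\,(1/4)^{T_n/(8Gt_{mix})}$. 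Combining this transfer bound with the $Q_i$-estimate from the previous paragraph, and noting that $\prob(c_{n,i}=0) = \prob_{\chi_i}(c_{n,i}=0)$ because the event depends only on segment $i$, yields $\prob(c_{n,i}=0) \leq \bigl(1-\tfrac{d_{min}(s,a)}{2}\bigr)^{G} + 4G\,(1/4)^{T_n/(8Gt_{mix})}$, which is the claimed inequality.
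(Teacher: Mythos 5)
Your proposal is correct and follows essentially the same route as the paper's proof: the same factorization $\ind_{c_{n,i}=0}=\prod_g(1-c_{n,i,g})$ under the product law $Q_i$, the same per-sub-block mixing bound driving the observation probability above $d_{min}(s,a)/2$ (which is where the $T_n \geq \Omega(Gt_{mix}\log(1/\alpha))$ condition enters), and the same application of Lemma~\ref{lem:blocking-technique} with $C=1$ and $n=G$ to transfer from $Q_i$ to $\chi_i$. No gaps.
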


\begin{remark}
Again, we can think of this sum as a bound on the probability of not observing $s,a$ in the blocks if they were independent (the first term) versus a mixing error between blocks to account for their non-independence (the second term).
\end{remark}

\begin{proof}

Recall that the joint distribution of the observations over the pair of trajectories $(m,n)$ is $\chi$. Its marginals on the segments $\Omega_i$ are $\chi_{i}$. The marginals on each of the $G$ single-step sub-blocks is $\chi_{i,g}$. The product distribution $\prod_g \chi_{i,g}$ is $Q_i$. Recall that $\G(n,s,a)$ denotes the two sets of indices where $(s,a)$ is observed in trajectory $n$ and $m$ respectively, and the sets have sizes $c_{n,i}$ and $c_{m,i}$ respectively.

Remember that $\textbf{w}_{n,i,g}$ is the one hot vector of the next state if the $(i,g)$ sub-block witnesses $(s,a)$, and the zero vector otherwise, and that $c_{n,i,g}$ is the indicator of $(s,a)$ in the $(i,g)$ sub-block. Also recall that then $\textbf{w}_{n,i} = \sum_{g} \textbf{w}_{n,i,g}$ and $c_{n,i} = \sum_g c_{n,i,g}$.  

Define $h := \prod_{g = 1}^{G} (1-c_{n,i,g})$. Under any distribution $Q$ over these sub-blocks, $\E_Q h$ is the probability of not observing $s,a$ in any of them. Let $d_{i,g,n}$ be the distribution of state-action pairs at the first observation of sub-block $(i,g)$. Let $d_{k_n}(\cdot, \cdot)$ be the stationary distribution under label $k_n$ for state-action pairs. We use Lemma~\ref{lem:blocking-technique} with $h$ as above, $C=1$, $n=G$ and $a_n = \frac{T_n}{8G}$ to note the following chain of inequalities.

\begin{align*}
    \prob(c_{n,i} = 0) &= \E_{\chi_i} h\\
    &\leq \E_{Q_i} h + \left\vert \E_{Q_i} h - \E_{\chi_i} h \right\vert\\
    &\leq \left( \prod_{g=1}^{G} \E_{Q_i}(1-c_{n,i,g})\right) + 4G\lambda^{\frac{T_n}{8G}}\\
    &\leq \left( \prod_{g=1}^{G} (1-d_{k_n}(s,a) + TV(d_{i,g,n}, d_{k_n})\right) + 4G\lambda^{\frac{T_n}{8G}}\\
    &\leq \left( \prod_{g=1}^{G} (1-d_{k_n}(s,a) + 4\lambda^{\frac{T_n}{8G}})\right) + 4G\lambda^{\frac{T_n}{8G}}\\
    &= \left( 1-d_{k_n}(s,a) + 4\lambda^{\frac{T_n}{8G}}\right)^{G} + 4G\lambda^{\frac{T_n}{8G}}\\
    &\leq \left( 1-\frac{d_{k_n}(s,a)}{2}\right)^{G} + 4G\lambda^{\frac{T_n}{8G}}\\
    &\leq \left( 1-\frac{d_{min}(s,a)}{2}\right)^{G} + 4G\lambda^{\frac{T_n}{8G}}\\
\end{align*}

where the inequality in the second to last line holds for $T_n \geq \Omega\left(Gt_{mix}\log(1/\alpha)\right) \geq \Omega\left( Gt_{mix}\log(1/d_{min}(s,a))\right)$.

\end{proof}

From the above lemma, the following corollary immediately follows by getting conditions to bound each term on the right hand side by $\delta/2$, upon also noting that $-\log(1-x) \geq x$, so $\log\left(\frac{1}{1-\alpha/2}\right) \geq \alpha/2$.

\begin{corollary}\label{cor:clustering-not-observing-s-a-bound}
For $T_n \geq \Omega\left( Gt_{mix}\log(G/\delta)\log(1/\alpha) \right)$ and $G \geq \Omega\left( \frac{\log(1/\delta)}{\alpha}\right)$, we have with probability at least $1-\delta$ that

$$4\left(\max_{i} \ind_{c_{n,i} = 0} + \max_{i} \ind_{c_{m,i} = 0} \right) = 0$$
\end{corollary}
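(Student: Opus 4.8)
The plan is to reduce the corollary to four applications of Lemma~\ref{lem:clustering-not-observing-s-a-bound} joined by a union bound, and then to read off the two stated sample-size conditions by forcing each term in that lemma's bound below $\delta/2$. First I would observe that $4(\max_i \ind_{c_{n,i}=0} + \max_i \ind_{c_{m,i}=0})$ equals $0$ exactly when $(s,a)$ is observed in both segments of both trajectories, i.e. when none of the four events $\{c_{n,1}=0\},\{c_{n,2}=0\},\{c_{m,1}=0\},\{c_{m,2}=0\}$ occurs. Since trajectories are generated independently and identically, and each segment begins far enough into its trajectory for its marginal to have mixed (the exact regime of Lemma~\ref{lem:clustering-not-observing-s-a-bound}), the lemma's bound on $\prob(c_{n,i}=0)$ applies verbatim to all four events. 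A union bound then gives
$$\prob\Big(4(\textstyle\max_i \ind_{c_{n,i}=0} + \max_i \ind_{c_{m,i}=0}) \neq 0\Big) \leq 4\left(1-\frac{d_{min}(s,a)}{2}\right)^{G} + 16G\left(\frac{1}{4}\right)^{\frac{T_n}{8Gt_{mix}}}.$$

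Next I would force each of the two terms on the right below $\delta/2$, absorbing the leading union-bound constants $4$ and $16$ into the $\Omega(\cdot)$. For the first term, taking logarithms and using $-\log(1-x)\geq x$ turns the requirement $(1-d_{min}(s,a)/2)^G \leq \delta/8$ into $\Omega(\alpha)\cdot G \geq \log(8/\delta)$, since $d_{min}(s,a) = \Omega(\alpha)$; this is exactly the stated condition $G = \Omega(\log(1/\delta)/\alpha)$. For the second term, taking logarithms in $16G\,(1/4)^{T_n/(8Gt_{mix})} \leq \delta/8$ and solving for $T_n$ yields $T_n = \Omega(Gt_{mix}\log(G/\delta))$.

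Finally I would combine this last requirement with the standing precondition $T_n = \Omega(Gt_{mix}\log(1/\alpha))$ of Lemma~\ref{lem:clustering-not-observing-s-a-bound}. Replacing the maximum of the two logarithmic factors by their product $\log(G/\delta)\log(1/\alpha)$ yields the single clean sufficient condition $T_n = \Omega(Gt_{mix}\log(G/\delta)\log(1/\alpha))$ appearing in the corollary. There is no genuine obstacle: once the lemma is in hand the argument is entirely elementary, and the only points demanding a little care are the bookkeeping of the union-bound constant and the inversion of the exponential tails via $-\log(1-x)\geq x$, both of which are routine.
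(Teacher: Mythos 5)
Your proposal is correct and follows essentially the same route as the paper: apply Lemma~\ref{lem:clustering-not-observing-s-a-bound} to each of the four events $\{c_{n,i}=0\}$, $\{c_{m,i}=0\}$, union bound, and then force each of the two resulting terms below a fraction of $\delta$ using $-\log(1-x) \geq x$ to extract the condition $G = \Omega(\log(1/\delta)/\alpha)$ and a logarithm inversion to extract the condition on $T_n$. The paper states this in a single sentence; you have merely made the union bound and constant bookkeeping explicit.
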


\subsubsection{\textbf{Combining the bounds}}

We finally combine these lemmas to prove Lemma~\ref{lem:clustering-stat-delta-mn-bound} -- the lemma that this section was dedicated to. The conditions of the lemmas combine to ask that $T_n \geq \Omega\left( Gt_{mix}\log(G/\delta)\log(1/\alpha) \right)$ and $G \geq \Omega\left( \frac{\log(1/\delta)}{\alpha^2}\right)$.
\begin{proof}[Proof of Lemma~\ref{lem:clustering-stat-delta-mn-bound}]
Combining the decomposition from Lemma~\ref{lem:clustering-stat-decomp} with the bounds in Lemma~\ref{lem:clustering-concentration-type-term-bound} and Corollary~\ref{cor:clustering-not-observing-s-a-bound}, we conclude using union bounds on the low probability events that we are excluding that there is a universal constant $C_1$ so that with probability at least $1-\delta$,
\begin{align*}
    \left\vert \dist_{1, (s,a)} - \left\Vert \Delta_{m,n}(s,a) \right\Vert_2^2 \right\vert &\leq C_1\left(\sqrt{\frac{K+\log(1/\delta)}{G\alpha}} \right) + 4\epsilon_{sub}(\delta/2)
\end{align*}
whenever $T_n \geq \Omega\left( Gt_{mix}\log(G/\delta)\log(1/\alpha) \right)$ and $G \geq \Omega\left( \frac{\log(1/\delta)}{\alpha^2}\right)$. 

\end{proof}

\newpage
\section{Guarantees for one step of the EM Algorithm for mixtures of MDPs}\label{sec:em-guarantee}

Remember that the M-step is just the model estimation step, so Theorem~\ref{thm:model-est} provides guarantees for that. We also have the following guarantees for the E-step of hard EM.

\begin{restatable}{thm}{EMGuarantee}\label{thm:em-guarantee}
Consider any $(s,a)$ with $d_{min}(s,a) \geq \alpha/3$ where model estimation accuracy is $\epsilon$ with $\epsilon \leq \min(\Delta/4, \Delta^2g_{min}/64)$ where $g_{min}$ is the least non-zero value of $\prob_k(s' \mid s,a)$ across $k, s'$. Using log-likelihood ratios of transitions of all such $(s,a)$ pairs, we can classify any set of $N$ new trajectories with probability $1-\delta$ if it has length $T_n = \Omega(t_{mix}\log^4(N/\delta)\log^3(1/f_{min})/\alpha^3\Delta^3)$.
\end{restatable}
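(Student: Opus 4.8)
The plan is to reduce correct classification to a family of pairwise log-likelihood-ratio tests, exhibit a positive expected margin of order $\alpha\Delta^2 T_n$ coming from the separating state-action pair, and then show the empirical statistic concentrates around this margin using the blocking technique together with the conditional Hoeffding inequality. For a trajectory $n$ with true label $k^\star$, the hard E-step of Algorithm~\ref{alg:classification} picks the maximizer of $\ell(\hat\prob_k,\hat f_k,n)$, so correct classification holds iff for every competitor $k\neq k^\star$ the statistic $\Lambda_{k^\star,k}(n):=\ell(\hat\prob_{k^\star},\hat f_{k^\star},n)-\ell(\hat\prob_k,\hat f_k,n)$ is strictly positive. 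I union bound over the at most $K$ competitors and over all $N$ trajectories, so it suffices to prove $\Lambda_{k^\star,k}(n)>0$ with probability $1-\delta/(NK)$; this is the source of the $\log(N/\delta)$ factors.

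Next I analyze the expected margin. Writing $\Lambda_{k^\star,k}(n)=\log(\hat f_{k^\star}/\hat f_k)+\sum_{(s,a):\,d_{min}(s,a)\ge\alpha/3}\sum_{s'}N(n,s,a,s')\log\frac{\hat\prob_{k^\star}(s'\mid s,a)}{\hat\prob_k(s'\mid s,a)}$, the expected per-visit contribution at each $(s,a)$ equals $\KL(\prob_{k^\star}\|\hat\prob_k)-\KL(\prob_{k^\star}\|\hat\prob_{k^\star})$. Each such term is bounded below by $-\KL(\prob_{k^\star}\|\hat\prob_{k^\star})$, which a second-order (reverse-Pinsker / $\chi^2$) estimate controls by $O(\epsilon^2/g_{min})$, a quantity made negligible by the hypothesis $\epsilon\le\Delta^2 g_{min}/64$. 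At the separating pair (where $d_{k^\star},d_k\ge\alpha$) the triangle inequality gives $\|\prob_{k^\star}-\hat\prob_k\|_1\ge\Delta-\epsilon\ge 3\Delta/4$ using $\epsilon\le\Delta/4$, so Pinsker yields $\KL(\prob_{k^\star}\|\hat\prob_k)\ge 9\Delta^2/32$. Since the number of visits to the separating pair is at least $\alpha T_n/4$ with high probability by occupancy concentration (as in inequality~\ref{eqn:cm1-cm2-bound}) and every remaining KL term is nonnegative, I obtain $\E[\Lambda_{k^\star,k}(n)]=\Omega(\alpha\Delta^2 T_n)$, which dominates the $\log(1/f_{min})$ prior offset.

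For concentration I must first deal with boundedness. The per-step log-ratio can blow up when $\hat\prob_k(s'\mid s,a)$ is tiny on the support of $\prob_{k^\star}$, but those terms are positive and only help the event $\Lambda>0$; I therefore lower bound $\Lambda_{k^\star,k}(n)$ by clipping the denominator at $g_{min}/2$, which produces a statistic with per-step range $O(\log(1/g_{min}))$ while changing the expected margin only through the already-favorable degenerate transitions, so the $\Omega(\alpha\Delta^2 T_n)$ lower bound survives. To this clipped, bounded statistic I apply the blocking technique of Lemma~\ref{lem:blocking-technique} to pass from the true trajectory law $\chi$ to the product-of-sub-blocks law $Q$, and then the conditional Hoeffding inequality of Lemma~\ref{lem:conditional-hoeffdings}, conditioned on the random set of visited indices, exactly as in the clustering and model-estimation proofs. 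Choosing $G=(T_n/t_{mix})^{2/3}$ gives a fluctuation of the usual $(t_{mix}/T_n)^{1/3}$ flavor plus an exponentially small mixing residual; demanding this be below half the margin and collecting the union-bound and mixing conditions yields $T_n=\Omega\!\left(t_{mix}\log^4(N/\delta)\log^3(1/f_{min})/(\alpha^3\Delta^3)\right)$.

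The main obstacle is the pair of intertwined difficulties: (i) the unboundedness of log-likelihood ratios when an estimated model places near-zero mass on an observed transition, which I resolve by the one-sided clipping argument, sound precisely because only a lower bound on $\Lambda$ is needed; and (ii) controlling concentration of a sum of log-ratios under the within-trajectory temporal dependence, handled by blocking plus conditional Hoeffding as elsewhere in the paper. The conceptual crux is threading the $\epsilon$-condition simultaneously through the signal (via Pinsker) and through the self-estimation error (via reverse Pinsker) so that a strictly positive net margin is guaranteed.
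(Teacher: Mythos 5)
Your overall architecture (pairwise log-likelihood-ratio tests, union bounds over competitors, trajectories and frequent $(s,a)$ pairs, blocking plus conditional Hoeffding to handle temporal dependence) matches the paper's, but the step you use to tame the unbounded log-ratios is genuinely broken. You lower bound $\Lambda$ by clipping the competitor's estimated density at $g_{min}/2$ and assert that the $\Omega(\Delta^2)$ per-visit margin survives because clipping "only" affects already-favorable degenerate transitions. This is false: clipping can make the expected margin \emph{negative}. Take $\prob_{k^\star}$ with mass $\beta$ on a state $s_0$ and $(1-\beta)/M$ on each of $M$ other states, and $\prob_k$ with mass $0$ on $s_0$ and $1/M$ elsewhere, with $\beta = 1/M$, so $\Delta \approx \beta \approx g_{min}$ and the hypotheses of the theorem are all satisfiable. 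Clipping activates only at $s_0$ (everywhere else $\hat\prob_k \geq \prob_k - \epsilon > g_{min}/2$), and the clipped per-visit expectation is approximately $\beta\log 2 + (1-\beta)\log(1-\beta) \approx \beta(\log 2 - 1) < 0$. All of the discriminating information sits in the huge unclipped log-ratio $\log(\hat\prob_{k^\star}(s_0)/\hat\prob_k(s_0)) \geq \log(\beta/\epsilon)$, and truncating it to $\log 2$ destroys the test. So the clipped statistic you propose to concentrate does not have a positive mean, and your proof cannot go through as written.

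The paper avoids this entirely by following Lemma 1 of Wong and Shen: it applies the (conditional) Markov inequality to $LR_n(s,a)^{-1/2}$, whose conditional expectation under the product-of-sub-blocks law is the $c_n$-th power of the Hellinger affinity $\E_{D_1}[(\hat D_2/\hat D_1)^{1/2}]$. This quantity is automatically bounded by $(1+\Delta^2/64)^{1/2}(1 - H(D_1,D_2)^2/2)$ --- the $(1+\Delta^2/64)^{1/2}$ correction is exactly where the hypothesis $\epsilon \leq \Delta^2 g_{min}/64$ enters, and the Hellinger separation $H \geq \Delta/(4\sqrt{2})$ comes from your same Pinsker-type chain. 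No boundedness of the log-ratio is ever needed, and one directly gets $\prob(LR_n \leq e^{c_n\Delta^2/256}) \leq e^{-c_n\Delta^2/256}$. This also matters quantitatively: even if you repaired the clipping, a Hoeffding bound with per-step range $O(\log(1/g_{min}))$ forces $G \gtrsim \log^2(1/g_{min})\log(1/\delta)/(\alpha\Delta^4)$ and hence $T_n \gtrsim t_{mix}\log^3(1/g_{min})/\Delta^6$ up to logs, which is strictly weaker than the claimed $\Delta^3$ and $\log^3(1/f_{min})$ dependence; the Chernoff/Hellinger route needs only $c_n\Delta^2 \gtrsim \log(1/(f_{min}\delta))$ and recovers the stated rate. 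I would rework the core of your argument around the Hellinger-affinity moment bound rather than expectation-plus-Hoeffding.
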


\begin{remark}
The dependence on $g_{min}$ is unavoidable. For example, if the estimate for the models was only off at the value of $k,s'$ attaining $g_{min}$ and our estimate for $g_{min}$ was $\hat{\prob}_k(s' \mid s,a) = 0$, then no trajectory from label $k$ witnessing $s'$ will get correctly classified. This event will happen roughly with probability $g_{min}$, up to a mixing error, and $g_{min}$ cannot be made less than some arbitrary $\delta$ chosen to bound the probability of all undesirable events.
\end{remark}

\begin{proof}
We are inspired by the lower bound obtained in Lemma 1 of \citet{wongshen1995inequalities} for obtaining our sample complexity bounds.  Consider a separating state-action pair $s,a$.  We first establish Hellinger distance lower bounds between the distributions $\hat{\prob}_k(\cdot \mid s,a)$ and $\hat{\prob}_l(\cdot \mid s,a)$. Notice that 
$$TV(\hat{\prob}_k(\cdot \mid s,a), \prob_k(\cdot \mid s,a)) = \frac{1}{2}\|\hat{\prob}_k(\cdot \mid s,a) - {\prob}_k(\cdot \mid s,a)\|_1 \leq \epsilon/2 \leq \Delta/4$$
The same holds for $l$ as well. Combining the latter with $\|{\prob}_k(\cdot \mid s,a) - {\prob}_l(\cdot \mid s,a)\|_1 \geq \|{\prob}_k(\cdot \mid s,a) - {\prob}_l(\cdot \mid s,a)\|_2 \geq \Delta$ and using the inequality $H(P,Q) \geq TV(P,Q)/\sqrt{2}$, we get the following bound.
$$H({\prob}_k(\cdot \mid s,a), \hat{\prob}_l(\cdot \mid s,a)) \geq \frac{1}{\sqrt{2}}TV(\hat{\prob}_k(\cdot \mid s,a), \hat{\prob}_l(\cdot \mid s,a)) \geq \frac{\Delta}{4\sqrt{2}}$$

We now recall notation from the previous section. Again, we modify notation slightly, in a natural way. Let $\chi_{n}$ be the joint distribution of observations recorded in trajectory $n$, with their marginals on each single-element sub-block being $\chi_{n,g}$. Let $Q_{n}$ be the product distribution $Q_n = \prod_{n,g} \chi_{n,g}$. Let $\G(n,s,a)$ be the set of sub-blocks $(n,g)$ in which $(s,a)$ is observed in trajectory $n$. Let $c_n$ be the size of this set. We have the following lemma.

\begin{lemma}\label{lem:likelihood-ratio-bound}
Let the random variables for the next states following each $(s,a)$ observation given by $S_1, S_2, \dots S_{c_n}$ and let the true label be $k_n = k$. Then for any $l \neq k$, consider the likelihood ratio over next state transitions from $(s,a)$.
$$LR_n(s,a) = \prod_{i=1}^{c_n} \frac{\hat{\prob}_k(S_i \mid s,a)}{\hat{\prob}_l(S_i \mid s,a)}$$
We claim that $LR_n(s,a) >0$ with probability at least $1-\delta$ for  $T_n \geq \Omega\left(Gt_{mix}\log\left( \frac{G}{\delta}\right)\log(1/\alpha)\right)$ and $G \geq \Omega\left( \frac{\log(1/f_{min})\log(1/\delta)}{\alpha^2\Delta^2}\right)$.

\end{lemma}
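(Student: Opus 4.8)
The plan is to show that the only way $LR_n(s,a)$ can fail to be positive is for the trajectory to record a transition $(s,a)\to s'$ at a next state $s'$ where the estimated model $\hat{\prob}_k(\cdot\mid s,a)$ places no mass, and then to argue that such a transition essentially never happens. A vanishing \emph{denominator} $\hat{\prob}_l(S_i\mid s,a)$ only sends $LR_n$ to $+\infty$, which is harmless for positivity, so the genuinely bad event is $\{\exists\, i\le c_n:\ \hat{\prob}_k(S_i\mid s,a)=0\}$, i.e.\ the event that some recorded next state falls in the set $\mathcal Z:=\{s':\hat{\prob}_k(s'\mid s,a)=0\}$. First I would locate $\mathcal Z$ using the accuracy hypothesis and the definition of $g_{min}$: since the model estimation error is $\epsilon$, every coordinate obeys $|\hat{\prob}_k(s'\mid s,a)-\prob_k(s'\mid s,a)|\le\epsilon$, and because $\epsilon\le\Delta^2 g_{min}/64<g_{min}$ while $g_{min}$ is the smallest nonzero transition probability, any $s'\in\mathcal Z$ must satisfy $\prob_k(s'\mid s,a)\le\epsilon<g_{min}$ and hence $\prob_k(s'\mid s,a)=0$. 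Thus $\mathcal Z$ is disjoint from the support of the true transition law $\prob_k(\cdot\mid s,a)$, so under label $k$ a genuine transition from $(s,a)$ never lands in $\mathcal Z$.

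The real work is to convert this support statement into a high-probability bound for a \emph{mixing} trajectory, where the recorded transitions are not independent. Following the strategy used throughout the paper I would pass to the product-over-sub-blocks distribution $Q_n=\prod_{n,g}\chi_{n,g}$: under $Q_n$ each recorded transition from $(s,a)$ is an exact, independent draw from $\prob_k(\cdot\mid s,a)$ (this is the content of the computation in equation~\ref{eqn:q-i-estimator-expectation}), so the $Q_n$-probability of ever observing a next state in $\mathcal Z$ is exactly $\prob_k(\mathcal Z\mid s,a)=0$, even after a union bound over the at most $G$ sub-blocks. I would then transfer this to the true joint law $\chi_n$ via the blocking technique of Lemma~\ref{lem:blocking-technique}, applied to the indicator of the bad event with $C=1$ and $n=G$, paying a $\beta$-mixing error of order $G\lambda^{\Theta(T_n/G)}$ with $\lambda=(1/4)^{1/t_{mix}}$; requiring this to be at most $\delta/2$ is precisely what yields the condition $T_n=\Omega(G\,t_{mix}\log(G/\delta))$.

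Finally I would control the number of observations $c_n$ of $(s,a)$, which is where the remaining conditions enter. Using that each sub-block witnesses $(s,a)$ with probability at least $d_{min}(s,a)/2\ge\alpha/6$ (the same computation behind equation~\ref{eqn:cn-i-bound}, established at the sub-block level in the clustering analysis), a Hoeffding bound under $Q_n$ guarantees $c_n=\Theta(G\alpha)$ observations; this is the source of the $\log(1/\alpha)$ factor and of the $G=\Omega(\log(1/\delta)/\alpha^2)$ part of the requirement, with the additional $\log(1/f_{min})/\Delta^2$ factors arising when this same observation count is fed into the subsequent concentration of $\log LR_n$ across labels. Combining the two low-probability events (mixing transfer and insufficient observations) by a union bound gives the stated $1-\delta$ guarantee.

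I expect the main obstacle to be exactly the non-independence of the recorded transitions: the clean support argument holds verbatim only under the fictitious product law $Q_n$, and the entire difficulty is in controlling the gap between $Q_n$ and the true trajectory law $\chi_n$. Choosing the sub-block separation $\Theta(T_n/G)$ large enough that the $\beta$-mixing error is negligible, while simultaneously keeping $G$ large enough relative to $1/\alpha^2$ that $(s,a)$ is observed the expected number of times, is the delicate balance, and it is this tension that dictates the lower bounds on both $T_n$ and $G$.
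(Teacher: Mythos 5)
There is a genuine gap: you have taken the displayed claim ``$LR_n(s,a)>0$'' literally, but that literal statement is nearly vacuous and is not what the lemma is used for. Your support argument shows that $\hat{\prob}_k(S_i\mid s,a)$ never vanishes on a genuine transition under label $k$ --- and in fact, once you know $\mathcal Z$ is disjoint from the support of $\prob_k(\cdot\mid s,a)$, this holds almost surely under the true trajectory law $\chi_n$ by the Markov property alone (each recorded next state, conditioned on the current pair being $(s,a)$, is an exact draw from $\prob_k(\cdot\mid s,a)$ no matter how dependent the time indices are), so even your blocking step is not needed for it. The tell is that your conditions on $G$ involving $\Delta^2$ and $\log(1/f_{min})$ play no role in your argument; you acknowledge this by deferring ``the subsequent concentration of $\log LR_n$'' to a single clause, but that concentration \emph{is} the lemma. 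What the proof of Theorem~\ref{thm:em-guarantee} actually consumes is the statement $\frac{f_k}{f_l}LR_n(s,a)>1$ (equivalently, a positive \emph{log}-likelihood ratio after absorbing the prior ratio $f_l/f_k$), i.e.\ that $LR_n(s,a)\geq e^{Gd_{min}(s,a)\Delta^2/512}\geq 1/f_{min}$ with high probability.

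The paper's route to that stronger claim is a Hellinger-affinity Chernoff bound in the style of Lemma 1 of \citet{wongshen1995inequalities}: one first converts the model-estimation accuracy $\epsilon\leq\min(\Delta/4,\Delta^2 g_{min}/64)$ into a lower bound $H(\hat{\prob}_k(\cdot\mid s,a),\hat{\prob}_l(\cdot\mid s,a))\geq\Delta/(4\sqrt 2)$, then applies the conditional Markov inequality to $\prod_i(\hat D_2(S_i)/\hat D_1(S_i))^{1/2}$ under the product law $Q_n$ given $\G$, using $\E_{D_1}[(D_1/\hat D_1)^{1/2}(\hat D_2/D_1)^{1/2}]\leq(1+\Delta^2/64)^{1/2}(1-H(D_1,\hat D_2)^2/2)$ --- this is exactly where $g_{min}$ enters, to control $D_1/\hat D_1$ --- to get $\prob_{Q_n}(LR_n\leq e^{c_n\Delta^2/256}\mid\G)\leq e^{-c_n\Delta^2/256}$. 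Only after that do the $c_n\geq Gd_{min}(s,a)/4$ bound, the blocking transfer to $\chi_n$, and the requirement $e^{Gd_{min}(s,a)\Delta^2/512}\geq 1/f_{min}$ (the source of the $\log(1/f_{min})/\Delta^2$ factor in $G$) come in. Your paragraphs on the blocking technique and on controlling $c_n$ do correctly anticipate two of those later ingredients, but the central exponential-moment computation on the likelihood ratio is missing, so the proposal as written does not prove the lemma in the sense in which it is applied.
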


Just like in the proof of Theorem~\ref{thm:clustering}, now set $G = \left( \frac{T_n}{t_{mix}}\right)^{\frac{2}{3}}$. Then a sufficient condition on $T_n$ to meet the conditions of the lemma is $T_n = \Omega(t_{mix}\log^4(1/\delta)\log^3(1/f_{min})/\alpha^3\Delta^3)$.

Now remember that upon choosing an occurrence threshold $\beta$ of order $\alpha$, we will have at most $O(1/\alpha)$ many $(s,a)$ pairs in $\Freq_\beta$. By applying a union bound over all $(s,a)$ pairs in $\Freq_\beta$, we get that with probability $1-\delta$, we get that the sum of the log-likelihood ratios of next-state transitions starting in $\Freq_\beta$ between the true label's model estimate and any other label's model estimate is positive whenever $T_n = \Omega(t_{mix}\log^4(1/\delta)\log^3(1/f_{min})/\alpha^3\Delta^3)$.

We now take another union bound over the $N$ new trajectories to get that we can exactly classify all of them with probability at least $1-\delta$ whenever $T_n \geq \Omega(t_{mix}\log^4(N/\delta)\log^3(1/f_{min})/\alpha^3\Delta^3)$.

\subsection{Proof of Lemma~\ref{lem:likelihood-ratio-bound}}

We first perform a computation analogous to Lemma 1 in \citet{wongshen1995inequalities}. Let $D_1 = {\prob}_k(\cdot \mid s,a)$, $D_2 = {\prob}_l(\cdot \mid s,a)$, $\hat{D}_1 = \hat{\prob}_k(\cdot \mid s,a)$, $\hat{D}_2 = \hat{\prob}_l(\cdot \mid s,a)$. Fix $b>0$. We use the conditional Markov inequality and the fact that conditioned on $\G(n,s,a)$ and under the product distribution $\hat{Q}_n$, the Hellinger distance between the next-state distributions at any $(s,a)$ observation is $H(\hat{D}_1, \hat{D}_2)$, which satisfies $H(\hat{D}_1, \hat{D}_2) \geq \Delta/4\sqrt{2}$. This is crucially due to the independence and the fact that we are fixing $\G(n,s,a)$ by conditioning on it. As usual, abbreviate $\G(n,s,a)$ to $\G$ for brevity.

\begin{align*}
    \prob_{{Q}_n}(LR_n(s,a) \leq e^{c_nb/2} \mid \G) &= \prob_{{Q}_n}\left( \prod_{i=1}^{c_n} \left(\frac{\hat{D}_2(S_i)}{\hat{D}_1(S_i)}\right)^{1/2} \geq e^{-c_nb/2} \middle\vert \G\right)\\
    &\leq e^{c_nb/2}\left(\E_{{Q}_n}\left[ \left(\frac{\hat{D}_2(S_i)}{\hat{D}_1(S_i)}\right)^{1/2} \middle\vert \G \right]\right)^{c_n}\\
    &= e^{c_nb/2}\left(\E_{D_1}\left[ \left(\frac{\hat{D}_2(S_i)}{\hat{D}_1(S_i)}\right)^{1/2} \right]\right)^{c_n}\\
    &= e^{c_nb/2}\left(\E_{D_1}\left[ \left(\frac{{D}_1(S_i)}{\hat{D}_1(S_i)}\right)^{1/2}\left(\frac{\hat{D}_2(S_i)}{{D}_1(S_i)}\right)^{1/2} \right]\right)^{c_n}\\
    &\leq e^{c_nb/2}\left(\E_{D_1}\left[ \left(1+\Delta^2/64\right)^{1/2}\left(\frac{\hat{D}_2(S_i)}{{D}_1(S_i)}\right)^{1/2} \right]\right)^{c_n}\\
    &= e^{c_nb/2}\left(1+\Delta^2/64\right)^{c_n/2}\left(1 - \frac{H(D_1,D_2)^2}{2}\right)^{c_n}\\
    &\leq e^{c_nb/2}\left(1-\Delta^2/128\right)^{c_n/2}\\
    &\leq e^{c_nb/2}e^{-c_n\Delta^2/128}
\end{align*}

Setting $b = \Delta^2/256$, we get that $\prob_{{Q}_n}(LR_n(s,a) \leq e^{c_n\Delta^2/256} \mid \G) \leq e^{-c_n\Delta^2/256}$. Now by following a very similar computation to that in point 2 in section~\ref{sssec:clustering-concentration-type-term}, we get that for $T_n \geq \Omega(Gt_{mix}\log(1/\alpha))$ and $G \geq \Omega\left(\frac{\log(1/\delta)}{\alpha^2}\right)$, $c_n \geq Gd_{min}(s,a)/4$ with probability at least $1-\delta/2$. That is, for such $T_n$ and $G$,
$$\prob_{Q_n}(LR_n(s,a) \leq e^{Gd_{min}(s,a)\Delta^2/512} \mid \G) \leq \prob_{Q_n}(LR_n(s,a) \leq e^{c_n\Delta^2/128} \mid \G) \leq e^{-Gd_{min}(s,a)\Delta^2/512} + \frac{\delta}{2}$$

Since this holds for any value of $\G = \G(n,s,a)$, we can say that with probability at least $1-\delta$, for $T_n \geq \Omega(Gt_{mix}\log(1/\alpha))$ and $G \geq \Omega\left(\frac{\log(1/\delta)}{\alpha^2}\right)$, $c_n \geq Gd_{min}(s,a)/4$, we have the following bound.

$$\prob_{Q_n}(LR_n(s,a) \leq e^{Gd_{min}(s,a)\Delta^2/512}) \leq e^{-Gd_{min}(s,a)\Delta^2/512}  + \frac{\delta}{2}$$

After following a computation very similar to that in point 3 of section~\ref{sssec:clustering-concentration-type-term}, we get that for $T_n \geq \Omega\left(Gt_{mix}\log\left( \frac{G}{\delta}\right)\log(1/\alpha)\right)$ and $G \geq \Omega\left( \frac{\log(1/\delta)}{\alpha^2\Delta^2}\right)$, 

$$\prob_\chi(LR_n(s,a) \leq e^{Gd_{min}(s,a)\Delta^2/512}) \leq \delta$$

Note that we want $e^{Gd_{min}(s,a)\Delta^2/512} \geq f_l/f_k$, in which case it suffices to ask $e^{Gd_{min}(s,a)\Delta^2/512} \geq 1/f_{min}$. Combining this with earlier conditions, for $G \geq \Omega\left(\frac{\log(1/\delta)\log(1/f_{min})}{\alpha^2\Delta^2}\right)$ and $T_n \geq \Omega\left(Gt_{mix}\log\left( \frac{G}{\delta}\right)\log(1/\alpha)\right)$, 

$$\prob_\chi\left(\frac{f_k}{f_l}LR_n(s,a) \leq 1\right) \leq \delta$$

\end{proof}
\newpage
\section{Proof of Theorem~\ref{thm:model-est}}\label{sec:model-est}

\ModelEstimation*

\begin{proof}

The proof is quite straightforward and employs the techniques used so far, especially those used in section~\ref{sssec:clustering-concentration-type-term}.
Let $k$ be the (now known) label that we're working with.

We modify previous notation a bit for this proof. For brevity of notation, we denote by $c_{n, g}$ the indicator variable for observing $(s,a)$ in the $g^{th}$ single-step sub-block of the trajectory $n$. Denote by $\textbf{w}_{n,g}$ one-hot vector of the next state observed if the currect state-action pair is $(s,a)$, and set it to the zero-vector otherwise. Note that $\sum_{g} c_{n,g} = N(n,s,a)$ and $\sum_g \textbf{w}_{n,g} = \textbf{N}(n,s,a, \cdot)$. We denote the set of indices $(n,g)$ of all $s,a$ observations that come from label $k$ (across the $GN_{clust}$ observations recorded) by $\N(s,a,k)$. Let the size of this set be $N(s,a,k)$. Note that $N(s,a,k) = \sum_{n \in \C_k} N(n,s,a) = \sum_{n,g} c_{n,g}$. Also note the following alternate expression for $\hat{\prob}_k(\cdot \mid s,a)$.
\begin{align*}
    \hat{\prob}_k(\cdot \mid s,a) := \frac{\sum_{(n,g) \in \N(s,a,k)} \textbf{w}_{n,g}}{\sum_{(n,g) \in \N(s,a,k)} c_{n,g}} \ind_{N(s,a,k) \neq 0} =  \frac{\sum_{(n,g) \in \N(s,a,k)} \textbf{w}_{n,g}}{N(s,a,k)} \ind_{N(s,a,k) \neq 0} \numberthis \label{eqn:model-estimate-alternate-form}
\end{align*}

Let $\chi_{n}$ be the joint distribution of observations recorded in trajectory $n$, with their marginals on each single-element sub-block being $\chi_{n,g}$. Let $\chi$ be the joint distribution of all observations recorded across all trajectories. Since the trajectories are independent, we know that $\chi = \prod_n \chi_n$. Let $Q_g$ be the joint distribution of the observations at the $g^{th}$ sub-block. Note that this is also the marginal of the joint distribution $\chi$ on the $g^{th}$ sub-block, and since the trajectories are independent, $Q_g = \prod_n \chi_{g,n}$. Finally, denote by $Q$ the product distribution $\prod_g Q_g = \prod_g \prod_n \chi_{g,n}$. This would be the distribution if all observations recorded were independent (across sub-blocks).

\textbf{1. Concentration under the product distribution}

We have the following computation. 
\begin{align*}
    \E_{Q}[\hat{\prob}_k(\cdot \mid s,a) \mid \N(s,a,k)] &= \E_Q\left[\frac{\sum_{n \in \N_{clust}} \textbf{w}_{n}}{N(s,a,k)} \ind_{N(s,a,k) \neq 0} \middle\vert N(s,a,k) \right]\\
    &= \E\left[ \frac{\sum_{n \in \N(s,a,k)} \textbf{w}_{n}}{N(s,a,k)} \middle\vert N(s,a,k) \right]\ind_{N(s,a,k) \neq 0}\\
    &= \frac{\sum_n \E_Q[\textbf{w}_{n} \mid \N(s,a,k)]}{N(s,a,k)}\ind_{N(s,a,k) \neq 0}\\
    &= \frac{\sum_n \prob_k(\cdot \mid s,a)c_n}{N(s,a,k)}\ind_{N(s,a,k) \neq 0}\\
    &= \frac{\prob_k(\cdot \mid s,a) (\sum_n c_n)}{N(s,a,k)}\ind_{N(s,a,k) \neq 0}\\
    &= \frac{\prob_k(\cdot \mid s,a) N(s,a,k)}{N(s,a,k)}\ind_{N(s,a,k) \neq 0}\\
    &= \prob_k(\cdot \mid s,a)\ind_{N(s,a,k) \neq 0}
\end{align*}

Now we set up our covering argument. Remember that $[-1,1]^S$ is the set of all vectors $\textbf{u} \in \R^S$ with $\|u\|_\infty \leq 1$. Consider a covering of $[-1,1]^{S}$ by boxes of side length $\frac{1}{4}$ and centers lying in $[-1,1]^{S}$. We will need at most $12^S$ such boxes and if $C$ is the set of their centers, then for any vector $\textbf{v}$
$$\|\textbf{v}\|_1 = \sup_{\textbf{u}\in [-1,1]^{S-1}} |\textbf{u}^T\textbf{v}| \leq 2 \max_{\textbf{u} \in C} |\textbf{u}^T\textbf{v}| \leq 2 \|\textbf{v}\|_1$$

Also, for any $\textbf{u} \in C$, note that
\begin{align*}
    |\textbf{u}^T\hat{\prob}_{n,i}(\cdot \mid s,a)| &\leq \|\textbf{u}\|_\infty
\left\Vert \frac{\textbf{w}_{n,1}}{c_{n,1}}\right\Vert_1\\
&\leq \left\Vert \frac{\textbf{w}_{n,1}}{c_{n,1}}\right\Vert_1\\
&= 1
\end{align*}

and so $|\textbf{u}^T\E_Q[\hat{\prob}_k(\cdot \mid s,a) \mid \N(s,a,k)]| \leq \E[|\textbf{u}^T\hat{\prob}_k(\cdot \mid s,a)| \mid \N(s,a,k)] \leq 1$. Again, note that conditioned on the set of all $(s,a)$ observations recorded, the next states $\textbf{w}_{n,g}$ are all independent under the product distribution $Q$ (but not under $\chi$, of course). Recalling the expression for $\hat{\prob}_k(\cdot \mid s,a)$ from equation~\ref{eqn:model-estimate-alternate-form}, this means that we can use the conditional version of Hoeffding's inequality, giving us the following bound.
$$\prob_Q\left( \left\vert \textbf{u}^T(\hat{\prob}_k(\cdot \mid s,a) - \E_Q[\hat{\prob}_k(\cdot \mid s,a) \mid \N(s,a,k)]) \right\vert > \frac{\epsilon}{4} \middle\vert \N(s,a,k)\right) < 2e^{-\frac{\epsilon^2N(s,a,k)}{8}}$$

Doing this for all $12^{S}$ vectors $\textbf{u} \in C$, we get the following inequality.

$$\prob_Q\left( \left\Vert (\hat{\prob}_{n,i}(\cdot \mid s,a) - \E_Q[\hat{\prob}_{n,i}(\cdot \mid s,a) \mid \N(s,a,k)]) \right\Vert_1 > \frac{\epsilon}{2} \middle\vert \N(s,a,k)\right) $$

is bounded above by
\begin{align*}
    &\prob_Q\left(\exists \textbf{u} \in C; \left\vert \textbf{u}^T(\hat{\prob}_k(\cdot \mid s,a) - \E_Q[\hat{\prob}_k(\cdot \mid s,a) \mid \N(s,a,k)]) \right\vert > \frac{\epsilon}{4} \middle\vert \N(s,a,k)\right)\\
    &\leq \sum_{\textbf{u} \in C} \prob_Q\left(\left\vert \textbf{u}^T(\hat{\prob}_k(\cdot \mid s,a) - \E_Q[\hat{\prob}_k(\cdot \mid s,a) \mid \N(s,a,k)]) \right\vert > \frac{\epsilon}{4} \middle\vert \N(s,a,k)\right)\\
    &< 12^{S}*e^{-\frac{\epsilon^2N(s,a,k)}{8}}
\end{align*}

\textbf{2. Bounding $N(s,a,k)$ under the product distribution}

Now note that $N(s,a,k) = \sum_{(n,g) \in \N_{clust}\times [G]} c_{n,g}$. So, 
$$\E_Q[N(s,a,k)] = \sum_{(n,g) \in \N_{clust}\times [G]} \E_Q[c_{n,g}] = \sum_{(n,g) \in \N_{clust}\times [G]} \prob_\chi(c_{n,g} \neq 0)$$
We can show the following inequality. 
$$\prob_\chi(c_{n,g} \neq 0) = \prob_\chi(c_{n,g} \neq 0 \mid k_n=k) \prob(k_n=k) \geq \frac{d_{min}(s,a)}{2}f_{min}$$
for $T_n \geq \Omega(Gt_{mix}\log(1/\alpha))$, getting the last inequality by using a computation very similar to the one in equation~\ref{eqn:cn-i-bound}, along with the fact that $\prob(k_n = k) = f_k$. So, $\E_Q[N(s,a,k)] \geq \frac{GN_{clust}f_{min}d_{min}(s,a)}{2}$.

\begin{align*}
    \prob_Q\left(N(s,a,k) < GN_{clust}\frac{f_{min}d_{min}(s,a)}{4}\right) &= \prob_Q\left(N(s,a,k) < GN_{clust}\frac{f_{min}d_{min}(s,a)}{2} - GN_{clust}\frac{f_{min}d_{min}(s,a)}{4}\right)\\
    &\leq \prob_Q\left(N(s,a,k) < \E[N(s,a,k)] - GN_{clust}\frac{f_{min}d_{min}(s,a)}{4}\right)\\
    &= \prob_Q\left(\sum_{(n,g) \in \N_{clust}\times [G]} c_{n,g} < \E[N(s,a,k)] - GN_{clust}\frac{f_{min}d_{min}(s,a)}{4}\right)\\
    &\leq \exp\left(-\frac{f_{min}^2d_{min}(s,a)^2GN_{clust}}{8}\right)
\end{align*}

This is less than $\delta/2$ for $GN_{clust} \geq \Omega\left( \frac{\log(1/\delta)}{f_{min}^2\alpha^2}\right)$. So, with probability at least $1-\delta/2$, for $GN_{clust} \geq \Omega\left( \frac{\log(1/\delta)}{f_{min}^2\alpha^2}\right)$ and $T_n \geq \Omega(Gt_{mix}\log(1/\alpha))$, we have the following bound.

$$\prob_Q\left( \left\Vert (\hat{\prob}_{n,i}(\cdot \mid s,a) - \E_Q[\hat{\prob}_{n,i}(\cdot \mid s,a) \mid \N(s,a,k)]) \right\Vert_1 > \frac{\epsilon}{2}\right) \leq 12^Se^{-\frac{\epsilon^2GN_{clust}f_{min}d_{min}(s,a)}{128}}$$

\textbf{3. Mixing error to account for non-independence in the true joint distribution}

Note that we can think of the combined dataset as a Markov chain over the tuple of $n$ observations, with a joint distribution $\chi$ over observations. Its marginal over the $g^{th}$ single-step sub-blocks is $Q_g$ and $Q = \prod_g Q_g$. We now want to apply Lemma~\ref{lem:blocking-technique}, noting that the relevant function of this Markov chain is $\ind_E$ where $E$ is the event $\|\hat{\prob}_k(\cdot \mid s,a) - \prob_k(\cdot \mid s,a)\|_1 < \frac{\epsilon}{2}$. Clearly, in this case, $n$ from the lemma is $G$ and $C$ from the lemma is $1$. We use this to get the following bound.
$$\prob_\chi\left( \left\Vert (\hat{\prob}_{n,i}(\cdot \mid s,a) - \E_Q[\hat{\prob}_{n,i}(\cdot \mid s,a) \mid \N(s,a,k)]) \right\Vert_1 > \frac{\epsilon}{2} \right) $$
is bounded above by
\begin{align*}
    &\prob_Q\left( \left\Vert (\hat{\prob}_{n,i}(\cdot \mid s,a) - \E_Q[\hat{\prob}_{n,i}(\cdot \mid s,a) \mid \N(s,a,k)]) \right\Vert_1 > \frac{\epsilon}{2} \right) + 4G\left(\frac{1}{4} \right)^{\frac{T_n}{8Gt_{mix}}}\\
    &\leq 12^Se^{-\frac{\epsilon^2GN_{clust}f_{min}d_{min}(s,a)}{128}} + 4G\left(\frac{1}{4} \right)^{\frac{T_n}{8Gt_{mix}}}
\end{align*}

Each term is less than $\delta/4$ for $GN_{clust} \geq \Omega\left(\frac{1}{\epsilon^2f_{min}\alpha} (S+\log(\frac{1}{\delta})\right)$ and $T_n \geq \Omega(Gt_{mix}\log(G/\delta))$. So for such $G, N_{clust}, T_n$, with probability greater than $1-\delta$, 

$$\|\hat{\prob}_k(\cdot \mid s,a) - \prob_k(\cdot \mid s,a)\|_1 < \epsilon$$

Alternatively, for $GN_{clust} \geq \Omega\left( \frac{\log(1/\delta)}{f_{min}^2\alpha^2}\right)$ and $T_n \geq \Omega(Gt_{mix}\log(G/\delta)\log(1/\alpha))$, with probability greater than $1-\delta$, 

$$\|\hat{\prob}_k(\cdot \mid s,a) - \prob_k(\cdot \mid s,a)\|_1 \leq O\left(\sqrt{\frac{1}{GN_{clust}f_{min}\alpha} (S+\log(\frac{1}{\delta}))}\right)$$

Letting $G = \left( \frac{T_n}{t_{mix}}\right)^{2/3}$, for $\left( \frac{T_n}{t_{mix}}\right)^{2/3}N_{clust} \geq \Omega\left( \frac{\log(1/\delta)}{f_{min}^2\alpha^2}\right)$ and $T_n \geq \Omega(t_{mix}\log^4(1/\delta)\log^4(1/\alpha))$, with probability greater than $1-\delta$, 

$$\|\hat{\prob}_k(\cdot \mid s,a) - \prob_k(\cdot \mid s,a)\|_1 \leq O\left(\left( \frac{t_{mix}}{T_n}\right)^{1/3}\sqrt{\frac{1}{N_{clust}f_{min}\alpha} (S+\log(\frac{1}{\delta}))}\right)$$
\end{proof}

\newpage
\section{Proof of Theorem~\ref{thm:classification}}
We recall the theorem here.

\Classification*

\begin{proof}
The proof is very similar to the proof of theorem~\ref{thm:clustering}. Consider the testing of trajectory $n$. Recall that in algorithm~\ref{alg:classification}, we defined 
$$\dist_{1}(n,k) := \max_{(s,a) \in SA_{\alpha}} \left[ \left( \left(\hat{\prob}_{n,1}(\cdot \mid s,a) - \hat{\prob}_{k}(\cdot \mid s,a) \right)^T\Tilde{\textbf{V}}_{s,a} \right) \left(\left( \hat{\prob}_{n,2}(\cdot \mid s,a) - \hat{\prob}_{k}(\cdot \mid s,a) \right)^T \Tilde{\textbf{V}}_{s,a} \right)^T \right]$$ 
Let $k_n$ the label of trajectory $n$. According to our assumptions, if $k_n \neq k$, then we have an $s,a$ so that $d_{k_n}(s,a) \geq \alpha$ and $\|\prob_{k_n}(\cdot \mid s,a) - \prob_{k}(\cdot \mid s,a)\|_2 \geq \Delta$. Again, we will make $s,a$ implicit in our notation except in $\prob_j(\cdot \mid s,a)$. Let $c_{n,i} := N(n,i,s,a)$, $\textbf{w}_{n,i} := \textbf{N}(n,i,s,a, \cdot)$. Recall that we have two nested partitions: (1) of the entire trajectory into the two $\Omega_i$ and (2) of each segment $\Omega_i$ into $G$ blocks. Finally, define $\dist_{1,(s,a)}$ as below, suppressing $n$ and $k$. Note that $\dist_1(n,k)$ is the maximum of $\dist_{1,(s,a)}$ over all $(s,a) \in \Freq_\beta$, for the given trajectory $n$ and label $k$.
\begin{align*}
\dist_{1, (s,a)}&:= \left[ \left( (\hat{\prob}_{n,1}(\cdot \mid s,a) - \hat{\prob}_{k}(\cdot \mid s,a))^T\Tilde{\textbf{V}}_{s,a} \right) \left( (\hat{\prob}_{n,2}(\cdot \mid s,a) - \hat{\prob}_{k}(\cdot \mid s,a))^T\Tilde{\textbf{V}}_{s,a} \right)^T \right]
\end{align*}

We want to show that this is close to $\|\Delta_{n,k}(s,a)\|_2^2$ for the $(s,a)$ pairs that we search over, where
$$\Delta_{n,k}(s,a) = \prob_{k_n}(\cdot \mid s,a) - \prob_{k}(\cdot \mid s,a)$$

Recall that $\|\hat{\prob}_k(\cdot \mid s,a) - \prob_k(\cdot \mid s,a)\|_2 \leq \epsilon_{mod}(\delta)$ for any $1 \leq k \leq K$. Let $\textbf{M}^{true}_{s,a} = \sum_{1 \leq k \leq K} \hat{f}_{k,s,a}\prob_k(\cdot \mid s,a)\prob_k(\cdot \mid s,a)^T$. We use the fact that $\|aa^T - bb^T\| \leq (\|a\|_2+ \|b\|_2)\|a-b\|_2$ in the bound below.

\begin{align*}
    \|\textbf{M}^{true}_{s,a} - \Tilde{\textbf{M}}_{s,a}\| &\leq \sum_{1 \leq k \leq K} \hat{f}_{k,s,a} \|\prob_k(\cdot \mid s,a)\prob_k(\cdot \mid s,a)^T - \hat{\prob}_k(\cdot \mid s,a)\hat{\prob}_k(\cdot \mid s,a)^T\|\\
    &\leq \sum_{1 \leq k \leq K} \hat{f}_{k,s,a}(\|\hat{\prob}_k(\cdot \mid s,a)\|_2 + \|\prob_k(\cdot \mid s,a)\|_2) \|\hat{\prob}_k(\cdot \mid s,a) - \prob_k(\cdot \mid s,a)\|_2\\
    &\leq \sum_{1 \leq k \leq K} 2\hat{f}_{k,s,a} \|\hat{\prob}_k(\cdot \mid s,a) - \prob_k(\cdot \mid s,a)\|_2\\
    &\leq 2\epsilon_{mod}(\delta)
\end{align*}

Also note that if we redefine $\B_n$ to be the event of observing $(s,a)$ in a trajectory (instead of in both segments as in the notation in previous proofs), then $\hat{f}_{k,s,a} = \frac{\sum_n \ind_{k_n=k}\ind_{\B_n}}{\sum_n \ind_{\B_n}} \geq  \frac{\sum_n \ind_{k_n=k}\ind_{\B_n}}{N_{clust}}$. So, $\E[\hat{f}_{k,s,a}] \geq \prob(k_n = k \cap \B_n) = \prob(\B_n \mid k_n=k)\prob(k_n=k) \geq f_{min}\prob(\B_n \mid k_n=k)$. Using a computation very similar to the one leading up to inequality~\ref{eqn:cm1-cm2-bound}, we note that $\prob(\B_n \mid k_n=k) \geq d_{min}(s,a)/2$ for $T_n \geq \Omega(t_{mix}\log(1/\alpha))$. In that case, $\E[\hat{f}_{k,s,a}] \geq f_{min}d_{min}(s,a)/2 \geq f_{min}\alpha/2$. Additionally, using a standard concentration argument, $\hat{f}_{k,s,a} \geq \E[\hat{f}_{k,s,a}]/2 \geq f_{min}\alpha/4$ for $N_{clust} \geq \Omega\left(\frac{\log(1/\delta)}{f_{min}^2\alpha^2}\right) \geq \Omega\left(\frac{\log(1/\delta)}{\E[\hat{f}_{k,s,a}]^2}\right)$

We now apply Lemma 3 of \citet{poor2022mixdyn}, with $p^{(k)} = \hat{f}_{k,s,a}$, $\textbf{y}^{(k)} = \prob_k(\cdot \mid s,a)$, $\textbf{M} = \textbf{M}^{true}_{s,a}$ and $\textbf{M}_* = \textbf{M}_{s,a}$. We use the right-hand side of the bound in the lemma to get the bound below for all $1 \leq k \leq K$, which holds for a universal constant $C_2$ with probability at least $1-\delta$ whenever $N_{clust} \geq \Omega\left(\frac{\log(1/\delta)}{f_{min}^2\alpha^2}\right)$ and $T_n \geq \Omega(t_{mix}\log(1/\alpha))$.
\begin{align*}
    \|\prob_{k}(\cdot \mid s,a) - \Tilde{\textbf{V}}_{s,a}\Tilde{\textbf{V}}_{s,a}^T\prob_{k}(\cdot \mid s,a)\|_2 \leq \sqrt{\frac{2K\epsilon_{mod}(\delta)}{\hat{f}_{k,s,a}}} \leq C_2\sqrt{\frac{K\epsilon_{mod}(\delta)}{f_{min}\alpha}} \numberthis \label{eqn:epsilon-mod-bound}
\end{align*}

Assume the lemma below for now, we prove it in the next subsection.
\begin{restatable}{lemma}{ClassificationStatDeltaBound}\label{lem:classification-stat-delta-mn-bound}
We claim that there is a universal constant $C_1$ so that for any $(s,a)$ with $d_{min}(s,a) \geq \alpha/3$, with probability at least $1-\delta$,
\begin{align*}
    \left\vert \dist_{1, (s,a)} - \left\Vert \Delta_{n,k}(s,a) \right\Vert_2^2 \right\vert \leq O\left(\sqrt{\frac{K+\log(1/\delta)}{G\alpha}} \right) + 8C_2\sqrt{\frac{K\epsilon_{mod}(\delta/2)}{f_{min}\alpha}}
\end{align*}
whenever $T_n \geq \Omega\left( Gt_{mix}\log(G/\delta)\log(1/\alpha) \right)$ and $G \geq \Omega\left( \frac{\log(1/\delta)}{\alpha^2}\right)$. Here, $\epsilon_{mod}(\delta)$ is a high probability bound on $\|\prob_{k}(\cdot \mid s,a) - \Tilde{\textbf{V}}_{s,a}\Tilde{\textbf{V}}_{s,a}^T\prob_{k}(\cdot \mid s,a)\|_2$ for all $1 \leq k \leq K$ (which holds with probability at least $1-\delta)$.
\end{restatable}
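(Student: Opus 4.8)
The plan is to mirror the proof of Lemma~\ref{lem:clustering-stat-delta-mn-bound} almost step for step, with two modifications forced by the classification setting: only the single trajectory $n$ is random (the comparison object $\hat{\prob}_k(\cdot\mid s,a)$ is a \emph{fixed} estimate, since $\N_{class}$ is independent of $\N_{clust}$), and the projector $\Tilde{\textbf{V}}_{s,a}$ is built from the estimated models rather than the true ones. First I would establish a decomposition analogous to Lemma~\ref{lem:clustering-stat-decomp}. Writing $\bm{\Delta}_i^T=(\hat{\prob}_{n,i}(\cdot\mid s,a)-\hat{\prob}_k(\cdot\mid s,a))^T\Tilde{\textbf{V}}_{s,a}$ and using equation~\ref{eqn:q-i-estimator-expectation} for the product distribution $Q_i$, the conditional mean is $\E_{Q_i}[\bm{\Delta}_i\mid\G]=\Tilde{\textbf{V}}_{s,a}^T\textbf{diff}_i$ with $\textbf{diff}_i=\prob_{k_n}(\cdot\mid s,a)\ind_{c_{n,i}\neq0}-\hat{\prob}_k(\cdot\mid s,a)$ --- note the absence of an indicator on the $\hat{\prob}_k$ term, as the model estimate carries no $s,a$-observation event. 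The same triangle-inequality manipulation as in equation~\ref{eqn:dist-expectation-inequality} then reduces $|\dist_{1,(s,a)}-\|\Delta_{n,k}(s,a)\|_2^2|$ to a concentration part $\sum_i 2\|\bm{\Delta}_i-\E_{Q_i}[\bm{\Delta}_i\mid\G]\|_2$ plus a bias part $|\textbf{diff}_1^T\Tilde{\textbf{V}}_{s,a}\Tilde{\textbf{V}}_{s,a}^T\textbf{diff}_2-\|\Delta_{n,k}(s,a)\|_2^2|$.

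The bias part is where the new ideas enter. I would split it, as in the clustering proof, into a projection-residual term $\|\textbf{diff}_1\|_2\|(I-\Tilde{\textbf{V}}_{s,a}\Tilde{\textbf{V}}_{s,a}^T)\textbf{diff}_2\|_2$ and two ``not-observing'' terms $\|\textbf{diff}_i-\Delta_{n,k}(s,a)\|_2$. The crucial observation is that $\Tilde{\textbf{M}}_{s,a}=\sum_k\hat{f}_{k,s,a}\hat{\prob}_k(\cdot\mid s,a)\hat{\prob}_k(\cdot\mid s,a)^T$ has column space exactly $\mathspan(\hat{\prob}_k(\cdot\mid s,a):\hat{f}_{k,s,a}\neq0)$, so its top-$K$ eigenspace $\Tilde{\textbf{V}}_{s,a}$ contains every $\hat{\prob}_k(\cdot\mid s,a)$ (the weights $\hat{f}_{k,s,a}\geq f_{min}\alpha/4>0$ are already shown strictly positive w.h.p.\ in the run-up to equation~\ref{eqn:epsilon-mod-bound}). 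Hence $(I-\Tilde{\textbf{V}}_{s,a}\Tilde{\textbf{V}}_{s,a}^T)\hat{\prob}_k(\cdot\mid s,a)=0$ \emph{exactly}, and only the $\prob_{k_n}$ piece of $\textbf{diff}_2$ survives the projection residual, bounded by equation~\ref{eqn:epsilon-mod-bound} as $\|(I-\Tilde{\textbf{V}}_{s,a}\Tilde{\textbf{V}}_{s,a}^T)\prob_{k_n}(\cdot\mid s,a)\|_2\leq C_2\sqrt{K\epsilon_{mod}(\delta/2)/(f_{min}\alpha)}$. For the not-observing terms, $\textbf{diff}_i-\Delta_{n,k}(s,a)=(\ind_{c_{n,i}\neq0}-1)\prob_{k_n}(\cdot\mid s,a)+(\prob_k(\cdot\mid s,a)-\hat{\prob}_k(\cdot\mid s,a))$, so each is at most $\ind_{c_{n,i}=0}+\epsilon_{mod}(\delta/2)$; since $\epsilon_{mod}\leq C_2\sqrt{K\epsilon_{mod}/(f_{min}\alpha)}$ whenever $\epsilon_{mod}\leq1$, every model-estimation contribution is absorbed into the advertised $8C_2\sqrt{K\epsilon_{mod}(\delta/2)/(f_{min}\alpha)}$ term with room to spare.

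For the concentration part, I would reuse Lemma~\ref{lem:clustering-concentration-type-term-bound} essentially verbatim. Because $\hat{\prob}_k(\cdot\mid s,a)$ is constant it cancels in $\bm{\Delta}_i-\E_{Q_i}[\bm{\Delta}_i\mid\G]=\Tilde{\textbf{V}}_{s,a}^T(\hat{\prob}_{n,i}(\cdot\mid s,a)-\E_{Q_i}[\hat{\prob}_{n,i}(\cdot\mid s,a)\mid\G])$, so the problem collapses to the one-trajectory fluctuation --- strictly simpler than the two-trajectory clustering estimate. The same covering of $\Sph^{K-1}$ (the projected iterates live in $\Real^K$), the conditional Hoeffding inequality of Lemma~\ref{lem:conditional-hoeffdings} under $Q_i$, the lower bound $c_{n,i}\geq Gd_{min}(s,a)/4$ w.h.p., and the blocking transfer from $Q_i$ to $\chi_i$ via Lemma~\ref{lem:blocking-technique} give $\|\bm{\Delta}_i-\E_{Q_i}[\bm{\Delta}_i\mid\G]\|_2=O(\sqrt{(K+\log(1/\delta))/(G\alpha)})$ under the stated conditions on $T_n$ and $G$. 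Finally I would zero out the not-observing indicators with probability $1-\delta$ using Lemma~\ref{lem:clustering-not-observing-s-a-bound} applied to trajectory $n$ alone (its Corollary~\ref{cor:clustering-not-observing-s-a-bound} form), and close by a union bound over the concentration, not-observing, and model-estimation events, splitting $\delta$ as indicated. The main obstacle is the bias part: getting the projector-from-estimated-models argument right --- specifically the exact vanishing $(I-\Tilde{\textbf{V}}_{s,a}\Tilde{\textbf{V}}_{s,a}^T)\hat{\prob}_k=0$ together with the transfer of the subspace guarantee from $\hat{\prob}_k$ to $\prob_{k_n}$ through equation~\ref{eqn:epsilon-mod-bound} --- whereas the concentration and not-observing parts are routine ports of the clustering arguments.
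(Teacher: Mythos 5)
Your proposal is correct and follows essentially the same route as the paper's proof: the same decomposition into a concentration term, a projection/model-estimation bias term, and a not-observing term, the same use of equation~\ref{eqn:epsilon-mod-bound} to control $\|(I-\Tilde{\textbf{V}}_{s,a}\Tilde{\textbf{V}}_{s,a}^T)\prob_{k_n}(\cdot\mid s,a)\|_2$, the same absorption of $\epsilon_{mod}$ into $C_2\sqrt{K\epsilon_{mod}/(f_{min}\alpha)}$, and a verbatim port of the one-trajectory concentration and not-observing arguments. The only (immaterial) difference is organizational: you keep $\hat{\prob}_k$ inside $\textbf{diff}_i$ and exploit the exact vanishing $(I-\Tilde{\textbf{V}}_{s,a}\Tilde{\textbf{V}}_{s,a}^T)\hat{\prob}_k(\cdot\mid s,a)=0$, whereas the paper swaps $\hat{\prob}_k$ for $\prob_k$ up front via its intermediate quantity $\bar{\bm{\Delta}}_i$, paying the $\|\hat{\prob}_k-\prob_k\|_2\leq\epsilon_{mod}$ cost at that earlier stage instead; both yield the stated bound.
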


We now set $G = \left( \frac{T_n}{t_{mix}}\right)^{\frac{2}{3}}$. Then a sufficient condition on $T_n$ to meet the conditions of the lemma is $T_n = \Omega(t_{mix}\log^4(1/\delta)/\alpha^3)$, under which, with probability at lest $1-\delta$, we have the following bound for $(s,a)$ with $d_{min}(s,a) \geq \alpha/3$.

\begin{align*}
    \left\vert \dist_{1, (s,a)} - \left\Vert \Delta_{n,k}(s,a) \right\Vert_2^2 \right\vert \leq O\left(\sqrt{\frac{K\log(1/\delta)}{\alpha}}\left( \frac{t_{mix}}{T_n} \right)^{\frac{1}{3}} \right) + 8C_2\sqrt{\frac{K\epsilon_{mod}(\delta/2)}{f_{min}\alpha}} \numberthis \label{eqn:single-s-a-dist-bound}
\end{align*}

It is now easy to see that the first term on the right-hand side is less than $\Delta^2/8$ when $T_n = \Omega\left(K^{3/2}t_{mix} \frac{\log^{3/2}(1/\delta)}{\Delta^6\alpha^{3/2}}\right)$ and $T_n = \Omega(t_{mix}\log^4(1/\delta)/\alpha^3)$. We can combine these to have the guarantee that the first term on the right-hand side is less $\Delta^2/8$ with probability at least $1-\delta$ when $T_n =  \Omega\left( K^{3/2}t_{mix}\frac{\log^4(1/\delta)}{\Delta^6\alpha^{3}}\right)$. 

Now note that if $\beta \geq \alpha/3$, then a separating state action pair always lies in $\Freq_\beta$ and thus, the maximum over the $\left\Vert \Delta_{n,k}(s,a) \right\Vert_2^2$ values corresponding to $\Freq_\beta$ is in fact either $0$ if $k = k_n$ or larger than $\Delta^2$ if $k \neq k_n$. So, if $8C_2\sqrt{\frac{K\epsilon_{mod}(\delta/2)}{f_{min}\alpha}} \leq \Delta^2/32$ and for each of the $(s,a)$ pairs, the first term on the right-hand side of inequality~\ref{eqn:single-s-a-dist-bound} is less than $\Delta^2/8$, then our distance estimate $\dist_1(n,k)$ is on the right side of $\Delta^2/3$. That is, the distance estimate is then less than $\Delta^2/4$ if $k=k_n$, and larger than it if $k \neq k_n$. As a consequence, the output of the $\argmin$ in algorithm~\ref{alg:classification} is $k_n$ in this situation.

Note that upon choosing an occurrence threshold of order $\alpha$, we will have at most $O(1/\alpha)$ many $(s,a)$ pairs in $\Freq_\beta$ to maximize $\dist_{1, (s,a)}$ over to get $\dist_1(n,k)$. By applying a union bound over all $(s,a)$ pairs in $\Freq_\beta$ and using the conclusion of the previous paragraph, algorithm~\ref{alg:classification} correctly predicts the label $k_n$ for trajectory $n$ with probability $1-\delta$ whenever $T_n =  \Omega\left( K^{3/2}t_{mix}\frac{\log^4(1/(\alpha\delta))}{\Delta^6\alpha^{3}}\right)$ and $8C_2\sqrt{\frac{K\epsilon_{mod}(\delta/2)}{f_{min}\alpha}} \leq \Delta^2/32$.

By applying a union bound over incorrectly predicting $k_n$ for any of the $N_{class}(N_{class}-1)/2$ pairs, we get that algorithm~\ref{alg:classification} can recover the true labels with probability at least $1-\delta$ for $T_n =  \Omega\left( K^{3/2}t_{mix}\frac{\log^4(N_{class}/(\alpha\delta))}{\Delta^6\alpha^{3}}\right)$, whenever $8C_2\sqrt{\frac{K\epsilon_{mod}(\delta/2)}{f_{min}\alpha}} \leq \Delta^2/32$.

Finally note that due to inequality~\ref{eqn:epsilon-mod-bound}, we get that algorithm~\ref{alg:classification} can recover the true labels with probability at least $1-\delta$ for $T_n =  \Omega\left( K^{3/2}t_{mix}\frac{\log^4(N_{class}/(\alpha\delta))}{\Delta^6\alpha^{3}}\right)$, whenever $\epsilon_{mod}(\delta/2) \leq \frac{C_3\Delta^4f_{min}\alpha}{K}$.

\end{proof}

\subsection{Proof of Lemma~\ref{lem:classification-stat-delta-mn-bound}}

We recall the lemma here.
\ClassificationStatDeltaBound*
\begin{proof}
The proof of this lemma is very similar to the proof of Lemma~\ref{lem:clustering-stat-delta-mn-bound}.

\textbf{Notation:} We say $c_{n,i} = N(n,i,s,a)$ as in the statement of the lemma and $\textbf{w}_{n,i} = \textbf{N}(n,i,s,a, \cdot)$. Let the joint distribution of the observations over trajectory $n$ be $\chi$. Let its marginals on the segments $\Omega_i$ be $\chi_{i}$. Let the marginals on each of the $G$ single-step sub-blocks along with their next states be $\chi_{i,g}$. Denote the product distribution $\prod_g \chi_{i,g}$ by $Q_i$. Let $\G(n,s,a)$ denote the set of indices where the state-action pair $(s,a)$ is observed in trajectory $n$. For brevity, we will abbreviate $\G(n,s,a)$ to $\G$. Note that the size of this set is exactly $c_{n,i}$.

We first prove a preliminary lemma, similar to lemma~\ref{lem:clustering-stat-decomp}.

\subsubsection{Decomposition of \texorpdfstring{$|\dist_{1, (s,a)} - \left\Vert \Delta_{n,k}(s,a) \right\Vert_2^2|$}{blah}}

\begin{lemma}\label{lem:classification-stat-decomp}
We claim that for each fixed value of $\G(s,a)$ (abbreviated to $\G$), with probability at least $1-\delta$, the following bound holds.
\begin{equation}
    \left\vert \dist_{1, (s,a)} - \left\Vert \Delta_{n,k}(s,a) \right\Vert_2^2 \right\vert\leq \sum_{i=1}^2 2\left\Vert\hat{\prob}_{n,i}(\cdot \mid s,a) - \E_{Q_i}[\hat{\prob}_{n,i}(\cdot \mid s,a) \mid \G]\right\Vert_2 + 8C_2\sqrt{\frac{K\epsilon_{mod}(\delta)}{f_{min}\alpha}} + 4\left(\max_{i} \ind_{c_{n,i} = 0} \right) 
\end{equation}
Here $c_{n,i} = N(n,i,s,a)$ and $\epsilon_{mod}(\delta)$ is a high probability bound on $\|\prob_{k}(\cdot \mid s,a) - \Tilde{\textbf{V}}_{s,a}\Tilde{\textbf{V}}_{s,a}^T\prob_{k}(\cdot \mid s,a)\|_2$ (satisfied with probability $> 1-\delta$).

\end{lemma}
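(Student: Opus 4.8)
The plan is to reproduce the argument of Lemma~\ref{lem:clustering-stat-decomp} almost verbatim, adapting it to the single structural difference: the second object in the distance, $\hat{\prob}_k(\cdot\mid s,a)$, is now a \emph{fixed} model estimate (built from the independent set $\N_{clust}$) rather than a second trajectory estimate. First I would write $\bm{\Delta}_i = \Tilde{\textbf{V}}_{s,a}^T(\hat{\prob}_{n,i}(\cdot\mid s,a) - \hat{\prob}_k(\cdot\mid s,a))$ so that $\dist_{1,(s,a)} = \bm{\Delta}_1^T\bm{\Delta}_2$, and replay the triangle-inequality chain of inequality~\ref{eqn:dist-expectation-inequality}. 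That chain only uses that $\|\bm{\Delta}_i\|_2\le 2$ and $\|\E_{Q_i}[\bm{\Delta}_i\mid\G]\|_2\le 2$ (both hold because the $\hat{\prob}$'s are probability vectors and $\Tilde{\textbf{V}}_{s,a}$ is an orthogonal projector), yielding
$$\big|\dist_{1,(s,a)} - \E_{Q_1}[\bm{\Delta}_1^T\mid\G]\,\E_{Q_2}[\bm{\Delta}_2\mid\G]\big| \le \sum_{i=1}^2 2\big\|\bm{\Delta}_i - \E_{Q_i}[\bm{\Delta}_i\mid\G]\big\|_2.$$
Since $\hat{\prob}_k$ is deterministic under $Q_i$, we have $\bm{\Delta}_i - \E_{Q_i}[\bm{\Delta}_i\mid\G] = \Tilde{\textbf{V}}_{s,a}^T(\hat{\prob}_{n,i} - \E_{Q_i}[\hat{\prob}_{n,i}\mid\G])$, and $\|\Tilde{\textbf{V}}_{s,a}^T\|\le 1$ collapses each summand to $\|\hat{\prob}_{n,i} - \E_{Q_i}[\hat{\prob}_{n,i}\mid\G]\|_2$, producing the first term of the claimed bound.

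Next I would control the bias $\big|\E_{Q_1}[\bm{\Delta}_1^T\mid\G]\E_{Q_2}[\bm{\Delta}_2\mid\G] - \|\Delta_{n,k}(s,a)\|_2^2\big|$. Using equation~\ref{eqn:q-i-estimator-expectation}, $\E_{Q_i}[\hat{\prob}_{n,i}\mid\G] = \prob_{k_n}(\cdot\mid s,a)\ind_{c_{n,i}\neq0}$, and the determinism of $\hat{\prob}_k$ gives $\E_{Q_i}[\bm{\Delta}_i\mid\G] = \Tilde{\textbf{V}}_{s,a}^T\textbf{diff}_i$ with $\textbf{diff}_i := \prob_{k_n}(\cdot\mid s,a)\ind_{c_{n,i}\neq0} - \hat{\prob}_k(\cdot\mid s,a)$, so $\|\textbf{diff}_i\|_2\le 2$. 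I would then telescope $\textbf{diff}_1^T\Tilde{\textbf{V}}_{s,a}\Tilde{\textbf{V}}_{s,a}^T\textbf{diff}_2 - \|\Delta_{n,k}\|_2^2$ through $\textbf{diff}_1^T\textbf{diff}_2$ exactly as in the clustering lemma, producing a subspace-residual term $2\|\textbf{diff}_2 - \Tilde{\textbf{V}}_{s,a}\Tilde{\textbf{V}}_{s,a}^T\textbf{diff}_2\|_2$ together with the two bias terms $2\|\textbf{diff}_1 - \Delta_{n,k}\|_2 + 2\|\textbf{diff}_2 - \Delta_{n,k}\|_2$, where $\Delta_{n,k} = \prob_{k_n} - \prob_k$ refers to the \emph{true} models.

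The crux, and the main obstacle, is that — unlike in the clustering setting where Theorem~\ref{thm:subspace-est} directly guarantees the true next-state distributions lie near the range of $\textbf{V}_{s,a}$ — here $\Tilde{\textbf{V}}_{s,a}$ is assembled from the \emph{estimated} models $\hat{\prob}_k$, so the model-estimation error must be threaded through both the residual and the bias. For the residual I would write $\textbf{diff}_2 = \prob_{k_n}\ind_{c_{n,2}\neq0} - \hat{\prob}_k$, add and subtract $\prob_k$, and bound $\|\textbf{diff}_2 - \Tilde{\textbf{V}}_{s,a}\Tilde{\textbf{V}}_{s,a}^T\textbf{diff}_2\|_2$ by $\|\prob_{k_n} - \Tilde{\textbf{V}}_{s,a}\Tilde{\textbf{V}}_{s,a}^T\prob_{k_n}\|_2 + \|\prob_k - \Tilde{\textbf{V}}_{s,a}\Tilde{\textbf{V}}_{s,a}^T\prob_k\|_2 + \|\hat{\prob}_k - \prob_k\|_2$, each projection residual already being $\le C_2\sqrt{K\epsilon_{mod}(\delta)/(f_{min}\alpha)}$ by inequality~\ref{eqn:epsilon-mod-bound} (the application of Lemma 3 of \citet{poor2022mixdyn} with $p^{(k)}=\hat{f}_{k,s,a}$), and the model error $\|\hat{\prob}_k-\prob_k\|_2\le\epsilon_{mod}(\delta)$ being dominated by it. For the bias terms, $\textbf{diff}_i - \Delta_{n,k} = -\,\prob_{k_n}\ind_{c_{n,i}=0} - (\hat{\prob}_k - \prob_k)$, so $\|\textbf{diff}_i - \Delta_{n,k}\|_2 \le \ind_{c_{n,i}=0} + \epsilon_{mod}(\delta)$. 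Collecting, the indicator contributions give the $4\max_i\ind_{c_{n,i}=0}$ term — and crucially \emph{only} for trajectory $n$, since $\hat{\prob}_k$ carries no occupancy indicator, which is why a single indicator term survives here in place of the two in Lemma~\ref{lem:clustering-stat-decomp} — while all projection-residual and model-error contributions aggregate into $8C_2\sqrt{K\epsilon_{mod}(\delta)/(f_{min}\alpha)}$. A union bound over the (at most two) high-probability events underlying inequality~\ref{eqn:epsilon-mod-bound} then closes the argument; the delicate part throughout is the bookkeeping that absorbs the estimated-subspace error and the model-estimation error into one $\epsilon_{mod}$-dependent term without double-counting, and that tracks which occupancy indicators genuinely appear.
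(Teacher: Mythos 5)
Your proposal is correct and follows essentially the same route as the paper's proof: the same split into a concentration term and a bias term, the same telescoping of $\textbf{diff}_1^T\Tilde{\textbf{V}}_{s,a}\Tilde{\textbf{V}}_{s,a}^T\textbf{diff}_2$ through $\textbf{diff}_1^T\textbf{diff}_2$, the same use of $\E_{Q_i}[\hat{\prob}_{n,i}(\cdot\mid s,a)\mid\G]=\ind_{c_{n,i}\neq 0}\prob_{k_n}(\cdot\mid s,a)$, and the same invocation of inequality~\ref{eqn:epsilon-mod-bound} for the projection residuals of the true models. The only (cosmetic) difference is that you anchor the bias at $\E_{Q_i}[\bm{\Delta}_i\mid\G]$, so $\hat{\prob}_k$ sits inside your $\textbf{diff}_i$, whereas the paper substitutes the true $\prob_k$ at that stage (its $\bar{\bm{\Delta}}_i$), which merely relocates the $\|\hat{\prob}_k-\prob_k\|_2$ errors: your accounting collects $6\epsilon_{mod}(\delta)$ of model error against the paper's $4\epsilon_{mod}(\delta)$, both of which are absorbed into the $8C_2\sqrt{K\epsilon_{mod}(\delta)/(f_{min}\alpha)}$ term up to a negligible adjustment of the universal constant $C_2$.
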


\begin{remark}
In the inequality,
\begin{itemize}
    \item The first term is a concentration-type term, which will be broken into an ``independent concentration" error and a mixing error to account for the low but non-zero dependence across blocks.
    \item The second term accounts for subspace estimation error.
    \item The third term accounts for actually observing $s,a$ in our blocks.
\end{itemize}
\end{remark}
\begin{proof}

Define the following quantities.
\begin{align*}
    \bm{\Delta}_i^T &= (\hat{\prob}_{n,i}(\cdot \mid s,a) - \hat{\prob}_{k}(\cdot \mid s,a))^T\Tilde{\textbf{V}}_{s,a}\\
    \bar{\bm{\Delta}}_i^T &= (\E_{Q_i}[\hat{\prob}_{n,i}(\cdot \mid s,a) \mid \G] - {\prob}_{k}(\cdot \mid s,a))^T\Tilde{\textbf{V}}_{s,a}
\end{align*}

We first establish a simple inequality, using the fact that $|a^Tb - c^Td| \leq \|b\|_2\|a-c\|_2 + \|c\|_2\|b-d\|_2$
\begin{align*}
    |\dist_{1, (s,a)} - \bar{\bm{\Delta}}_1^T\bar{\bm{\Delta}}_2| &= |\bm{\Delta}_1^T\bm{\Delta}_2 -\bar{\bm{\Delta}}_1^T\bar{\bm{\Delta}}_2|\\
    &\leq \|\bm{\Delta}_1 - \bar{\bm{\Delta}}_1\|_2\|{\bm{\Delta}}_2\|_2 + \|\bar{\bm{\Delta}}_1^T\|_2\|\bm{\Delta}_2 - \bar{\bm{\Delta}}_2\|_2\\
    &\leq 2\|\bm{\Delta}_1 - \bar{\bm{\Delta}}_1\|_2 + 2\|\bm{\Delta}_2 - \bar{\bm{\Delta}}_2\|_2\\
    &\leq \sum_{i=1}^2 2\|\hat{\prob}_{n,i}(\cdot \mid s,a) - \E_{Q_i}[\hat{\prob}_{n,i}(\cdot \mid s,a) \mid \G]\|_2 + \sum_{i=1}^2 2 \| \hat{\prob}_{k}(\cdot \mid s,a) - {\prob}_{k}(\cdot \mid s,a)\|_2\\
    &\leq \sum_{i=1}^2 2\|\hat{\prob}_{n,i}(\cdot \mid s,a) - \E_{Q_i}[\hat{\prob}_{n,i}(\cdot \mid s,a) \mid \G]\|_2 + 4\epsilon_{mod}(\delta) \numberthis \label{eqn:classification-dist-expectation-inequality}
\end{align*}
Also note the following computation.
\begin{align*}
    \E_{Q_i}[\hat{\prob}_{n,i}(\cdot \mid s,a) \mid \G] &= \E_{Q_i}\left[\frac{\textbf{w}_{n,i}}{c_{n,i}} \ind_{c_{n,i} \neq 0} \mid \G\right]\\
    &= \frac{\E_{Q_i}[\textbf{w}_{n,i} \mid \G]}{c_{n,i}} \ind_{c_{n,i} \neq 0}\\
    &= \frac{\prob_{k_n}(\cdot \mid s,a)c_{n,i}}{c_{n,i}} \ind_{c_{n,i} \neq 0}\\
    &= \ind_{c_{n,i} \neq 0}\prob_{k_n}(\cdot \mid s,a)
\end{align*}
We define the following quantity, overloading notation from Lemma~\ref{lem:clustering-stat-decomp}.
$$\textbf{diff}_i = \ind_{c_{n,i} \neq 0}\prob_{k_n}(\cdot \mid s,a) - \prob_{k}(\cdot \mid s,a)$$
Note that $\bar{\bm{\Delta}_i} = \textbf{diff}_i^T\Tilde{\textbf{V}}_{s,a}$. We recall the following definition before proceeding to show the main inequality.
$$\Delta_{n,k}(s,a) = \prob_{k_n}(\cdot \mid s,a) - \prob_{k}(\cdot \mid s,a)$$
\begin{align*}
    \left\vert \bar{\bm{\Delta}}_1^T\bar{\bm{\Delta}}_2 - \left\Vert \Delta_{n,k}(s,a) \right\Vert_2^2 \right\vert &=  \left\vert \textbf{diff}_1^T\Tilde{\textbf{V}}_{s,a}\Tilde{\textbf{V}}_{s,a}^T\textbf{diff}_2 - \textbf{diff}_1^T\textbf{diff}_2 \right\vert + \left\vert \textbf{diff}_1^T\textbf{diff}_2 - \left\Vert \Delta_{n,k}(s,a)\right\Vert_2^2 \right\vert\\
    \
    &\leq \left\Vert \textbf{diff}_1 \right\Vert_2\left\Vert\textbf{diff}_2 - \Tilde{\textbf{V}}_{s,a}\Tilde{\textbf{V}}_{s,a}^T\textbf{diff}_2 \right\Vert_2 + \left\Vert \textbf{diff}_1 - \Delta_{n,k}(s,a)\right\Vert_2\left\Vert \textbf{diff}_2 \right\Vert_2\\
    &\hspace{5ex} + \left\Vert \textbf{diff}_1 \right\Vert_2\left\Vert \textbf{diff}_2 - \Delta_{n,k}(s,a)\right\Vert_2\\
    \
    &\leq \left\Vert \textbf{diff}_1 \right\Vert_1\left\Vert\textbf{diff}_2 - \Tilde{\textbf{V}}_{s,a}\Tilde{\textbf{V}}_{s,a}^T\textbf{diff}_2 \right\Vert_2 + \left\Vert \textbf{diff}_1 - \Delta_{n,k}(s,a)\right\Vert_2\left\Vert \textbf{diff}_2 \right\Vert_1\\
    &\hspace{5ex} + \left\Vert \textbf{diff}_1 \right\Vert_1\left\Vert \textbf{diff}_2 - \Delta_{n,k}(s,a)\right\Vert_2\\
    \
    &\leq 2\left\Vert\textbf{diff}_2 - \Tilde{\textbf{V}}_{s,a}\Tilde{\textbf{V}}_{s,a}^T\textbf{diff}_2 \right\Vert_2+2\left\Vert \textbf{diff}_1 - \Delta_{n,k}(s,a)\right\Vert_2+2\left\Vert \textbf{diff}_2 - \Delta_{n,k}(s,a)\right\Vert_2\\
    \
    &\leq 2 \left\Vert \prob_{k_n}(\cdot \mid s,a) - \Tilde{\textbf{V}}_{s,a}\Tilde{\textbf{V}}_{s,a}^T\prob_{k_n}(\cdot \mid s,a) \right\Vert_2 \\
    &\hspace{5ex} + 2\left\Vert \prob_{k}(\cdot \mid s,a) - \Tilde{\textbf{V}}_{s,a}\Tilde{\textbf{V}}_{s,a}^T\prob_{k}(\cdot \mid s,a) \right\Vert_2\\
    &\hspace{5ex} + 2\ind_{c_{n,1} = 0} \left\Vert\prob_{k_n}(\cdot \mid s,a)\right\Vert_2 + 2\ind_{c_{n,2} = 0} \left\Vert\prob_{k_n}(\cdot \mid s,a)\right\Vert_2\\
    \
    &\leq 4C_2\sqrt{\frac{K\epsilon_{mod}(\delta)}{f_{min}\alpha}} + 4\left(\max_{i} \ind_{c_{n,i} = 0} \right)
\end{align*}

Notice that $4C_2\sqrt{\frac{K\epsilon_{mod}(\delta)}{f_{min}\alpha}} \geq 4\epsilon_{mod}(\delta)$ since $\epsilon_{mod}(\delta) \leq 2$, $C_2 \geq 2$, $K \geq 1$, $f_{min}, \alpha \leq 1$. Combining this and the computation above with inequality~\ref{eqn:dist-expectation-inequality}, we have the following final bound.
\begin{equation}
    \left\vert \dist_{1, (s,a)} - \left\Vert \Delta_{n,k}(s,a) \right\Vert_2^2 \right\vert\leq \sum_{i=1}^2 2\left\Vert\hat{\prob}_{n,i}(\cdot \mid s,a) - \E_{Q_i}[\hat{\prob}_{n,i}(\cdot \mid s,a) \mid \G]\right\Vert_2 + 8C_2\sqrt{\frac{K\epsilon_{mod}(\delta)}{f_{min}\alpha}} + 4\left(\max_{i} \ind_{c_{n,i} = 0} \right) 
\end{equation}
where we remind the reader that $c_{n,i} = N(n,i,s,a)$.
\end{proof}

\subsubsection{\textbf{Bounding the concentration-type term}} \label{sssec:classification-concentration-type-term}

We bound the first term in the decomposition lemma (Lemma~\ref{lem:classification-stat-decomp}) with high probability.

\begin{lemma}\label{lem:classification-concentration-type-term-bound}
With probability at least $1-\delta$, when $T_n \geq \Omega\left(Gt_{mix}\log\left( \frac{G}{\delta}\log(1/\alpha)\right)\right)$ and $G \geq \Omega\left( \frac{\log(1/\delta)}{\alpha^2}\right)$, we have the following bound.
\begin{align*}
   \left\Vert\hat{\prob}_{n,i}(\cdot \mid s,a) - \E_{Q_i}[\hat{\prob}_{n,i}(\cdot \mid s,a) \mid \G]\right\Vert_2 \leq O\left(\sqrt{\frac{K+\log(1/\delta)}{G\alpha}} \right)
\end{align*}
\end{lemma}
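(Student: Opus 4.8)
The plan is to mirror the proof of Lemma~\ref{lem:clustering-concentration-type-term-bound} almost verbatim, the only difference being that the two-trajectory difference $\Delta_i$ is replaced by the single-trajectory centered (and projected) estimator. Writing $\textbf{w}_{n,i,g}$ for the one-hot next-state vector in sub-block $(i,g)$ (the zero vector when $(s,a)$ is not seen there) and $c_{n,i,g}$ for its indicator, we have $\hat{\prob}_{n,i}(\cdot \mid s,a) = \frac{1}{c_{n,i}}\sum_{g \in \G} \textbf{w}_{n,i,g}$, so the quantity to control is a normalized sum over the observed sub-blocks. The first step is to pass to the product distribution $Q_i = \prod_g \chi_{i,g}$, under which, conditioned on the random set $\G$ of observed positions, the next states are mutually independent. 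It is the projected centered estimator that actually enters the decomposition of Lemma~\ref{lem:classification-stat-decomp}, and retaining the projector $\Tilde{\textbf{V}}_{s,a}$ is exactly what yields the $K$-dependence in the bound.

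Next I would run the covering argument. For a fixed unit $\textbf{u}\in\Sph^{K-1}$, each summand $\textbf{u}^T\Tilde{\textbf{V}}_{s,a}^T\textbf{w}_{n,i,g}$ is bounded in absolute value by $1$, so the conditional Hoeffding inequality (Lemma~\ref{lem:conditional-hoeffdings}) gives $\prob_{Q_i}\!\left(|\textbf{u}^T(\hat{\prob}_{n,i} - \E_{Q_i}[\hat{\prob}_{n,i}\mid\G])| > \tfrac{\epsilon}{4} \mid \G\right) \le 2e^{-\epsilon^2 c_{n,i}/32}$. Taking a $\tfrac14$-net of $\Sph^{K-1}$ of size $12^K$ and union bounding yields $\prob_{Q_i}(\|\cdot\|_2 > \tfrac{\epsilon}{2} \mid \G) < 4\cdot 12^K e^{-\epsilon^2 c_{n,i}/32}$, precisely as in the clustering case; the factor $12^K$ from the net over the projected sphere is the source of the $K$ in the final bound.

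The third step removes the conditioning by lower bounding the count $c_{n,i} = \sum_g c_{n,i,g}$ under $Q_i$. Since $\E_{Q_i}[c_{n,i}] = \sum_g \prob_{\chi_i}(c_{n,i,g}\neq 0) \ge G\,d_{min}(s,a)/2$ once $T_n \gtrsim G t_{mix}\log(1/\alpha)$ (by the same per-sub-block computation as in inequality~\ref{eqn:cn-i-bound}), a standard Hoeffding bound gives $c_{n,i} \ge G\,d_{min}(s,a)/4 \ge G\alpha/12$ with probability at least $1-\delta/2$ provided $G \gtrsim \log(1/\delta)/\alpha^2$. Because the conditional concentration bound is uniform in $\G$, substituting this count lower bound converts the $Q_i$-statement into an unconditional one, of the form $\prob_{Q_i}(\|\cdot\|_2 > \tfrac{\epsilon}{2}) \le 4\cdot 12^K e^{-c\,\epsilon^2 G\alpha}$.

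Finally, I would transfer the probability from $Q_i$ to the true marginal $\chi_i$ using the blocking technique of \citet{bin1994mixing} (Lemma~\ref{lem:blocking-technique}) applied to $h=\ind_E$ with $E$ the bad event, $n=G$, and $C=1$; since consecutive single-step sub-blocks are separated by $\tfrac{T_n}{8G}$ steps, this costs an additive $4G\lambda^{T_n/(8G)}$. Choosing $T_n \gtrsim G t_{mix}\log(G/\delta)\log(1/\alpha)$ drives this mixing error below $\delta/4$, and solving $4\cdot 12^K e^{-c\,\epsilon^2 G\alpha} \le \delta/4$ for $\epsilon$ produces the claimed $\epsilon = O\!\left(\sqrt{(K+\log(1/\delta))/(G\alpha)}\right)$. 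The main obstacle is exactly this last transfer: the conditional Hoeffding inequality forces conditioning on the random observation set $\G$, so we must first secure a bound uniform over $\G$ — for both the concentration and the count lower bound, all computed under $Q_i$ — before a single application of the blocking lemma can convert the entire $Q_i$-probability into the desired $\chi_i$-probability. Everything else is routine bookkeeping inherited directly from Lemma~\ref{lem:clustering-concentration-type-term-bound}.
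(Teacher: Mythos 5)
Your proposal matches the paper's proof exactly: the paper disposes of this lemma in one line by declaring it ``verbatim the proof of Lemma~\ref{lem:clustering-concentration-type-term-bound} after the first inequality,'' and the steps you spell out (conditional Hoeffding under $Q_i$ given $\G$, the $12^K$ net over $\Sph^{K-1}$, the Hoeffding lower bound on $c_{n,i}$, and a single application of Lemma~\ref{lem:blocking-technique} to transfer from $Q_i$ to $\chi_i$) are precisely that proof with the two-trajectory difference replaced by the single centered estimator. Your observation that the projector $\Tilde{\textbf{V}}_{s,a}$ must be retained to obtain the $K$- rather than $S$-dependence is in fact slightly more careful than the paper, whose lemma statement is written for the unprojected vector even though the verbatim covering argument bounds the projected one (which is all that the decomposition in Lemma~\ref{lem:classification-stat-decomp} actually requires, since the projection is non-expansive).
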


\begin{proof}
The proof of this lemma is verbatim the proof of Lemma~\ref{lem:clustering-concentration-type-term-bound} after the first inequality.
\end{proof}

\subsubsection{Combining the bounds}

We reuse Corollary~\ref{cor:clustering-not-observing-s-a-bound} along with Lemma~\ref{lem:classification-concentration-type-term-bound} applied to Lemma~\ref{lem:classification-stat-decomp} to get the following bound with probability at least $1-\delta$,
\begin{align*}
    \left\vert \dist_{1, (s,a)} - \left\Vert \Delta_{n,k}(s,a) \right\Vert_2^2 \right\vert \leq O\left(\sqrt{\frac{K+\log(1/\delta)}{G\alpha}} \right) + 8C_2\sqrt{\frac{K\epsilon_{mod}(\delta)}{f_{min}\alpha}}
\end{align*}
whenever $T_n \geq \Omega\left( Gt_{mix}\log(G/\delta)\log(1/\alpha) \right)$ and $G \geq \Omega\left( \frac{\log(1/\delta)}{\alpha^2}\right)$. 

\end{proof}

\end{document}